\def\eqref#1{equation~\ref{#1}}
\def\1{\bm{1}}
\DeclareMathAlphabet{\mathsfit}{\encodingdefault}{\sfdefault}{m}{sl}
\SetMathAlphabet{\mathsfit}{bold}{\encodingdefault}{\sfdefault}{bx}{n}
\newtheorem{theorem}{Theorem}
\newtheorem{lemma}{Lemma}
\newtheorem{assumption}{Assumption}
\def \bR {{\bf R}}
\def \P{\mathbb{P}} 
\def \bfE {\mathbb{E}}
\def \I{\mathbb{I}} 
 \def \cU {\mathcal{U}}
  \def \cF {\mathcal{F}}
    \def \cR {\mathcal{R}}
 \def \cI {\mathcal{I}}
\def \cL{\mathcal{L}}
 \def \cD{\mathcal{D}}
 \def \cN{\mathcal{N}}
\newcommand{\indep}{\perp \!\!\! \perp}
\newcommand{\notindep}{\not \! \perp \!\!\! \perp}
\author{Haoxuan Li$^1$ \qquad Chunyuan Zheng$^1$\qquad Sihao Ding$^2$ \qquad Peng Wu$^{3,}$\thanks{Corresponding author.}\qquad Zhi Geng$^3$\\ \textbf{Fuli Feng$^2$} \qquad \textbf{Xiangnan He$^2$}\\
    $^1$Peking University \qquad  $^2$University of Science and Technology of China \\    
    $^3$Beijing Technology and Business University\\
\texttt{hxli@stu.pku.edu.cn} \qquad \texttt{dsihao@mail.ustc.edu.cn}
 \\
\texttt{\{zhengchunyuan99, fulifeng93, xiangnanhe\}@gmail.com}
\\
\texttt{\{pengwu, zhigeng\}@btbu.edu.cn}}
\title{Be Aware of the Neighborhood Effect: Modeling Selection Bias under Interference}
\begin{document}

\maketitle
\begin{abstract}


Selection bias in recommender system arises from the recommendation process of system filtering and the interactive process of user selection. Many previous studies have focused on addressing selection bias to achieve unbiased learning of the prediction model, but ignore the fact that potential outcomes for a given user-item pair may vary with the treatments assigned to other user-item pairs, named neighborhood effect. To fill the gap, this paper formally formulates the neighborhood effect as an interference problem from the perspective of causal inference and introduces a treatment representation to capture the neighborhood effect. On this basis, we propose a novel ideal loss that 
can be used to deal with selection bias in the presence of neighborhood effect.   
 We further develop two new estimators for estimating the proposed ideal loss. 
  We theoretically establish the connection between the proposed and previous debiasing methods ignoring the neighborhood effect, showing that the proposed methods can achieve unbiased learning when both selection bias and neighborhood effect are present, while the existing methods are biased. Extensive semi-synthetic and real-world experiments are conducted to demonstrate the effectiveness of the proposed methods.
\end{abstract}


\section{Introduction}



Selection bias is widespread in recommender system (RS) and challenges the prediction of users' true preferences~\citep{Wu-etal2022-framework, Chen-etal2021-BiasReview}, which arises from the recommendation process of system filtering and the interactive process of user selection~\citep{marlin2009collaborative, Huang-etal2022-dynamic}. For example, in the rating prediction task, selection bias happens in explicit feedback data as users
are free to choose which items to rate, so that the observed ratings
are not a representative sample of all ratings~\citep{Steck2010, wang2023counterclr}. In the post-click conversion rate (CVR) prediction task, selection bias happens due to conventional CVR
models are trained with samples of clicked impressions while utilized
to make inference on the entire space with samples of all
impressions~\citep{ESMM, Zhang-etal2020, wang2022escm, Li-etal2023-EndtoEnd}.

Inspired by the causal inference literature~\citep{Imbens-Rubin2015}, many studies have proposed unbiased estimators for eliminating the selection bias, such as inverse propensity scoring (IPS)~\citep{Schnabel-Swaminathan2016}, self-normalized IPS  (SNIPS)~\citep{Swaminathan-Joachims2015}, and doubly robust (DR) methods~\citep{Wang-Zhang-Sun-Qi2019, AutoDebias, Dai-etal2022,  Lipropensity2023, li2023stabledr}. Given the features of a user-item pair, these methods first estimate the probability of observing that user rating or clicking on the item, called propensity. Then the inverse of the propensity is used to weight the observed samples to achieve unbiased estimates of the ideal loss.

However, the theoretical guarantees of the previous methods are all established under the Stable Unit Treatment Values Assumption (SUTVA)~\citep{Rubin1980}, which requires that the potential outcomes for one user-item pair do not vary with the treatments assigned to other user-item pairs (also known as no interference or no neighborhood effect), as shown in Figure \ref{fig:2}(a). In fact, such an assumption can hardly be satisfied in real-world scenarios. For example, a user's rating on an item can be easily influenced by other users' ratings on that item, as well as a user's clicking on an item might facilitate other users' clicking and purchasing of that item{~\citep{chen2021adapting,zheng2021-distangle}}. Figure \ref{fig:2}(b) shows a general causal diagram in the presence of interference in debiased recommendation.

\begin{figure*}[]
  \centering
    \includegraphics[width=\linewidth]{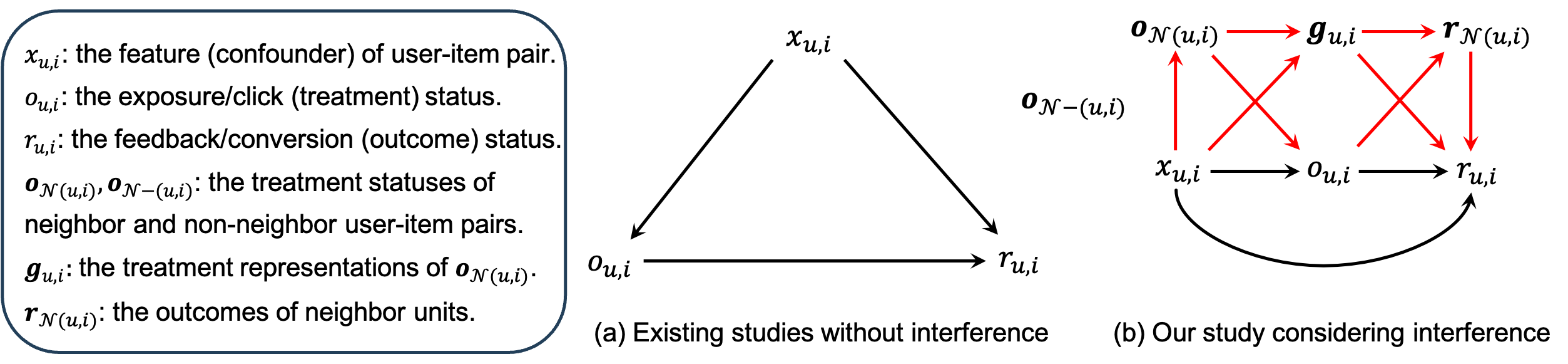}
\centering
\caption{Causal diagrams of the existing debiasing methods under no interference assumption (left), and the proposed method taking into account the presence of interference (right), where $x_{u, i}$, $o_{u, i}$, and $r_{u, i}$ denote the confounder, treatment, and outcome of user-item pair $(u, i)$, respectively. In the presence of interference, $\mathcal{N}_{(u, i)}$ and $\mathcal{N}_{-(u, i)}$ denote the other user-item pairs affecting and not affecting $(u, i)$, respectively, and $\boldsymbol{g}_{u,i}$ denotes the treatment representation to capture the interference.}
\label{fig:2}
\vspace{-10pt}
\end{figure*}

To fill this gap, in this paper, we first formulate the debias problem in Figure \ref{fig:2}(b) from the perspective of causal inference and extend the definition of potential outcomes to be compatible in the presence of interference, then introduce a learnable treatment representation to capture such interference. Based on the extended potential outcome and treatment representation, we propose a novel ideal loss that can effectively evaluate the performance of the prediction model when both selection bias and neighborhood effect are present. We then propose two new estimators for estimating the proposed ideal loss, named neighborhood inverse propensity score (N-IPS) and neighborhood doubly robust (N-DR), respectively. Theoretical analysis shows that the proposed N-IPS and N-DR estimators can achieve unbiased learning in the presence of both selection bias and neighborhood effect, while the previous debiasing estimators cannot result in unbiased learning without imposing extra strong assumptions. 
 Extensive semi-synthetic and real-world experiments are conducted to demonstrate the effectiveness of the proposed methods for eliminating the selection bias under interference.

\section{Preliminaries: Previous Selection Bias Formulation}

 Let $u \in \cU$ and $i\in \cI$ be a user and an item, {$x_{u,i}$, $o_{u,i}$, and $r_{u,i}$ be the feature, treatment (e.g., exposure), and feedback (e.g., conversion)} of the user-item pair $(u,i)$, where {$o_{u,i}$ equals 1 or 0 represents whether the item $i$ is exposed to user $u$ or not}. Let $\cD =\{(u,i) | u \in \cU, i \in \cI \}$ be the set of all user-item pairs. Using the potential outcome framework~\citep{Rubin1974, Neyman1990},  {let $r_{u,i}(1)$ be the potential feedback that would be observed if item $i$ had been exposed to user $u$  (i.e., $o_{u,i}$ had been set to 1). The potential feedback $r_{u,i}(1)$ is observed only when $o_{u,i} = 1$, otherwise it is missing.} Then ignoring the missing $r_{u,i}(1)$ and training the prediction model directly with the exposed data suffers from selection bias, since the exposure is not random and is affected by various factors. 



In the absence of neighborhood effect, the potential feedback $r_{u,i}(1)$ represents the user's preference by making intervention $o_{u,i} = 1$. 
To predict $r_{u,i}(1)$ for all $(u,i) \in \cD$, let $\hat r_{u,i} \triangleq f_{\theta}(x_{u,i})$ be a prediction model parameterized with $\theta$.   Denote $\hat \bR \in \mathbb{R}^{|\cU|\times |\cI|}$ as the predicted potential feedback  matrix with each element being $\hat r_{u,i}$. 
   If all the potential feedback $\{r_{u,i}(1)|(u,i)\in \cD \}$ were observed, the ideal loss for training the prediction model  $\hat r_{u,i}$ is formally defined as 
	\begin{equation*}  \cL_{\mathrm{ideal}}(\mathbf{\hat  R}) = |\cD|^{-1} \sum_{(u,i) \in \cD} \delta( \hat r_{u,i}, r_{u,i}(1) ),  
	\end{equation*}  
where $\delta(\cdot, \cdot)$ is a pre-defined loss function,  e.g., the squared loss $(r_{u,i}(1) - \hat r_{u,i})^2$. 
  However, since $r_{u,i}(1)$ is missing when $o_{u,i} = 0$, the ideal loss cannot be computed directly from observational data. To tackle this problem, many debiasing methods are developed to address the selection bias by establishing unbiased estimators of  $\cL_{\mathrm{ideal}}(\mathbf{\hat  R})$, such as error imputation based (EIB) method~\citep{Lobato-etal2014}, inverse propensity scoring (IPS) method~\citep{Schnabel-Swaminathan2016}, self-normalized IPS   (SNIPS) method~\citep{Swaminathan-Joachims2015}, and doubly robust (DR) methods~\citep{Wang-Zhang-Sun-Qi2019, AutoDebias, Dai-etal2022, li2023stabledr}.
We summarize the causal parameter of interest and the corresponding estimation methods in the previous studies as follows.

\begin{itemize}
    \item For the causal parameter of interest, previous studies assume the targeted user preference $r_{u, i}(o_{u, i}=1)$ depends only on the treatment status $o_{u, i}=1$. Then the ideal loss is defined using the sample average of $\delta(\hat r_{u, i}, r_{u, i}(o_{u, i}=1))$.
    \item For the methods of estimating the causal parameter of interest, previous studies have made extensive efforts to estimate the probability $\P(o_{u, i}=1\mid x_{u, i})$, called propensity, i.e., the probability of item $i$ exposed to user $u$
 given the features $x_{u, i}$. Then the existing IPS and DR methods use the inverse of the propensity for weighting the observed samples.
\end{itemize}

Nevertheless, we argue that both the causal parameter and the corresponding estimation methods in the previous studies {lead} to the failure when eliminating the selection bias under interference.

\begin{itemize}
    \item (Section \ref{sec:3}) For the causal parameter of interest, as shown in Figure \ref{fig:2}(b), in the presence of interference, both the treatment status $o_{u, i}$
 and the treatment statuses {$\boldsymbol{o}_{\cN_{(u,i)}}$}
 would affect the targeted user preference {$r_{u, i}(o_{u, i}, \boldsymbol{o}_{\cN_{(u,i)}})$}, instead of {$r_{u, i}(o_{u, i})$} in the previous studies.
    \item (Section \ref{sec:4}) For the estimation methods of the causal parameter of interest, as shown in Figure \ref{fig:2}(b), when performing propensity-based reweighting methods, both $o_{u, i}$
 and $\boldsymbol{o}_{\cN_{(u,i)}}$ from its neighbors should be considered as treatments of user-item pair $(u, i)$. Therefore, the propensity should be modeled as $\P(o_{u, i}=1, \boldsymbol{o}_{\cN_{(u,i)}} \mid x_{u, i})$
 instead of $\P(o_{u, i}=1\mid x_{u, i})$
 in previous studies, which motivates us to design new IPS and DR estimators under interference.
\end{itemize}
\vspace{-1mm}
\section{Modeling Selection Bias under Neighborhood Effect}\label{sec:3}
\vspace{-1mm}
In this section, we take the neighborhood effect in RS as an interference problem in causal inference area, and then introduce a treatment representation to capture the neighborhood effect.  
Lastly, we propose a novel ideal loss when both selection bias and neighborhood effect are present. 

\vspace{-1mm}
\subsection{Beyond “No Interference” Assumption in Previous Studies}
\vspace{-1mm}
 
In the presence of neighborhood effect, the value of  $r_{u, i}(1)$ depends on not only the user's preference but also the neighborhood effect, therefore we cannot distinguish the influence of user preference and the neighborhood effect, even if all the potential outcomes $\{r_{u, i}(1): (u, i)\in \cD \}$ are known. Conceptually, the neighborhood effect will cause the value of $r_{u,i}(1)$ relying on the  exposure status  $o_{u',i'}$ and the feedback $r_{u',i'}$ for some other user-item pairs $(u',i') \neq (u,i)$. Formally, we say that interference exists when a treatment on one unit has an effect on the outcome of another unit~\citep{Ogburn2014, Forastiere-etal2021, Fredrik-etal2021-AOS}, due to the social or physical interaction among units. Previous debiasing methods rely on the ``no interference'' assumption, which requires the potential outcomes of a unit are not affected by the treatment status of the other units. Nevertheless, such an assumption can hardly be satisfied in real-world recommendation scenarios.

\vspace{-1mm}
\subsection{Proposed Causal Parameter of Interest under Interference}
\vspace{-1mm}

 Let $\boldsymbol{o} = (o_{1,1}, ..., o_{|\cU|,|\cI|})$ be the vector of exposures of all user-item pairs.  
 For each $(u,i)\in \cD$, we define a partition of $\boldsymbol{o} = (o_{u,i}, \boldsymbol{o}_{\cN_{(u,i)}}, \boldsymbol{o}_{\cN_{-(u,i)} }  )$, where $\cN_{(u,i)}$ is all the user-item pairs affecting $(u,i)$, called the \emph{neighbors} of $(u,i)$, and $\cN_{-(u,i)}$ is all the user-item pairs not affecting $(u,i)$.
 When the feedback $r_{u, i}$ is further influenced by the neighborhood exposures $\boldsymbol{o}_{\cN_{(u,i)}}$, 
 then the potential feedback of $(u,i)$ should be defined as $r_{u,i}(o_{u,i}, \boldsymbol{o}_{\cN_{(u,i)}})$ to account for the neighbourhood effect. 
 
 
 However, if we take $(o_{u,i}, \boldsymbol{o}_{\cN_{(u,i)}})$ as the new treatment directly, it would be a high-dimensional sparse vector when the dimension of $\boldsymbol{o}_{\cN_{(u,i)}}$ is high and the number of exposed neighbors is limited.
  	To address this problem and capture the neighborhood effect effectively, we make an assumption on the interference mechanism leveraging the idea of representation learning~\citep{Johansson-etal2016}.  
\begin{assumption}[Neighborhood Treatment Representation]\label{assumption_1}  There exists a representation vector $\boldsymbol{\phi}: \{0, 1\}^{| \cN_{(u,i)} |} \to  \mathcal{G}$, if $\boldsymbol{\phi}(\boldsymbol{o}_{\cN_{(u,i)}} ) = \boldsymbol{\phi}( \boldsymbol{o}^{\prime}_{\cN_{(u,i)}} )$, then $r_{u,i}(o_{u,i}, \boldsymbol{o}_{\cN_{(u,i)}} ) = r_{u,i}(o_{u,i}, \boldsymbol{o}^{\prime}_{\cN_{(u,i)}} )$.
\end{assumption}
\vspace{-6pt}
The above assumption implies that the value of $r_{u,i}(o_{u,i}, \boldsymbol{o}_{\cN_{(u,i)}})$ depends on $\boldsymbol{o}_{\cN_{(u,i)}}$ through a specific treatment representation $\boldsymbol{\phi}(\cdot)$ that  summarizes the neighborhood effect. 
Denote $\boldsymbol{g}_{u,i}=\boldsymbol{\phi}(\boldsymbol{o}_{\cN_{(u,i)}})$, then we have $r_{u,i}(o_{u,i}, \boldsymbol{o}_{\cN_{(u,i)}})=r_{u,i}(o_{u,i}, \boldsymbol{g}_{u,i})$
	under Assumption \ref{assumption_1}, i.e., the feedback of $(u,i)$ under individual exposure {$o_{u,i}$} and treatment representation {$\boldsymbol{g}_{u,i}$}. 


 

We now propose   
ideal loss under neighborhood effect with treatment representation level $\boldsymbol{g}\in\mathcal{G}$  as  $$\cL_{\mathrm{ideal}}^\mathrm{N}(\hat \bR | \boldsymbol{g})  =  |\cD|^{-1} \sum_{ (u,i) \in \cD}  \delta( \hat r_{u,i}, r_{u,i}(o_{u, i}=1, \boldsymbol{g})),$$  
 and the final ideal loss summarizes various neighborhood effects $\boldsymbol{g}\in\mathcal{G}$ as    
		\begin{equation*}\cL_{\mathrm{ideal}}^\mathrm{N}(\hat \bR) = \int  \cL_{\mathrm{ideal}}^\mathrm{N}(\hat \bR | \boldsymbol{g})  \pi(\boldsymbol{g}) d\boldsymbol{g}, 
	\end{equation*} 
where  $\pi(\boldsymbol{g})$ is a pre-specified probability density function of $\boldsymbol{g}$. 

The proposed $\cL_{\mathrm{ideal}}^\mathrm{N}(\hat \bR)$ forces the prediction model $\hat r_{u,i}$ to perform well across varying treatment representation levels $\boldsymbol{g}\in\mathcal{G}$. Thus, $\cL_{\mathrm{ideal}}^\mathrm{N}(\hat \bR)$ is expected to control the extra bias that arises from the neighborhood effect. In comparison, the self interest and neighborhood effect are intertwined in previously used $\cL_{\mathrm{ideal}}(\mathbf{\hat  R})$, whereas our proposed $\cL_{\mathrm{ideal}}^\mathrm{N}(\hat \bR)$ is very flexible due to the free choice of $\pi(\boldsymbol{g})$. The choice of $\pi(\boldsymbol{g})$ depends on the target population that we want to make predictions on. Consider an extreme case of no neighborhood effect,  
	 this corresponds to  
	 $\boldsymbol{g}_{u,i} = 0$ for all user-item pairs. In such a case, we can write  $r_{u,i}(1, 0)$ as $r_{u,i}(1)$ and $\cL_{\mathrm{ideal}}^\mathrm{N}(\hat \bR)$ would reduce to $\cL_{\mathrm{ideal}}(\mathbf{\hat  R})$.

\section{Unbiased Estimation and Learning under Interference}\label{sec:4} 

In this section, we first discuss the consequence of ignoring the neighborhood effect, and then propose two novel estimators for estimating the ideal loss $\cL_{\mathrm{ideal}}^\mathrm{N}(\hat \bR)$. Moreover, we theoretically analyze the bias, variance, optimal bandwidth, and {generalization error bounds} of the proposed estimators.    

Before presenting the proposed debiasing methods under interference, we briefly discuss the identifiability of the ideal loss $\cL_{\mathrm{ideal}}^\mathrm{N}(\hat \bR)$.   A causal estimand is said to be identifiable if it can be written as a series of quantities that can be estimated from observed data.  
\begin{assumption}[Consistency under Interference]  \label{assumption_2}
$r_{u,i} = r_{u,i}(1, \boldsymbol{g})$ if  $o_{u,i}=1$ and $\boldsymbol{g}_{u,i} =\boldsymbol{g}$.           
\end{assumption}
\begin{assumption}[Unconfoundedness under Interference]   \label{assumption_3}
  $r_{u,i}(1, \boldsymbol{g}) \indep (o_{u, i},  \boldsymbol{G}_{u,i}) \mid x_{u,i}$. 
\end{assumption}
These assumptions are common in causal inference to ensure the identifiability of causal effects. Specifically, 
 Assumption \ref{assumption_2} implies that $r_{u,i}(1, \boldsymbol{g})$ is observed only when $o_{u,i} =1$ and $\boldsymbol{g}_{u,i} =\boldsymbol{g}$. Assumption \ref{assumption_3} indicates that there is no unmeasured confounder that affects both $r_{u,i}$ and $(o_{u,i}, \boldsymbol{g}_{u,i})$.
 
\begin{theorem}[Identifiability]\label{thm1} Under Assumptions \ref{assumption_1}--\ref{assumption_3},  $\cL_{\mathrm{ideal}}^\mathrm{N}(\hat \bR | \boldsymbol{g})$ and $\cL_{\mathrm{ideal}}^\mathrm{N}(\hat \bR)$ are identifiable.
\end{theorem}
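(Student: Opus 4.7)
The plan is to identify $\cL_{\mathrm{ideal}}^\mathrm{N}(\hat \bR|\boldsymbol{g})$ by rewriting each summand, which involves the unobserved potential feedback $r_{u,i}(1,\boldsymbol{g})$, as a conditional expectation of observed quantities through the standard two-step causal identification argument (unconfoundedness to swap to observables, then consistency to replace the potential outcome by $r_{u,i}$). Identifiability of $\cL_{\mathrm{ideal}}^\mathrm{N}(\hat \bR)$ will then follow by integrating against $\pi(\boldsymbol{g})$. Assumption~\ref{assumption_1} is what makes the whole scheme well-posed, since without the representation $\boldsymbol{\phi}$ the conditioning event $\{\boldsymbol{o}_{\cN_{(u,i)}}=\boldsymbol{o}'\}$ would be too high-dimensional for any positivity condition; Assumption~\ref{assumption_1} collapses it to $\{\boldsymbol{g}_{u,i}=\boldsymbol{g}\}$.

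Concretely, I would first fix $(u,i)\in\cD$ and $\boldsymbol{g}\in\mathcal{G}$ and work with $\mathbb{E}[\delta(\hat r_{u,i}, r_{u,i}(1,\boldsymbol{g}))]$. By the tower property this equals $\mathbb{E}\bigl[\mathbb{E}[\delta(\hat r_{u,i}, r_{u,i}(1,\boldsymbol{g}))\mid x_{u,i}]\bigr]$. Assumption~\ref{assumption_3} (unconfoundedness under interference) lets me add $o_{u,i}=1,\boldsymbol{g}_{u,i}=\boldsymbol{g}$ to the conditioning set without changing the inner expectation, since the potential outcome is independent of $(o_{u,i},\boldsymbol{G}_{u,i})$ given $x_{u,i}$ and $\hat r_{u,i}$ is a deterministic function of $x_{u,i}$. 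Assumption~\ref{assumption_2} (consistency under interference) then replaces $r_{u,i}(1,\boldsymbol{g})$ by the observable $r_{u,i}$ within that conditioning event. This yields the identification formula
\begin{equation*}
\mathbb{E}[\delta(\hat r_{u,i}, r_{u,i}(1,\boldsymbol{g}))]=\mathbb{E}\bigl[\mathbb{E}[\delta(\hat r_{u,i}, r_{u,i})\mid x_{u,i},o_{u,i}=1,\boldsymbol{g}_{u,i}=\boldsymbol{g}]\bigr],
\end{equation*}
a functional of the observable data distribution, hence identifiability of each summand in $\cL_{\mathrm{ideal}}^\mathrm{N}(\hat \bR|\boldsymbol{g})$. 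Averaging over $(u,i)\in\cD$ and then integrating with respect to $\pi(\boldsymbol{g})$ gives identifiability of $\cL_{\mathrm{ideal}}^\mathrm{N}(\hat \bR)$ itself; Fubini justifies the exchange of sum/integral.

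The main subtlety, which I would address explicitly, is an implicit positivity/overlap requirement: the conditional probability $\P(o_{u,i}=1,\boldsymbol{g}_{u,i}=\boldsymbol{g}\mid x_{u,i})$ must be positive on the support that $\pi(\boldsymbol{g})$ places mass on, so that conditioning on $\{o_{u,i}=1,\boldsymbol{g}_{u,i}=\boldsymbol{g}\}$ is meaningful. For discrete $\boldsymbol{g}$ this is a standard probability-positive condition; for continuous $\boldsymbol{g}$ it should be phrased in terms of conditional densities (and will reappear in Section~\ref{sec:4} through the need to smooth over $\boldsymbol{g}$, consistent with the ``optimal bandwidth'' discussion promised there). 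I would flag that Assumptions~\ref{assumption_1}--\ref{assumption_3} are taken to include this overlap requirement, and that it is precisely this ingredient that motivates the two propensity-based estimators (N-IPS and N-DR) introduced afterwards as the two natural empirical analogues of the identification formula above.
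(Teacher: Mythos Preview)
Your proposal is correct and follows essentially the same approach as the paper's proof: tower on $x_{u,i}$, use Assumption~\ref{assumption_3} to insert $o_{u,i}=1,\boldsymbol{g}_{u,i}=\boldsymbol{g}$ in the conditioning, then use Assumption~\ref{assumption_2} to replace $r_{u,i}(1,\boldsymbol{g})$ by $r_{u,i}$, and finally integrate against $\pi(\boldsymbol{g})$. Your explicit discussion of the positivity/overlap requirement is a useful addition that the paper leaves implicit.
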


Theorem \ref{thm1} ensures the identifiability of the proposed ideal loss $\cL_{\mathrm{ideal}}^\mathrm{N}(\hat \bR)$. Let $\bfE$ denote the expectation on the target population $\cD$, and $p(\cdot)$ denotes the probability density function of $\P$.  

\subsection{Effect of Ignoring Interference}
The widely used ideal loss $\cL_{\mathrm{ideal}}(\mathbf{\hat  R})$ under no neighborhood effect 
  is generally different from the proposed ideal loss $\cL_{\mathrm{ideal}}^\mathrm{N}(\hat \bR)$ in the presence of neighborhood effect.  Next, we establish the connection between these two loss functions, to deepen the understanding of the methods of considering/ignoring neighborhood effect.   
For brevity, we let $\delta_{u,i}(\boldsymbol{g}) = \delta(\hat r_{u,i}, r_{u,i}(1, \boldsymbol{g}))$ hereafter.   

\begin{theorem}[Link to Selection Bias]\label{thm2} Under Assumptions \ref{assumption_1}--\ref{assumption_3},  

%

(a) if $\boldsymbol{g}_{u,i} \indep o_{u,i} \mid x_{u,i}$, $\cL_{\mathrm{ideal}}^\mathrm{N}(\hat \bR)=\cL_{\mathrm{ideal}}(\mathbf{\hat  R})$; 

(b) if $\boldsymbol{g}_{u,i} \notindep o_{u,i} \mid x_{u,i}$,  $\cL_{\mathrm{ideal}}^\mathrm{N}(\hat \bR) - \cL_{\mathrm{ideal}}(\mathbf{\hat  R})$ {is equal to}
	\begin{align*}
	 \int   \bfE \Big [ \bfE  \{ \delta_{u,i}(\boldsymbol{g})  | x_{u,i}  \} \cdot  \Big \{  p( \boldsymbol{g}_{u,i} = \boldsymbol{g} | x_{u,i} )  -   p(  \boldsymbol{g}_{u,i} = \boldsymbol{g} | x_{u,i},  o_{u,i} =1)\Big  \} \Big ]  \pi(\boldsymbol{g}) d\boldsymbol{g}. 
	\end{align*}
\end{theorem}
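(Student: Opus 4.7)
The plan is to cast both losses as integrals over $\boldsymbol{g}$ of a common identifiable kernel and then read off the difference term-by-term, leveraging the unconfoundedness in Assumption~\ref{assumption_3} to eliminate the potential-outcome dependence. Write $\mu(\boldsymbol{g}, x) \triangleq \bfE\{\delta_{u,i}(\boldsymbol{g}) \mid x_{u,i}=x\}$. By Assumption~\ref{assumption_3} the quantity $\delta_{u,i}(\boldsymbol{g})$ is independent of $(o_{u,i}, \boldsymbol{g}_{u,i})$ given $x_{u,i}$, so together with consistency (Assumption~\ref{assumption_2}), $\mu(\boldsymbol{g},x) = \bfE\{\delta(\hat r_{u,i}, r_{u,i}) \mid x_{u,i}=x,\, o_{u,i}=1,\, \boldsymbol{g}_{u,i}=\boldsymbol{g}\}$, which is exactly the identification step behind Theorem~\ref{thm1} and yields $\cL_{\mathrm{ideal}}^{\mathrm{N}}(\hat\bR) = \int \bfE\{\mu(\boldsymbol{g}, x_{u,i})\}\, \pi(\boldsymbol{g})\, d\boldsymbol{g}$.

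I would then rewrite the legacy ideal loss in the same $\mu$-language. Under interference the observed outcome at $o_{u,i}=1$ is $r_{u,i}(1, \boldsymbol{g}_{u,i})$, so $\cL_{\mathrm{ideal}}(\hat\bR)$ implicitly averages $\delta_{u,i}(\boldsymbol{g}_{u,i})$ over the conditional law of $\boldsymbol{g}_{u,i}$ given $(x_{u,i}, o_{u,i}=1)$. A single application of the tower property plus Assumption~\ref{assumption_3} produces a structurally identical integral in which $p(\boldsymbol{g}_{u,i}=\boldsymbol{g} \mid x_{u,i}, o_{u,i}=1)$ appears alongside $\mu(\boldsymbol{g},x_{u,i})$. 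Aligning the two representations and using $\int p(\boldsymbol{g}_{u,i}=\boldsymbol{g} \mid x_{u,i})\, d\boldsymbol{g}=1$ to symmetrise the bookkeeping, term-by-term subtraction yields the bracket $\{p(\boldsymbol{g}_{u,i}=\boldsymbol{g} \mid x_{u,i}) - p(\boldsymbol{g}_{u,i}=\boldsymbol{g} \mid x_{u,i}, o_{u,i}=1)\}$ inside the integrand, paired with $\pi(\boldsymbol{g})\, d\boldsymbol{g}$; this is the claim of Part (b). Part (a) is then immediate, since the conditional independence $\boldsymbol{g}_{u,i} \indep o_{u,i} \mid x_{u,i}$ forces the two densities in the bracket to coincide, so the integrand vanishes pointwise in $\boldsymbol{g}$ and the two losses agree.

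The main obstacle is the identification step in the first paragraph: one must cleanly separate the deterministic part $\hat r_{u,i}$ (a function of $x_{u,i}$ alone) from the stochastic potential outcome $r_{u,i}(1, \boldsymbol{g})$ before invoking Assumptions~\ref{assumption_2}--\ref{assumption_3} to replace the latter by the observable $r_{u,i}$ on the event $\{o_{u,i}=1, \boldsymbol{g}_{u,i}=\boldsymbol{g}\}$. The secondary subtlety is the bookkeeping of how $\pi(\boldsymbol{g})$ interacts with the conditional density of $\boldsymbol{g}_{u,i}$ when re-expressing the legacy loss in an integral aligned with $\cL_{\mathrm{ideal}}^{\mathrm{N}}(\hat\bR)$; once this is handled, the rest of the argument is Fubini and linearity of expectation.
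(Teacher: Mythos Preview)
Your plan matches the paper's proof: both rewrite each loss via iterated expectations and Assumptions~\ref{assumption_2}--\ref{assumption_3}, reduce to the common kernel $\bfE\{\delta_{u,i}(\boldsymbol{g}) \mid x_{u,i}\}$, pair it with the appropriate conditional density of $\boldsymbol{g}_{u,i}$ (namely $p(\boldsymbol{g}_{u,i}=\boldsymbol{g}\mid x_{u,i}, o_{u,i}=1)$ for $\cL_{\mathrm{ideal}}$ and $p(\boldsymbol{g}_{u,i}=\boldsymbol{g}\mid x_{u,i})$ for $\cL_{\mathrm{ideal}}^{\mathrm{N}}$), and subtract. The subtlety you flag about how $\pi(\boldsymbol{g})$ aligns with these conditional densities is genuine and is handled no more carefully in the paper, which simply passes from $\int \bfE[\bfE\{\delta_{u,i}(\boldsymbol{g})\mid x_{u,i}\}]\,\pi(\boldsymbol{g})\,d\boldsymbol{g}$ to $\int \bfE[\bfE\{\delta_{u,i}(\boldsymbol{g})\mid x_{u,i}\}\cdot p(\boldsymbol{g}_{u,i}=\boldsymbol{g}\mid x_{u,i})]\,\pi(\boldsymbol{g})\,d\boldsymbol{g}$ without justification.
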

From Theorem \ref{thm2}(a),  if the individual and neighborhood exposures are independent conditional on $x_{u,i}$, then $\cL_{\mathrm{ideal}}^\mathrm{N}(\hat \bR)$ {is equal to} $\cL_{\mathrm{ideal}}(\mathbf{\hat  R})$, which indicates that the existing debiasing methods neglecting neighborhood effect are also unbiased estimator of $\cL_{\mathrm{ideal}}^\mathrm{N}(\hat \bR)$.  This is intuitively reasonable since in such a case, the neighborhood effect randomly influences $o_{u,i}$ conditional on $x_{u,i}$, and the effect of neighbors would be smoothed out in an average sense.  
Theorem \ref{thm2}(b)  shows that a bias would arise when   $\boldsymbol{g}_{u,i} \notindep o_{u,i} \mid x_{u,i}$, and the bias mainly depends on the association between $o_{u,i}$ and $\boldsymbol{g}_{u,i}$ conditional on $x_{u,i}$, i.e., $p( \boldsymbol{g}_{u,i} = \boldsymbol{g} | x_{u,i} = x)  -  p(  \boldsymbol{g}_{u,i} = \boldsymbol{g} | x_{u,i} = x,  o_{u,i} =1)$.

\subsection{Proposed Unbiased Estimators}
To derive an unbiased estimator of $\cL_{\mathrm{ideal}}^\mathrm{N}(\hat \bR)$, 
it suffices to find an unbiased estimator of  $\cL_{\mathrm{ideal}}^\mathrm{N}(\hat \bR | \boldsymbol{g})$. 
  Motivated by~\cite{Schnabel-Swaminathan2016}, an intuitive solution is to take $(o_{u,i}, \boldsymbol{g}_{u,i})$ as a joint treatment, then the IPS estimator of $\cL_{\mathrm{ideal}}^\mathrm{N}(\hat \bR | \boldsymbol{g})$ should be   
    $   |\cD|^{-1} \sum_{(u,i) \in \cD}   \mathbb{I}\{o_{u,i}=1, \boldsymbol{g}_{u,i} = \boldsymbol{g}\} \cdot   \delta_{u,i}(\boldsymbol{g}) /  p_{u,i}(\boldsymbol{g}),$   
where $\I(\cdot)$ is an indicator function, $p_{u,i}(\boldsymbol{g}) = p(o_{u,i} = 1, \boldsymbol{g}_{u,i} = \boldsymbol{g} | x_{u,i})$ is the propensity score. 
    Clearly, this strategy works if $\boldsymbol{g}_{u,i}$ is a binary or multi-valued random variable. 
However, if $\boldsymbol{g}_{u,i}$ has a continuous probability density, the above estimator is numerically infeasible even if theoretically feasible, since almost all $\mathbb{I}\{o_{u,i}=1, \boldsymbol{g}_{u,i} = \boldsymbol{g}\}$ will be zero in such a case. 
To tackle this problem, we propose a novel kernel-smoothing based neighborhood IPS (N-IPS) estimator of $\cL_{\mathrm{ideal}}^\mathrm{N}(\hat \bR | \boldsymbol{g})$, which is given as 
\[     \cL_\mathrm{IPS}^{\mathrm{N}}(\hat \bR | \boldsymbol{g}) = |\cD|^{-1} \sum_{(u,i) \in \cD}  \frac{ \mathbb{I}(o_{u, i}=1)\cdot K\left((\boldsymbol{g}_{u,i} -\boldsymbol{g})/h \right) \cdot   \delta_{u,i}(\boldsymbol{g})  }{ h \cdot p_{u,i}(\boldsymbol{g})  }, \]
where $h$ is a bandwidth (smoothing parameter) and $K(\cdot)$ is a symmetric kernel function~\citep{Fan-Gijbels1996, li2023nonparametric, Wu-Han2024} that satisfies $\int K(t)dt = 1$ and $\int t K(t)dt =1$. For example, Epanechnikov kernel $K(t) = 3(1-t^2)\I\{|t|\leq 1\}/4$ and Gaussian kernel $K(t)= \exp(-t^2/2)/\sqrt{2 \pi}$ for $t\in \mathbb{R}$. 
 For ease of presentation, we state the results for a scalar $\boldsymbol{g}$. Similar conclusions can be derived for multi-dimensional $\boldsymbol{g}$ and we put them in Appendix \ref{app-f}.
  


Similarly, the kernel-smoothing based neighborhood DR (N-DR) estimator can be constructed by 
	\begin{align*}   \cL_{\mathrm{DR}}^\mathrm{N}(\hat \bR |  \boldsymbol{g}) = |\cD|^{-1} \sum_{(u,i) \in \cD} \Big [ \hat \delta_{u,i}(\boldsymbol{g})+  \frac{ \mathbb{I}(o_{u, i}=1)\cdot K\left((\boldsymbol{g}_{u,i} -\boldsymbol{g})/h \right) \cdot \{ \delta_{u,i}(\boldsymbol{g})  -  \hat \delta_{u,i}(\boldsymbol{g})\}  }{ h \cdot p_{u,i}(\boldsymbol{g})  }   \Big ], 
	\end{align*}
where $\hat \delta_{u,i}(\boldsymbol{g}) = \delta(\hat r_{u,i}, m(x_{u,i}, \phi_{\boldsymbol{g}} ))$ is the imputed error of $\delta_{u,i}(\boldsymbol{g})$, and
$m(x_{u,i}, \phi_{\boldsymbol{g}} )$ is an imputation model of $r_{u,i}(1, \boldsymbol{g})$. The imputation model is trained by minimizing the training loss
\begin{equation*}
\label{eq:e}
\cL_{e}^\mathrm{N-DR}(\hat \bR)=\int |\cD|^{-1} \sum_{ (u, i) \in \cD} {\frac{ \mathbb{I}(o_{u, i}=1)\cdot K\left((\boldsymbol{g}_{u,i} -\boldsymbol{g})/h \right) \cdot ( \delta_{u,i}(\boldsymbol{g}) -  \hat \delta_{u,i}(\boldsymbol{g}))^2 }{ h \cdot p_{u,i}(\boldsymbol{g}) }}\pi(\boldsymbol{g})d\boldsymbol{g}.
\end{equation*}
Then, the corresponding N-IPS and N-DR estimators of $\cL_{\mathrm{ideal}}^\mathrm{N}(\hat \bR)$ are given as
	\begin{align}  
	   \cL_{\mathrm{IPS}}^\mathrm{N}(\hat \bR) =  \int \cL_{\mathrm{IPS}}^\mathrm{N}(\hat \bR | \boldsymbol{g}) \pi(\boldsymbol{g})d\boldsymbol{g},     \quad   \cL_{\mathrm{DR}}^\mathrm{N}(\hat \bR) =  \int \cL_{\mathrm{DR}}^\mathrm{N}(\hat \bR |  \boldsymbol{g}) \pi(\boldsymbol{g})d\boldsymbol{g}.     
	 \end{align}

Next, we show the bias and variance of the proposed N-IPS and N-DR estimators, which rely on a standard assumption in kernel-smoothing estimation~\citep{Hardle-2004, li2023nonparametric}.
\begin{assumption}[Regularity Conditions for Kernel Smoothing]   \label{assumption_4}
(a)  $h \to 0$ as $|\cD| \to \infty$; (b) $|\cD| h \to \infty$ as $|\cD| \to \infty$; (c) $p(o_{u,i}=1,  \boldsymbol{g}_{u,i} =  \boldsymbol{g} \mid x_{u,i})$ is twice differentiable with respect to $\boldsymbol{g}$. 
\end{assumption}

\begin{theorem}[Bias and Variance of N-IPS and N-DR]   \label{thm-bias} Under Assumptions \ref{assumption_1}--\ref{assumption_4}, 

(a) the bias of the N-DR estimator is given as 
\[ \text{Bias}(\cL_{\mathrm{DR}}^\mathrm{N}(\hat \bR)) =  \frac{1}{2}\mu_2  \int \bfE\Big [\frac{\partial^2 p(o_{u,i} = 1,  \boldsymbol{g}_{u,i} = \boldsymbol{g}   | x_{u,i})}{\partial \boldsymbol{g}^2  } \cdot \{  \delta_{u,i}(\boldsymbol{g})- \hat \delta_{u,i}(\boldsymbol{g})   \} \Big ]  \pi(\boldsymbol{g})d\boldsymbol{g} \cdot  h^2 +  o(h^2),  \]
where $\mu_2 = \int K(t) t^2 dt$. The bias of N-IPS is provided in Appendix \ref{app-b};  

(b) the variance of the N-DR estimator is given as 
	 \[ \text{Var}(\cL_{\mathrm{DR}}^\mathrm{N}(\hat \bR))  =  \frac{1}{|\cD| h }  \int \psi(\boldsymbol{g}) \pi(\boldsymbol{g})d\boldsymbol{g} + o(\frac{1}{|\cD| h } ), \]
where $\psi(\boldsymbol{g}) =  \int \frac{1}{p_{u,i}(\boldsymbol{g}')}\cdot  \bar K(\frac{\boldsymbol{g}-\boldsymbol{g}'}{h})  \cdot \{ \delta_{u,i}(\boldsymbol{g}) - \hat \delta_{u,i}(\boldsymbol{g})\}  \{ \delta_{u,i}(\boldsymbol{g}') - \hat \delta_{u,i}(\boldsymbol{g}') \}  \pi(\boldsymbol{g}')  d\boldsymbol{g}' $ 
is a bounded function of $\boldsymbol{g}$,  
$\bar K(\cdot) =  \int   K\left(t \right) K\left(\cdot  + t\right) dt $. The variance of N-IPS is provided in Appendix \ref{app-b}.
\end{theorem}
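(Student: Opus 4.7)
The proof follows the standard bias-variance calculus for kernel-smoothed estimators, with the DR structure handled by splitting each per-pair term into an imputation baseline and a weighted residual. Since both $\cL_{\mathrm{DR}}^{\mathrm{N}}(\hat\bR)$ and $\cL_{\mathrm{ideal}}^{\mathrm{N}}(\hat\bR)$ are integrals of pointwise quantities against $\pi(\boldsymbol{g})$, the plan is to first derive the pointwise bias of $\bfE[\cL_{\mathrm{DR}}^{\mathrm{N}}(\hat\bR\mid\boldsymbol{g})]$ and the pointwise (cross-$\boldsymbol{g}$) covariance, and then combine them via Fubini.

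For the bias, I would condition each per-pair term on $x_{u,i}$. The imputation summand $\hat\delta_{u,i}(\boldsymbol{g})$ is measurable with respect to $x_{u,i}$, so its conditional expectation equals itself. For the kernel-weighted residual, the identifiability assumptions enter: by Assumption \ref{assumption_3}, $r_{u,i}(1,\boldsymbol{g})$ is conditionally independent of $(o_{u,i},\boldsymbol{g}_{u,i})$ given $x_{u,i}$, so I can factor out $\mu(x,\boldsymbol{g}) := \bfE[\delta_{u,i}(\boldsymbol{g})\mid x_{u,i}=x]$, while Assumption \ref{assumption_2} guarantees that $\mu(x,\boldsymbol{g})$ agrees with the analogous expectation computed from the observed $r_{u,i}$ at pairs where $o_{u,i}=1$ and $\boldsymbol{g}_{u,i}=\boldsymbol{g}$. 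Substituting $\boldsymbol{g}_{u,i}=\boldsymbol{g}+ht$ and using Assumption \ref{assumption_4}(c) to Taylor-expand the joint density $p(o=1,\boldsymbol{g}_{u,i}=\boldsymbol{g}+ht\mid x_{u,i})$ to second order, the kernel's normalization and symmetry eliminate the zeroth- and first-order contributions above the target. What survives at $O(h^2)$ is the claimed quantity with $\{\delta_{u,i}(\boldsymbol{g})-\hat\delta_{u,i}(\boldsymbol{g})\}$ as the residual factor, the imputation-baseline cancellation specific to DR ensuring that $\partial^2 p/\partial\boldsymbol{g}^2$ is the only surviving second-order derivative. The N-IPS bias stated in the appendix is recovered by the same argument with $\hat\delta_{u,i}\equiv 0$.

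For the variance, I would use independence across $(u,i)$ and expand
\begin{equation*}
\Var(\cL_{\mathrm{DR}}^{\mathrm{N}}(\hat\bR)) = \frac{1}{|\cD|^2}\sum_{(u,i)\in\cD}\int\!\!\int \Cov\big(Z_{u,i}(\boldsymbol{g}),\,Z_{u,i}(\boldsymbol{g}')\big)\,\pi(\boldsymbol{g})\pi(\boldsymbol{g}')\,d\boldsymbol{g}\,d\boldsymbol{g}',
\end{equation*}
where $Z_{u,i}(\boldsymbol{g})$ is the per-pair integrand. The imputation contribution is deterministic given $x_{u,i}$ and drops to sub-leading order; the dominant piece is the second moment of the kernel-weighted residual, which features the product $K((\boldsymbol{g}_{u,i}-\boldsymbol{g})/h)K((\boldsymbol{g}_{u,i}-\boldsymbol{g}')/h)/h^2$. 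Conditioning on $x_{u,i}$, changing variables $\boldsymbol{g}_{u,i}=\boldsymbol{g}+ht$ and substituting $\boldsymbol{g}'=\boldsymbol{g}+hs$ in the outer integral introduces the convolution $\bar K(s)=\int K(t)K(t+s)\,dt$. The $1/h^2$ from the kernels combines with one factor of $h$ from the $d\boldsymbol{g}'$ substitution to give the overall rate $1/(|\cD|h)$, and the leading constant---after expanding the slowly varying densities to lowest order in $h$---matches $\int\psi(\boldsymbol{g})\pi(\boldsymbol{g})\,d\boldsymbol{g}$ as defined in the theorem.

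The main obstacle is bookkeeping the counterfactual $r_{u,i}(1,\boldsymbol{g})$ against the observed $r_{u,i}$: when $o_{u,i}=1$, consistency gives $r_{u,i}=r_{u,i}(1,\boldsymbol{g}_{u,i})$, not $r_{u,i}(1,\boldsymbol{g})$, so the substitution that makes the integrand a differentiable function of $\boldsymbol{g}_{u,i}$ relies both on the kernel's local support and on smoothness of $\boldsymbol{g}\mapsto\mu(x,\boldsymbol{g})$. A secondary subtlety is ensuring that cross terms between the imputation and the residual in the variance expansion contribute only at $o(1/(|\cD|h))$, so that the clean form of $\psi(\boldsymbol{g})$ involving only $\bar K$ and $(\delta-\hat\delta)$ emerges.
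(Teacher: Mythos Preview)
Your proposal is correct and follows essentially the same route as the paper: condition on $x_{u,i}$, factor via Assumption~\ref{assumption_3}, substitute $\boldsymbol{g}_{u,i}=\boldsymbol{g}+ht$, Taylor-expand the joint density using Assumption~\ref{assumption_4}(c), and exploit the kernel moment conditions for the bias; then for the variance use the i.i.d.\ structure, expand the squared integral into a double integral over $(\boldsymbol{g},\boldsymbol{g}')$, and reduce the product of kernels to $\bar K$ after the same substitution. The paper organizes the variance as (second moment) $-$ (first moment squared) rather than via $\Cov(Z(\boldsymbol{g}),Z(\boldsymbol{g}'))$, and it does \emph{not} perform your additional outer substitution $\boldsymbol{g}'=\boldsymbol{g}+hs$ (it leaves $\bar K((\boldsymbol{g}-\boldsymbol{g}')/h)$ inside the definition of $\psi$), but these are cosmetic differences. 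Your flagged ``main obstacle'' about $r_{u,i}=r_{u,i}(1,\boldsymbol{g}_{u,i})$ versus $r_{u,i}(1,\boldsymbol{g})$ is a legitimate implementation concern, but the paper's proof sidesteps it entirely by treating $\delta_{u,i}(\boldsymbol{g})$ as the potential-outcome loss throughout, so you need not resolve it to match the paper's argument.
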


From Theorem \ref{thm-bias}(a), the kernel-smoothing based N-DR estimator has a small bias of order $O(h^2)$, which converges to 0 as $|\cD| \to \infty$ by Assumption \ref{assumption_4}(a). Theorem \ref{thm-bias}(b) shows that the variance of the N-DR estimator has a convergence rate of order $O(1 /|\cD|h)$.     
 Notably, the bandwidth $h$ plays a key role in the bias-variance trade-off of the N-DR estimator: the larger the $h$, the larger the bias and the smaller the variance. 
  The following Theorem \ref{opt-band} gives the optimal bandwidth for N-IPS and N-DR. 


\begin{theorem}[Optimal Bandwidth of N-IPS and N-DR] \label{opt-band}  Under Assumptions \ref{assumption_1}--\ref{assumption_4},  the optimal bandwidth for the N-DR estimator in terms of the asymptotic mean-squared error is 
	\[ h^*_{\mathrm{N-DR}} =   \left [  \frac{ \int   \psi(\boldsymbol{g}) \pi(\boldsymbol{g})d\boldsymbol{g}  }{ 4 |\cD| \left ( \frac{1}{2}\mu_2  \int \bfE\left [\frac{\partial^2 p(o_{u,i} = 1,  
     \boldsymbol{g}_{u,i} = \boldsymbol{g}   | x_{u,i})}{\partial \boldsymbol{g}^2  } \cdot \{ \delta_{u,i}(\boldsymbol{g}) - \hat \delta_{u,i}(\boldsymbol{g}) \} \right ]  \pi(\boldsymbol{g})d\boldsymbol{g} \right )^2 }   \right ]^{1/5},  \]
   where $\psi(\boldsymbol{g})$ is defined in Theorem \ref{thm-bias}.    
The optimal bandwidth for N-IPS is provided in Appendix \ref{app-opt-band}.
\end{theorem}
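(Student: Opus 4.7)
The plan is to minimize the asymptotic mean-squared error (AMSE) of the N-DR estimator with respect to $h$, using the bias and variance expansions already established in Theorem~\ref{thm-bias}. Since Theorem~\ref{thm-bias}(a) gives $\text{Bias}(\cL_{\mathrm{DR}}^\mathrm{N}(\hat{\bR})) = B h^{2} + o(h^{2})$ with
\[ B = \tfrac{1}{2}\mu_2 \int \bfE\!\left[\frac{\partial^{2} p(o_{u,i}=1, \boldsymbol{g}_{u,i}=\boldsymbol{g}\mid x_{u,i})}{\partial \boldsymbol{g}^{2}} \cdot \{\delta_{u,i}(\boldsymbol{g}) - \hat{\delta}_{u,i}(\boldsymbol{g})\}\right]\pi(\boldsymbol{g})\,d\boldsymbol{g}, \]
and Theorem~\ref{thm-bias}(b) gives $\text{Var}(\cL_{\mathrm{DR}}^\mathrm{N}(\hat{\bR})) = V/(|\cD|h) + o(1/(|\cD|h))$ with $V=\int \psi(\boldsymbol{g})\pi(\boldsymbol{g})\,d\boldsymbol{g}$, the leading-order AMSE is $\mathrm{AMSE}(h) = B^{2} h^{4} + V/(|\cD|h)$.

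First, I would differentiate $\mathrm{AMSE}(h)$ to obtain $\mathrm{AMSE}'(h) = 4 B^{2} h^{3} - V/(|\cD| h^{2})$, set this to zero, and solve the resulting algebraic equation $4 B^{2} h^{5} = V/|\cD|$. This yields $h^{*} = [V/(4 B^{2} |\cD|)]^{1/5}$, and substituting the explicit expressions for $B$ and $V$ reproduces the formula for $h^{*}_{\mathrm{N\text{-}DR}}$ in the statement. I would check that the second derivative $\mathrm{AMSE}''(h) = 12 B^{2} h^{2} + 2V/(|\cD| h^{3})$ is strictly positive for $h>0$, confirming that the critical point is a global minimum on the positive real axis.

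Second, I would justify that the leading-order minimizer is indeed the asymptotically optimal bandwidth by arguing that the neglected remainders $o(h^{2})$ in the bias and $o(1/(|\cD|h))$ in the variance contribute terms of strictly smaller order at $h = h^{*} \asymp |\cD|^{-1/5}$; concretely, at this scale the leading bias squared and leading variance are both of order $|\cD|^{-4/5}$, while the remainder terms are $o(|\cD|^{-4/5})$. This confirms that any $h$ of the stated form, up to a $(1+o(1))$ factor, attains the minimum AMSE to leading order. Finally, the N-IPS case is fully analogous: using the N-IPS bias and variance expansions from Appendix~\ref{app-b} in place of those for N-DR and repeating the calculus argument yields the N-IPS optimal bandwidth stated in Appendix~\ref{app-opt-band}.

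The calculations themselves are routine once Theorem~\ref{thm-bias} is in hand; the only real subtlety is the asymptotic consistency step, namely verifying that minimizing the leading-order AMSE gives a genuine asymptotic minimizer rather than an artifact of truncation, which reduces to a simple order-comparison check under Assumption~\ref{assumption_4}.
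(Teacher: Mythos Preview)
Your proposal is correct and follows essentially the same approach as the paper: write the mean-squared error as $[\text{Bias}]^2 + \text{Var}$ using the expansions from Theorem~\ref{thm-bias}, then minimize the leading terms in $h$. Your version is actually more detailed than the paper's proof, which simply states the bias and variance, writes the MSE, and asserts that ``minimizing the leading terms of the above mean-squared error with respect to $h$'' gives the formula without showing the calculus or the remainder-order check.
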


Theorem \ref{opt-band} shows that the optimal bandwidth of N-DR is of order $O(|\cD|^{-1/5})$. In such a case,  
	\[   \left [\text{Bias}(\cL_{\mathrm{DR}}^\mathrm{N}(\hat \bR)) \right ]^2 = O(h^4) = O(|\cD|^{-4/5}),  \quad \text{Var}(\cL_{\mathrm{DR}}^\mathrm{N}(\hat \bR)) = O(\frac{1}{|\cD| h }) = O(|\cD|^{-4/5}),    \]
that is, the square of the bias has the same convergence rate as the variance.


\subsection{Propensity Estimation Method}



Different from previous debiasing methods in RS, 
 in the presence of neighborhood effect, the propensity is defined for joint treatment that includes a binary variable {$o$} and a continuous variable {$\boldsymbol{g}$}.  To fill this gap, we consider a novel method for propensity estimation. Let $\P^{u}(\boldsymbol{g} \mid o=1, \bm{x})$ be a uniform distribution on $\mathcal{G}$ and equals ${1}/{c}$ for all feature $\bm{x}$. Note that 
\[
\frac{1}{p_{u,i}(\boldsymbol{g})}=\frac{1}{\P(o=1 \mid \bm{x})\P(\boldsymbol{g} \mid o=1, \bm{x})}=\frac{c}{\P(o=1 \mid \bm{x})}\cdot \frac{\P^{u}(\boldsymbol{g} \mid o=1, \bm{x})}{\P(\boldsymbol{g} \mid o=1, \bm{x})},
\]
where $\P(o=1\mid \bm{x})$ can be estimated by using the existing methods such as naive Bayes or logistic regression with or without a few unbiased ratings, respectively~\citep{Schnabel-Swaminathan2016}. To estimate the density ratio 
$\P^{u}(\boldsymbol{g} \mid o=1, \bm{x})/\P(\boldsymbol{g} \mid o=1, \bm{x})$, 
we first label the samples in the exposed data $\left\{\left(\bm{x}_{u, i}, \boldsymbol{g}_{u, i}\right)\right\}_{\{(u, i): o_{u,i}=1\}}$ as positive samples $(L=1)$, then uniformly sample treatments $\boldsymbol{g}_{u, i}^{\prime} \in\mathcal{G}$ to generate samples $\left\{(\bm{x}_{u, i}, \boldsymbol{g}_{u, i}^{\prime})\right\}_{\{(u, i): o_{u,i}=1\}}$ with negative labels $(L=0)$. Since the data generating process ensures that $\P^{u}(\bm{x}\mid o=1) = \P(\bm{x}\mid o=1)$, we have
$$ \frac{\P^{u}(\boldsymbol{g} \mid o=1, \bm{x})}{\P(\boldsymbol{g} \mid o=1, \bm{x})} = 
\frac{\P^{u}(\bm{x}, \boldsymbol{g}\mid o=1)}{\P(\bm{x}, \boldsymbol{g}\mid o=1)}=\frac{\P(\bm{x}, \boldsymbol{g} \mid L=0)}{\P(\bm{x}, \boldsymbol{g} \mid L=1)}=\frac{\P(L=1)}{\P(L=0)} \cdot \frac{\P(L=0 \mid \bm{x}, \boldsymbol{g})}{\P(L=1 \mid \bm{x}, \boldsymbol{g})},
$$
where $\P(L=l \mid \bm{x}, \boldsymbol{g})$ for $l=0$ or $1$ can be obtained by modeling $L$ with $(\bm{x}, \boldsymbol{g})$.

\subsection{Further Theoretical Analysis}



We further theoretically analyze the generalization error bounds of the proposed N-IPS and N-DR estimators. 
Letting $\cF$ be the hypothesis space of prediction matrices $\hat \bR$ (or prediction model $f_{\theta}$), 
	we define the Rademacher complexity   
    \[  \cR(\cF) = \bfE_{\boldsymbol{\sigma} \sim \{-1, +1\}^{|\cD|} }  \sup_{f_{\theta} \in \cF} \Big [ \frac{1}{|\cD|} \sum_{(u,i)\in \cD} \sigma_{u, i} \delta_{u,i}(\boldsymbol{g}) \Big ],              \] 
where $\boldsymbol{\sigma} = \{ \sigma_{u,i}: (u,i) \in \cD\}$ is a Rademacher sequence~\citep{Mohri-etal2018}. 


 \begin{assumption}[Boundedness]   \label{assumption_5}
$1/p_{u,i}(\boldsymbol{g}) \leq M_p$, $\delta_{u,i}(\boldsymbol{g}) \leq M_\delta$, and $| \delta_{u,i}(\boldsymbol{g}) - \hat\delta_{u,i}(\boldsymbol{g})| \leq M_{|\delta-\hat \delta|}$.
 \end{assumption}


Theorem \ref{thm5} gives the {generalization error bounds} of the prediction model trained by minimizing our proposed N-IPS and N-DR estimators.

\begin{theorem}[Generalization Error Bounds of N-IPS and N-DR] \label{thm5} Under Assumptions \ref{assumption_1}--\ref{assumption_5} and suppose that $K(t) \leq M_K$, we have with probability at least $1-\eta$,   
    \begin{align*}  
    \cL_{\mathrm{ideal}}^\mathrm{N}(\hat \bR^\dag )   \leq{}& \min_{\hat \bR \in \cF }  \cL_{\mathrm{ideal}}^\mathrm{N}(\hat \bR)  + 
       \mu_2  M_{|\delta - \hat \delta|} \Big | \int \bfE\Big [\frac{\partial^2 p(o_{u,i} = 1,  \boldsymbol{g}_{u,i} = \boldsymbol{g}   | x_{u,i})}{\partial \boldsymbol{g}^2} \Big ]  \pi(\boldsymbol{g})d\boldsymbol{g}  \Big | \cdot  h^2    \\
	  {}& +  \frac{ 4 M_p M_K }{h}  \cR(\cF)    +   \frac{5 M_p M_K M_{|\delta-\hat \delta|} }{h}  \sqrt{  \frac{ 2   }{|\cD|} \log(\frac{4}{\eta})} + o(h^2),        
    \end{align*} 
where $\hat \bR^\dag = \arg \min_{\hat \bR \in \cF } \cL_{\mathrm{DR}}^\mathrm{N}(\hat \bR)$ is the learned prediction model by minimizing the N-DR estimator.
The generalization error bounds of the N-IPS estimator is provided in Appendix \ref{app-e}.
\end{theorem}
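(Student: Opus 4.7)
The plan is to follow the standard Rademacher-complexity-based template for excess-risk bounds, treating $\cL_{\mathrm{DR}}^\mathrm{N}(\hat\bR)$ as an (almost) unbiased surrogate for $\cL_{\mathrm{ideal}}^\mathrm{N}(\hat\bR)$, where the bias side is controlled by Theorem \ref{thm-bias}(a) and the variance side by uniform concentration. Letting $\hat\bR^* = \arg\min_{\hat\bR\in\cF}\cL_{\mathrm{ideal}}^\mathrm{N}(\hat\bR)$, I would write
\[
\cL_{\mathrm{ideal}}^\mathrm{N}(\hat\bR^\dag) - \cL_{\mathrm{ideal}}^\mathrm{N}(\hat\bR^*)
= \underbrace{[\cL_{\mathrm{ideal}}^\mathrm{N}(\hat\bR^\dag) - \cL_{\mathrm{DR}}^\mathrm{N}(\hat\bR^\dag)]}_{(\mathrm{I})}
+ \underbrace{[\cL_{\mathrm{DR}}^\mathrm{N}(\hat\bR^\dag) - \cL_{\mathrm{DR}}^\mathrm{N}(\hat\bR^*)]}_{(\mathrm{II})\le 0}
+ \underbrace{[\cL_{\mathrm{DR}}^\mathrm{N}(\hat\bR^*) - \cL_{\mathrm{ideal}}^\mathrm{N}(\hat\bR^*)]}_{(\mathrm{III})}.
\]
Term (II) is non-positive by optimality of $\hat\bR^\dag$ under the N-DR loss, so the excess risk is bounded by $2\sup_{\hat\bR\in\cF}|\cL_{\mathrm{DR}}^\mathrm{N}(\hat\bR) - \cL_{\mathrm{ideal}}^\mathrm{N}(\hat\bR)|$.

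Next, I would split the supremum into a bias part and a centered part through $\cL_{\mathrm{DR}}^\mathrm{N}(\hat\bR) - \cL_{\mathrm{ideal}}^\mathrm{N}(\hat\bR) = \{\cL_{\mathrm{DR}}^\mathrm{N}(\hat\bR) - \bfE\cL_{\mathrm{DR}}^\mathrm{N}(\hat\bR)\} + \{\bfE\cL_{\mathrm{DR}}^\mathrm{N}(\hat\bR) - \cL_{\mathrm{ideal}}^\mathrm{N}(\hat\bR)\}$. The second bracket is exactly the bias in Theorem \ref{thm-bias}(a); taking absolute values uniformly over $\cF$ and bounding $|\delta_{u,i}(\boldsymbol{g}) - \hat\delta_{u,i}(\boldsymbol{g})|\le M_{|\delta-\hat\delta|}$ by Assumption \ref{assumption_5} yields the announced $\mu_2 M_{|\delta-\hat\delta|}\bigl|\int \bfE[\partial^2 p(\cdot)/\partial\boldsymbol{g}^2]\pi(\boldsymbol{g})d\boldsymbol{g}\bigr|h^2$ contribution, plus the $o(h^2)$ remainder inherited from the expansion.

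For the centered part I would apply McDiarmid's inequality to $\hat\bR\mapsto \cL_{\mathrm{DR}}^\mathrm{N}(\hat\bR) - \bfE\cL_{\mathrm{DR}}^\mathrm{N}(\hat\bR)$. The bounded-difference constant is the key calculation: swapping a single observation leaves the deterministic imputed term $\hat\delta_{u,i}(\boldsymbol{g})$ unchanged and perturbs only the kernel-weighted residual, whose integrand is bounded, uniformly in $\boldsymbol{g}$, by $M_p M_K M_{|\delta-\hat\delta|}/h$ (using $1/p_{u,i}(\boldsymbol{g})\le M_p$ and $K\le M_K$), so the per-coordinate bounded difference is of order $M_p M_K M_{|\delta-\hat\delta|}/(h|\cD|)$ after integration against $\pi$ (which has total mass $1$). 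McDiarmid then produces, with probability $\ge 1-\eta/2$, a deviation of order $(M_p M_K M_{|\delta-\hat\delta|}/h)\sqrt{2\log(4/\eta)/|\cD|}$ around the expected supremum. A standard symmetrization argument upper-bounds that expected supremum by a weighted Rademacher average whose summand is $\sigma_{u,i}\,w_{u,i}(\boldsymbol{g})(\delta_{u,i}(\boldsymbol{g})-\hat\delta_{u,i}(\boldsymbol{g}))$ with $w_{u,i}(\boldsymbol{g}) = \mathbb{I}(o_{u,i}=1)K((\boldsymbol{g}_{u,i}-\boldsymbol{g})/h)/(h p_{u,i}(\boldsymbol{g}))\le M_p M_K/h$; conditioning on the exposures and applying a Talagrand-type contraction for the Lipschitz map $\delta\mapsto w\,(\delta-\hat\delta)$ reduces this to $\cR(\cF)$ scaled by $2M_p M_K/h$. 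Combining with the factor $2$ from the excess-risk decomposition gives the $4M_p M_K\cR(\cF)/h$ term, and merging the McDiarmid constants after a union bound produces the $5M_p M_K M_{|\delta-\hat\delta|}/h$ coefficient on the $\sqrt{2\log(4/\eta)/|\cD|}$ factor.

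The main obstacle I anticipate is the bookkeeping in two places: (i) verifying that the outer integral $\int(\cdot)\pi(\boldsymbol{g})d\boldsymbol{g}$, coupled with the kernel-smoothed indicator, does not inflate the bounded-difference constant beyond the claimed $O(1/(h|\cD|))$ scale, which requires exchanging the integral and supremum using Fubini and bounding $K/(h p)$ uniformly in $\boldsymbol{g}$; and (ii) passing from the weighted Rademacher average (with random weights $w_{u,i}$) to the unweighted $\cR(\cF)$ used in the excerpt, which requires conditioning on $\{(o_{u,i},\boldsymbol{g}_{u,i})\}$ and applying contraction cleanly so that the weighting absorbs into the $M_p M_K/h$ prefactor rather than into $\cR(\cF)$ itself. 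Once these technical pieces are in place, summing the bias, Rademacher, and McDiarmid contributions and collecting constants yields the stated bound; the N-IPS version is obtained by the same argument with $\hat\delta_{u,i}(\boldsymbol{g})\equiv 0$, as deferred to Appendix \ref{app-e}.
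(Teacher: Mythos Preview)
Your proposal is correct and follows essentially the same route as the paper: the same three-term excess-risk decomposition via the optimizer $\hat\bR^*$, the same bias/centered split with Theorem~\ref{thm-bias}(a) handling the bias, and McDiarmid plus symmetrization (the paper's Lemma~4, built from Lemmas~1--3) handling the uniform deviation. The paper obtains the constant $5$ exactly as you anticipate: a first McDiarmid step contributes $\tfrac12$, a second McDiarmid step (relating $\bfE[\cR(\cF)]$ to the empirical $\cR(\cF)$, their eq.~(A.7)) contributes $2$, and the factor $2$ from $A+B$ doubles $\tfrac52$ to $5$; you allude to this second concentration step only implicitly (``merging the McDiarmid constants after a union bound''), so be sure to make it explicit. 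Your point~(ii) about cleanly passing from the weighted to the unweighted Rademacher complexity is well taken---the paper simply factors out the bound $M_pM_K/h$ and invokes its Lemma~3 without a formal contraction argument, so your plan is, if anything, slightly more careful here.
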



\section{Semi-synthetic Experiments}

We conduct semi-synthetic experiments using the MovieLens 100K\footnote{https://grouplens.org/datasets/movielens/100k/} (\textbf{ML-100K}) dataset, focusing on the following two research questions (RQs): {\bf RQ1.} Do the proposed estimators result in more accurate estimation for ideal loss compared to the previous estimators in the presence of neighborhood effect? {\bf RQ2.} How does the neighborhood effect strength affect the estimation accuracy?





\textbf{Experimental Setup\footnote{Our codes and datasets are available at https://github.com/haoxuanli-pku/ICLR24-Interference.}}. The \textbf{ML-100K} dataset contains 100,000 missing-not-at-random (MNAR) ratings from 943 users to 1,682 movies. 
Following the previous studies~\citep{Schnabel-Swaminathan2016, Wang-Zhang-Sun-Qi2019, MRDR}, we first complete the full rating matrix $\mathbf{R}$ by Matrix Factorization (MF)~\citep{koren2009matrix}, resulting in $r_{u, i} \in\{1, 2, 3, 4, 5\}$, and then set propensity $p_{u, i}=p \alpha^{\max \left(0, 4-r_{u, i}\right)}$ with $\alpha=0.5$ to model MNAR effect ~\citep{Wang-Zhang-Sun-Qi2019, MRDR}. Next, to model the neighborhood effect, we compute $g_{u, i} = \mathbb{I}(\sum_{ (u',i') \in \cN_{(u,i)} } o_{u',i'} \geq c)$ with varying $c$ for 100,000 observed MNAR ratings, where $\cN_{(u,i)}=\{({u'}, i')\neq (u, i)\mid u'=u~ \text{or}~ i'=i\}$. In our experiment, $c$ is chosen to be the median of all {$\sum_{ (u',i') \in \cN_{(u,i)} } o_{u',i'}$} for $(u, i)\in\mathcal{D}$. Then we complete two full rating matrices $\mathbf{R}^{g=0}$ and $\mathbf{R}^{g=1}$ with $r_{u, i}(1, g)\in\{1, 2, 3, 4, 5\}$ by MF, using $\{(u, i)\mid o_{u, i}=1, g_{u, i}=0\}$ and $\{(u, i)\mid o_{u, i}=1, g_{u, i}=1\}$ respectively. 

\textbf{Experimental Details}. The computation of the ideal loss needs both a ground-truth rating matrix and a predicted rating matrix. Therefore, we generate the following six predicted matrices $\mathbf{\hat R}$:

$\bullet$ \textbf{ONE}: The predicted rating matrix $\hat{\mathbf{R}}$ is identical to the true rating matrix, except that $|\{(u, i) \mid r_{u, i}=5\}|$ randomly selected true ratings of 1 are flipped to 5. This means half of the predicted fives are true five, and half are true one.\\
$\bullet$ \textbf{THREE}: Same as \textbf{ONE}, but flipping true rating of 3.\\
$\bullet$ \textbf{FOUR}: Same as \textbf{ONE}, but flipping true rating of 4.\\
$\bullet$ \textbf{ROTATE}: For each predicted rating $\hat{r}_{u, i}=r_{u, i}-1$ when $r_{u, i} \geq 2$, and $\hat{r}_{u, i}=5$ when $r_{u, i}=1$.\\
$\bullet$ \textbf{SKEW}: Predicted $\hat{r}_{u, i}$ are sampled from the Gaussian distribution $\mathcal{N}(\mu=r_{u, i}, \sigma= (6-r_{u, i})/{2})$, and clipped to the interval $[1, 5]$.\\
$\bullet$ \textbf{CRS}: Set $\hat{r}_{u, i}=2$ if $r_{u, i} \leq 3$, otherwise, set $\hat{r}_{u, i}=4$.

To consider the neighborhood effect, we assume that each user-item pair in the uniform data has an equal probability of having $g_{u, i}=0$ and $g_{u, i}=1$, that is, $\pi(g) = 0.5$ for $g \in \{0, 1\}$. Thus, $\cL_{\mathrm{ideal}}^\mathrm{N}(\hat \bR) = |\cD|^{-1} \sum_{(u, i) \in \mathcal{D}} \{\delta(\hat{r}_{u, i}, r_{u, i}(1, g=0))+\delta(\hat{r}_{u, i}, r_{u, i}(1, g=1)\}/2$, where $\delta(\cdot, \cdot)$ is the mean absolute error (MAE). We
follow the previous studies~\citep{MRDR, TDR} to adopt relative absolute error (RE) to measure the estimation accuracy, which is defined as $\operatorname{RE} (\cL_{\mathrm{est}})= |\cL_{\mathrm{ideal}}^\mathrm{N}(\hat \bR)-\cL_{ {\mathrm{est}}}(\hat{\mathbf{R}})| / \cL_{\mathrm{ideal}}^\mathrm{N}(\hat \bR)$, 
where $\cL_{\mathrm{est}}$ denotes the ideal loss estimation by the estimator. The smaller the RE, the higher the estimation accuracy (see Appendix \ref{app-semi} for more details). 

\begin{table*}[t]
 \centering
 \setlength{\tabcolsep}{4.5pt}
   \captionof{table}{Relative error on six prediction metrics. The best results are bolded.}
   \vspace{-8pt}
   \resizebox{\linewidth}{!}{
\begin{tabular}{cccccccc}
\toprule
       & ONE           & THREE             & FOUR      & ROTATE              & SKEW       & CRS            \\ \midrule
Naive    & 0.8612 ± 0.0068 & 1.0011 ± 0.0075 & 1.0471 ± 0.0077 & 0.2781 ± 0.0019 & 0.3538 ± 0.0038 & 0.3419 ± 0.0030 \\
\midrule
IPS  & 0.4766 ± 0.0060 & 0.5501 ± 0.0056 & 0.5731 ± 0.0057 & 0.1434 ± 0.0040 & 0.1969 ± 0.0046 & 0.1885 ± 0.0028 \\
N-IPS   & \textbf{0.2383 ± 0.0066} & \textbf{0.2670 ± 0.0069} & \textbf{0.2829 ± 0.0062} & \textbf{0.0417 ± 0.0043} & \textbf{0.1024 ± 0.0051} & \textbf{0.0966 ± 0.0029} \\
\midrule
DR & 0.4247 ± 0.0088 & 0.4637 ± 0.0093 & 0.4661 ± 0.0096 & 0.0571 ± 0.0021 & 0.1938 ± 0.0043 & 0.0565 ± 0.0020 \\
N-DR   & \textbf{0.3089 ± 0.0088} & \textbf{0.3533 ± 0.0091} & \textbf{0.3577 ± 0.0092} & \textbf{0.0339 ± 0.0031} & \textbf{0.1219 ± 0.0039} & \textbf{0.0511 ± 0.0026} \\
\midrule
MRDR    & 0.2578 ± 0.0070 & 0.2639 ± 0.0071 & 0.2611 ± 0.0073 & 0.1001 ± 0.0025 & 0.1538 ± 0.0038  & 0.0156 ± 0.0021 \\
N-MRDR & \textbf{0.0622 ± 0.0065} & \textbf{0.0520 ± 0.0065} & \textbf{0.0503 ± 0.0064} & \textbf{0.0456 ± 0.0037} & \textbf{0.0672 ± 0.0038} & \textbf{0.0042 ± 0.0022}
\\ \bottomrule
\end{tabular}} \label{semi}
 \vspace{-8pt}
\end{table*}

\begin{figure}[t]
\centering
\subfloat[REC ONE]{
\begin{minipage}[t]{0.32\linewidth}
\centering
\includegraphics[width=1\textwidth]{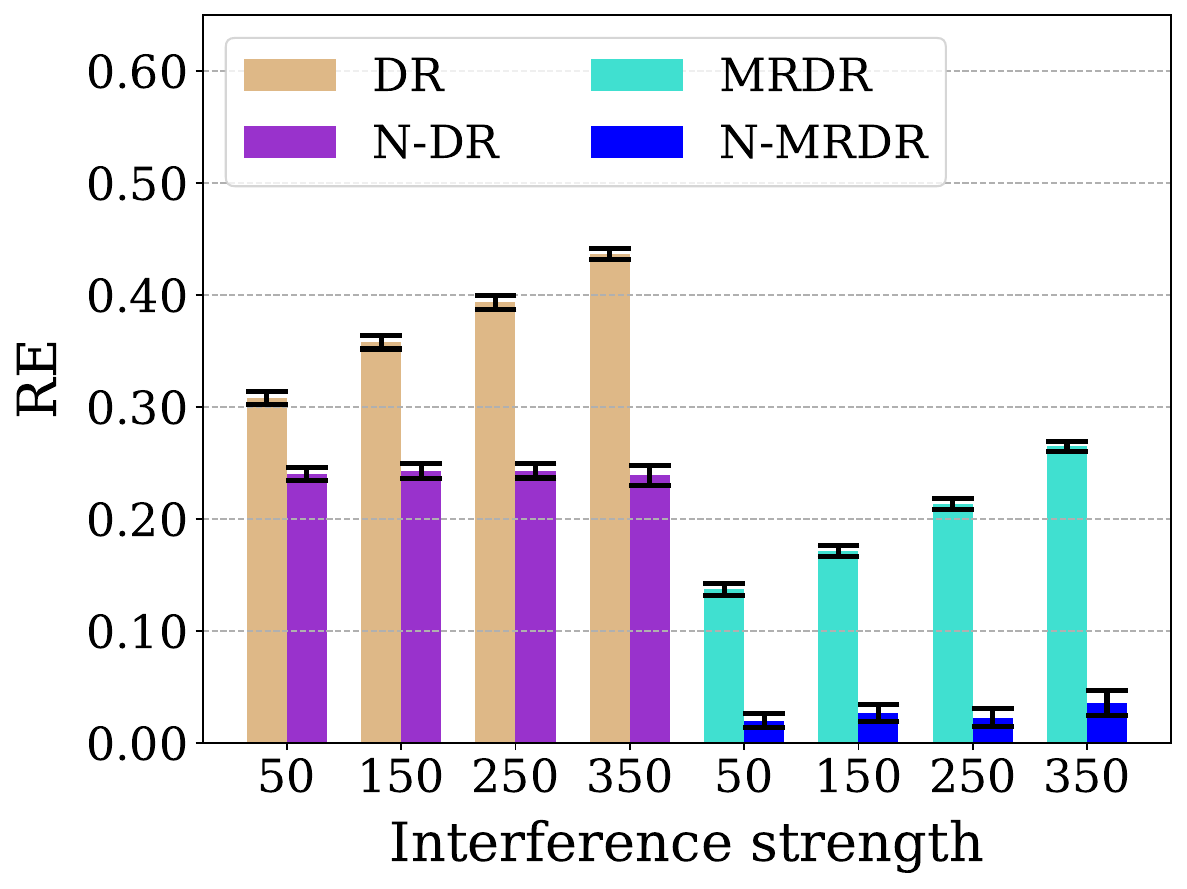}
\end{minipage}}%
\subfloat[REC THREE]{
\begin{minipage}[t]{0.32\linewidth}
\centering
\includegraphics[width=1\textwidth]{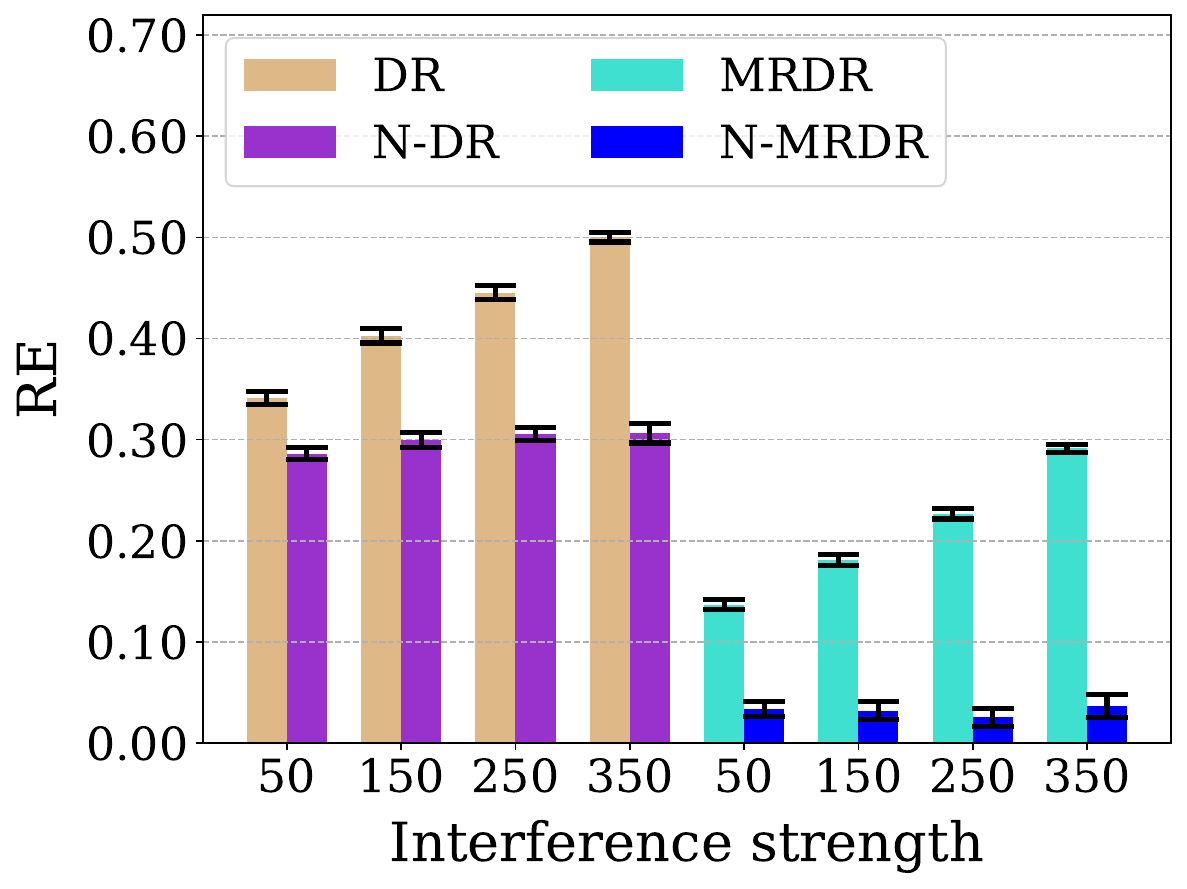}
\end{minipage}%
}%
\subfloat[REC FOUR]{
\begin{minipage}[t]{0.32\linewidth}
\centering
\includegraphics[width=1\textwidth]{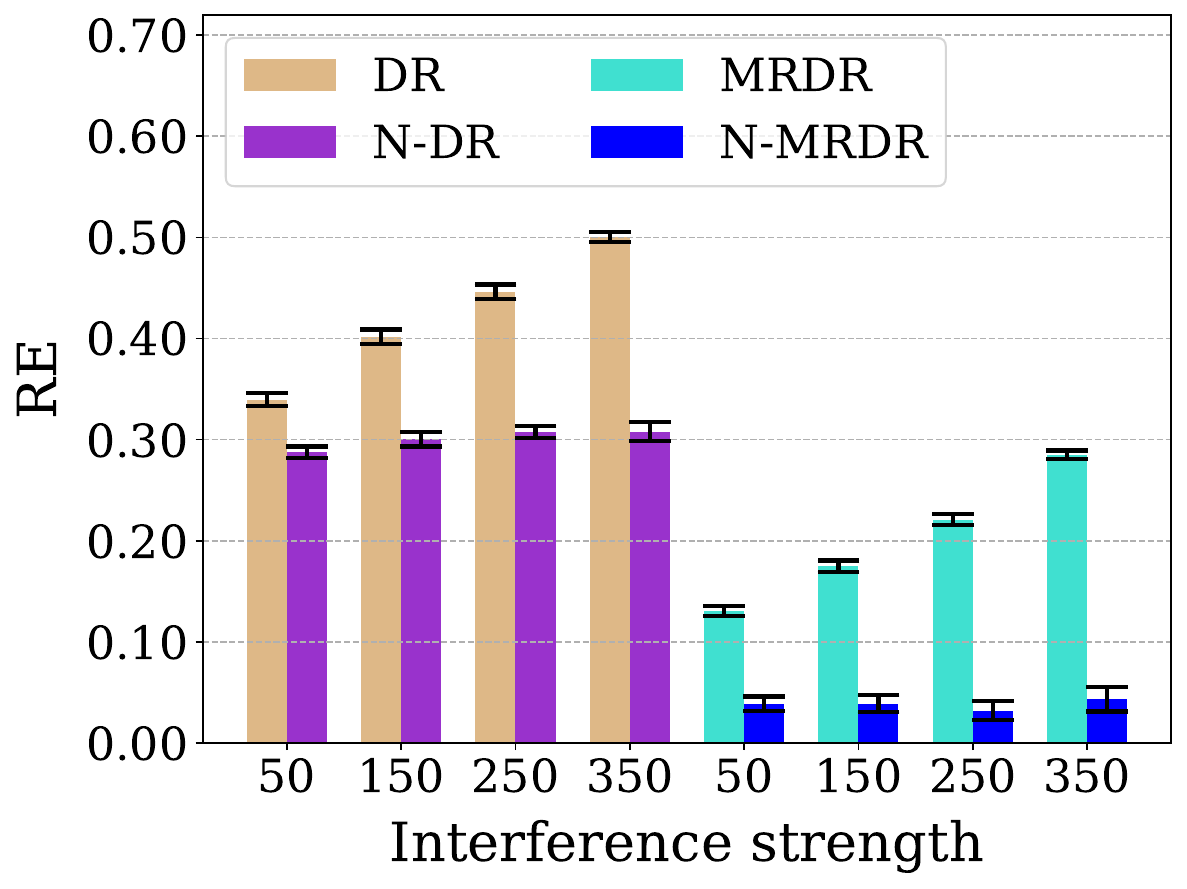}
\end{minipage}%
}%
\centering
\vspace{-10pt}
\centering
\subfloat[ROTATE]{
\begin{minipage}[t]{0.32\linewidth}
\centering
\includegraphics[width=1\textwidth]{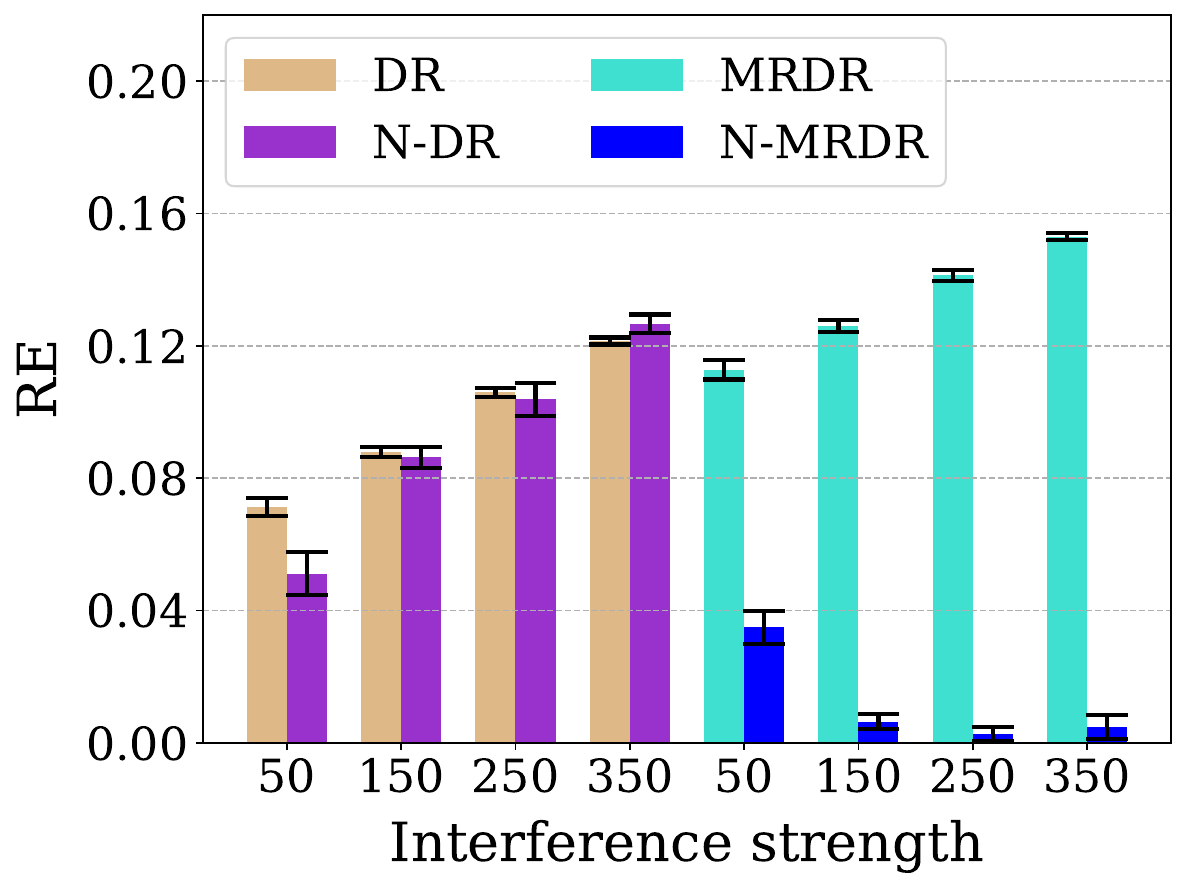}
\end{minipage}%
}%
\subfloat[SKEW]{
\begin{minipage}[t]{0.32\linewidth}
\centering
\includegraphics[width=1\textwidth]{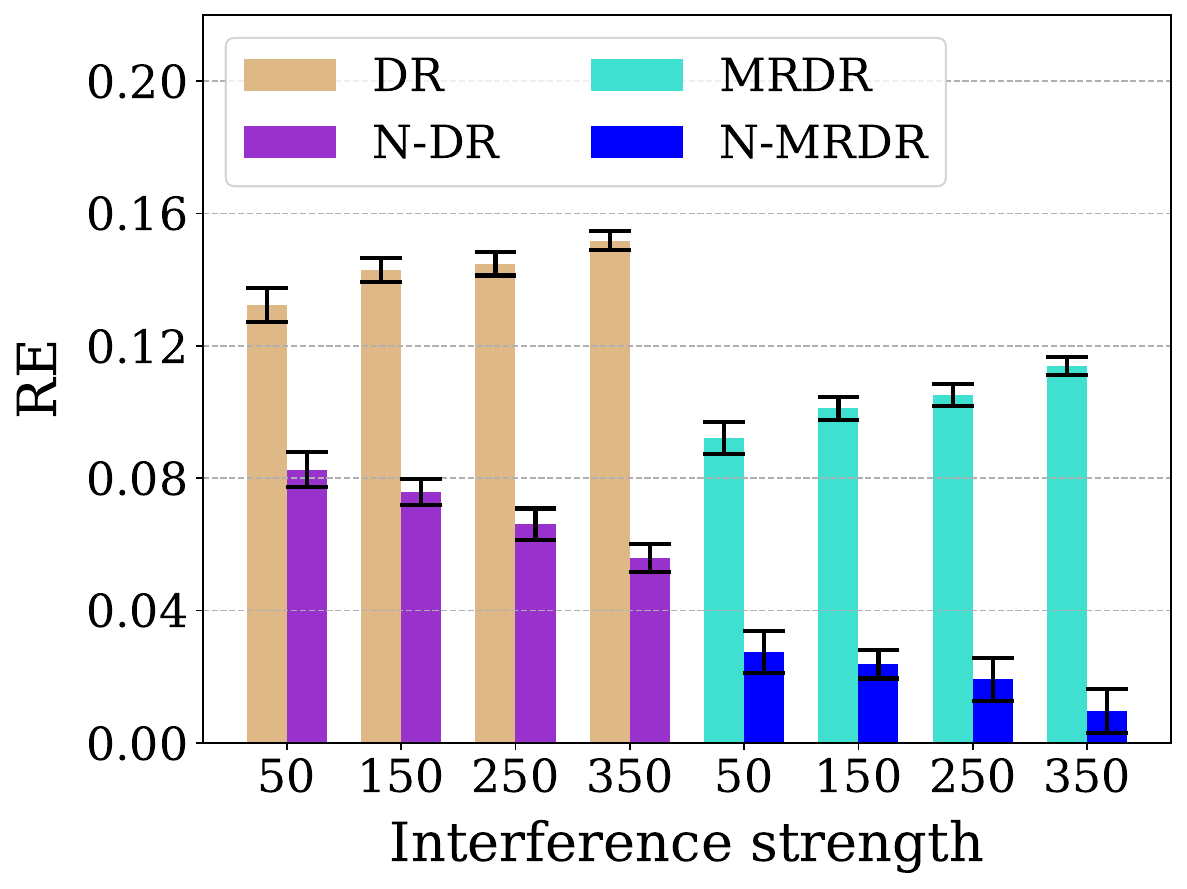}
\end{minipage}%
}%
\subfloat[CRS]{
\begin{minipage}[t]{0.32\linewidth}
\centering
\includegraphics[width=1\textwidth]{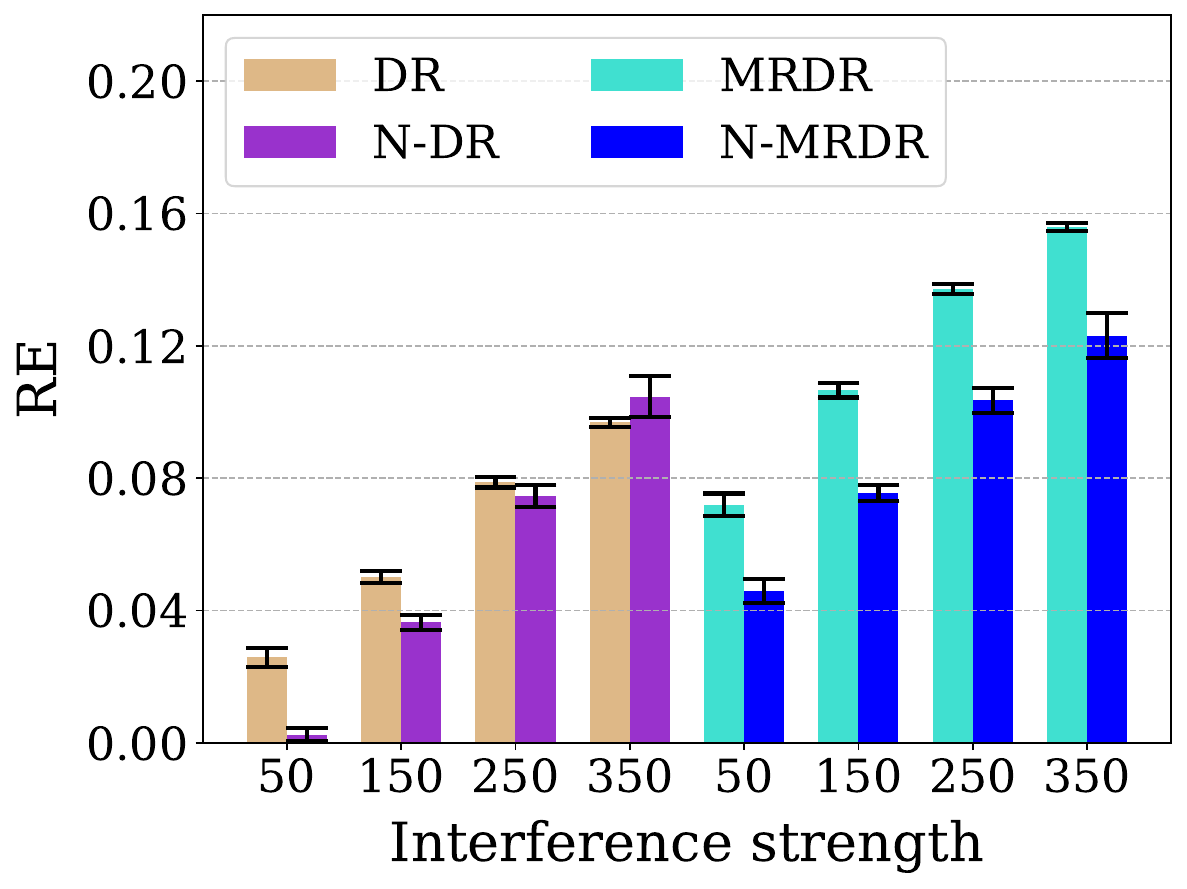}
\end{minipage}%
}%
\centering
\vspace{-6pt}


\caption{The effect of mask numbers as interference strength on RE on six prediction matrices.}
\label{fig:confounder}
 \vspace{-8pt}
\end{figure}

\begin{table*}[t]
 \setlength{\tabcolsep}{7pt}
  \centering
    \captionof{table}{Performance of MSE, AUC, and NDCG@5 on three real-world datasets. The best {six} results are bolded, and the best baseline is underlined. }
\vspace{-8pt}    
\setlength{\tabcolsep}{5pt}
\resizebox{1\linewidth}{!}{    
\begin{tabular}{l|ccc|ccc|ccc}
\toprule
Dataset      & \multicolumn{3}{c|}{{Coat}}                                        & \multicolumn{3}{c|}{{Yahoo! R3}}                        & \multicolumn{3}{c}{{KuaiRec}}                       \\ \midrule
Method       & MSE $\downarrow$            & AUC $\uparrow$            & N@5 $\uparrow$                     & MSE $\downarrow$            & AUC $\uparrow$            & N@5 $\uparrow$                     & MSE $\downarrow$            & AUC $\uparrow$                     & N@50 $\uparrow$         \\ \midrule
Base model~\citep{koren2009matrix}    & 0.238          & 0.710          & 0.616                              & 0.249 & 0.682 & 0.634                    & 0.137          & 0.754          & 0.553          \\
+ CVIB~\citep{wang2020information}        & 0.222          & 0.722           & 0.635                               & 0.257           & 0.683          & 0.645          & 0.103          & 0.769          & 0.563                    \\ 
+ DIB~\citep{DIB}        & 0.242          & 0.726           & 0.629                               & 0.248           & 0.687          & 0.641          & 0.142          & 0.754          & 0.556                    \\ 

+ SNIPS~\citep{Schnabel-Swaminathan2016}        & 0.208          & \underline{0.737}           & 0.636                               & 0.245           & 0.687          & 0.656          & 0.048          & \underline{\textbf{0.788}}          & \underline{\textbf{0.576}}                    \\
+ ASIPS~\citep{ASIPS}        & \underline{\textbf{0.205}}          & 0.722           & 0.621                               & 0.230           & 0.678          & 0.643          & 0.097          & 0.753          & 0.554                    \\

+ DAMF~\citep{saito2019towards}        & 0.218          & 0.734           & \underline{0.643}                               & 0.245           & \underline{\textbf{0.697}}          & 0.656          & 0.097          & 0.775          & 0.572                    \\
+ DR~\citep{saito2020doubly}        & 0.208          & 0.726           & 0.634                               & 0.216           & 0.684          & 0.658          & \textbf{0.046}          & 0.773          & 0.564                    \\
+ DR-BIAS~\citep{Dai-etal2022}        & 0.223          & 0.717           & 0.631                               & 0.220           & 0.689          & 0.654          & \underline{\textbf{0.046}}          & 0.771          & 0.552                    \\
+ DR-MSE~\citep{Dai-etal2022}        & 0.214          & 0.720           & 0.630                               & 0.222           & 0.689          & 0.657          & 0.047          & 0.769          & 0.547                    \\

+ MR~\citep{MR2023}        & 0.210          & 0.730           & 0.643                               & 0.247           & 0.693          & 0.651          & 0.114          & 0.780          & 0.573                    \\

+ TDR~\citep{TDR}        & 0.229          & 0.710           & 0.634                               & 0.234           & 0.674          & \underline{0.662}          & 0.134          & 0.769          & 0.573                    \\
+ TDR-JL~\citep{TDR}        & 0.216          & 0.734           & 0.639                               & 0.248           & 0.684          & 0.654          & 0.121          & 0.771          & 0.560                    \\
+ SDR~\citep{li2023stabledr}        & \textbf{0.208}          & 0.736           & 0.642                               & \underline{0.210}           & 0.690          & 0.655          & 0.116          & 0.775          & 0.574                    \\ \midrule
+ IPS~\citep{Schnabel-Swaminathan2016}         & 0.214 &  0.718 &  0.626                   
& 0.221          & 0.681          & 0.644          & 0.097          & 0.752          & 0.554          \\

+ N-IPS {[LR, Gaussian]}     & 0.212 & 0.742 & \textbf{0.678} & 0.226 & 0.693 & \textbf{0.664} & 0.092 & \textbf{0.796} & \textbf{0.585} \\ 

{+ N-IPS [LR, Epanechnikov]}     & {0.224} & \textbf{0.746} & {0.645}  & {0.242} & {\textbf{0.703}} & {\textbf{0.673}} & {0.094} & {\textbf{0.794}} & {\textbf{0.582}} \\ 

+ N-IPS {[NB, Gaussian]}     & \textbf{0.206} & \textbf{0.744} & 0.648  & \textbf{0.196} & 0.693 & 0.658 & 0.049 & 0.785 & \textbf{0.579} \\ 

{+ N-IPS [NB, Epanechnikov]}     & {0.210} & {\textbf{0.753}} & {0.646}  & {\textbf{0.197}} & {0.685} & {0.653} & {\textbf{0.047}} & {0.755} & {0.562} \\ \midrule

+ DR-JL~\citep{Wang-Zhang-Sun-Qi2019}       & 0.211 & 0.721 & 0.620                    & 0.224          & 0.682          & 0.646          & 0.050 & 0.764 & 0.526          \\

+ N-DR-JL {[LR, Gaussian]}  & 0.231 & 0.731 & \textbf{0.651}  & 0.247 & \textbf{0.698} & \textbf{0.664} &  0.113 & 0.779 & 0.537 \\ 

{+ N-DR-JL [LR, Epanechnikov]}  & {0.235} & {0.741} & {\textbf{0.655}}  & {0.251} & {0.693} & {\textbf{0.663}} & {0.108} & {0.784} & {0.552} \\ 

+ N-DR-JL {[NB, Gaussian]}  & \textbf{0.204} & \textbf{0.748} & 0.650  & \textbf{0.198} & 0.691 & 0.653 & 0.049 & 0.778 & 0.574 \\ 

{+ N-DR-JL [NB, Epanechnikov]}  & {0.209} & {\textbf{0.744}} & {0.648}  & {\textbf{0.191}} & {0.681} & {0.637} & {\textbf{0.046}} & {0.786} & {0.570} \\ \midrule

+ MRDR-JL~\citep{MRDR}     &0.214 & 0.721 & 0.631                    & 0.215          & 0.686          & 0.650          & 0.047          & 0.777          & 0.554          \\

+ N-MRDR-JL {[LR, Gaussian]} & {0.217} & {0.728} & {\textbf{0.662}}  & {0.252} & {\textbf{0.697}} & \textbf{0.666} & {0.107} & {0.785} & {0.539}\\

{+ N-MRDR-JL [LR, Epanechnikov]} & {0.233} & {0.734} & {\textbf{0.656}}  & {0.253} & {\textbf{0.695}} & {\textbf{0.666}} & {0.097} & {\textbf{0.791}} & {0.560}\\

+ N-MRDR-JL {[NB, Gaussian]} & {\textbf{0.208}} & {0.742} & {\textbf{0.651}}  & {\textbf{0.206}} & {\textbf{0.694}} & {0.663} & {\textbf{0.045}} & {\textbf{0.793}} & {\textbf{0.583}}\\

{+ N-MRDR-JL [NB, Epanechnikov]} & {\textbf{0.207}} & {\textbf{0.756}} & {0.635} & {\textbf{0.194}} & {0.690} & {0.644} & {\textbf{0.044}} & {\textbf{0.802}} & {\textbf{0.587}}\\

\bottomrule
\end{tabular}} \label{real_world}
 \vspace{-10pt}
\end{table*}

\textbf{Performance Analysis}. We take three propensity-based estimators: IPS, DR, and MRDR as baselines (see Section \ref{sec:6} for baselines introduction). The results are shown in Table \ref{semi}. First, the REs of our estimators are significantly lower compared to the corresponding previous estimators, which indicates that our estimators are able to estimate the ideal loss accurately in the presence of neighborhood effect. In addition, to investigate how the neighborhood effect affects the estimation error, we randomly mask some user rows and item columns before sampling $o_{u, i}$, which results in $p_{u, i}=0$ for the masked user-item pairs. For unmasked user-item pairs, we raise their propensities such that the expected total number of observed samples remains the same, which increases the proportion of observed samples with $g_{u, i}=1$ to strengthen the neighborhood effect. Figure \ref{fig:confounder} shows the RE of the estimators with varying neighborhood {effects}. Our methods stably outperform the previous methods in all scenarios, which verifies that our methods are robust to the increased neighborhood effect.

\vspace{-6pt}
\section{Real-World Experiments}
\label{sec:6}
\vspace{-6pt}
\textbf{Dataset and Experiment Details.} We verify the effectiveness of the proposed estimators on three real-world datasets: {\bf Coat} contains 6,960 MNAR ratings and 4,640 missing-at-random (MAR) ratings. {\bf Yahoo! R3} contains 311,704 MNAR ratings and 54,000 MAR ratings. \textbf{KuaiRec} 
contains 4,676,570 video watching ratio records from 1,411 users for 3,327 videos. We pre-specify three neighborhood choices for a user-item pair in MNAR data: (1) using user historical behavior, (2) using the purchase history of an item, and (3) using the interaction of users and items, and let $g_{u,i}$ be the neighborhood number of the user-item pair, which is a multi-valued representation. We report the best result of our methods among the three neighborhood choices using MSE, AUC, and NDCG@$K$ as the evaluation protocols, where $K = 5$ for \textbf{Coat} and \textbf{Yahoo! R3} and $K = 50$ for \textbf{KuaiRec}. {We adopt both the Gaussian kernel and Epanechnikov kernel as the kernel function for implementing our proposed N-IPS, N-DR-JL, and N-MRDR (see Appendix \ref{app-h} for more details).}

{\bf Baselines.}
We take {\bf Matrix Factorization (MF)}~\citep{koren2009matrix} as the base model and consider the following debiasing baselines: {\bf IPS}~\citep{Schnabel-Swaminathan2016, saito2020ips}, {\bf SNIPS}~\citep{Schnabel-Swaminathan2016}, {\bf DR-JL}~\citep{Wang-Zhang-Sun-Qi2019}, {\bf ASIPS}~\citep{ASIPS}, {\bf CVIB}~\citep{wang2020information}, {\bf DR}~\citep{saito2020doubly}, {\bf MRDR-JL}~\citep{MRDR}, {\bf DIB}~\citep{DIB}, {\bf DAMF}~\citep{saito2019towards}, {\bf DR-BIAS}~\citep{Dai-etal2022}, {\bf DR-MSE}~\citep{Dai-etal2022}, {\bf MR}~\citep{MR2023}, {\bf Stable-DR}~\citep{li2023stabledr}, {\bf TDR}~\citep{TDR},  and {\bf TDR-JL}~\citep{TDR}. {Following previous studies~\citep{Schnabel-Swaminathan2016,Wang-Zhang-Sun-Qi2019}, for all baseline methods requiring propensity estimation, we adopt naive Bayes (NB) method using 5\% MAR ratings for training the propensity model. For our proposed methods, we also adopt logistic regression (LR) to estimate the propensities without MAR ratings.}



{\bf Real-World Debiasing Performance.} Table \ref{real_world} shows the performance of the baselines and our methods on three datasets. Compared with the base model, the debiasing methods achieve better performance. Notably, the proposed methods can stably outperform the baseline methods in all metrics, showing that our methods can effectively take the neighbor effect into account. This also provides empirical evidence of the existence of the neighborhood effect in real-world datasets. The proposed methods show competitive performance whether the MAR data are accessible (NB) or not (LR), and perform similarly in the case of adopting the Gaussian kernel or Epanechnikov kernel. 
\vspace{-5pt}
\section{Conclusion}
\vspace{-4pt}

In this paper, we study the problem of selection bias in the presence of neighborhood effect. First, we formulate the neighborhood effect in RS as an interference problem in causal inference. 
Next, a neighborhood treatment representation vector is introduced to reduce the dimension and sparsity of the neighborhood treatments.  
   Based on it, we reformulate the potential feedback and propose a novel ideal loss that can be used to deal with selection bias in the presence of neighborhood effect.   
Then, we propose two novel kernel-smoothing based neighborhood estimators for the ideal loss, which allows the neighborhood treatment representation vector to have continuous probability density. We systematically analyze the properties of the proposed estimators, including the bias, variance, optimal bandwidth, and generalization error bounds. In addition, we also theoretically establish the connection between the debiasing methods considering and ignoring the neighborhood effect.
 Extensive experiments are conducted on semi-synthetic and real-world data to demonstrate the effectiveness of our approaches. 
A limitation of this work is that the hypothesis space $\mathcal{G}$ of $\boldsymbol{g}$ relies {on prior} knowledge, and it is not obvious to choose it in practice.  We leave it for our future work.

\section{Acknowledgement}
This work was supported in part by National Natural Science Foundation of China (No. 623B2002, 12301370).

\bibliographystyle{plainnat}
\bibliography{iclr2024_conference}

\newpage
\newpage 
\appendix


\setcounter{theorem}{0}
\setcounter{prop}{0}
\setcounter{equation}{0}
 \renewcommand{\theequation}{A.\arabic{equation}}


\section{Related Work}\label{app-related}
{\bf Selection Bias in Recommender System.} Recommender system (RS) plays an important role in the current era of information explosion~\citep{huang2023pareto, lv2023duet, lv2024intelligent}. However, due to the self-selection by the users, the data collected by RS will often contain selection bias, which will result in the distribution of training data (observed population) different from that of test data (target population), leading to the challenges of achieving unbiased learning~\citep{wang2022estimating, wang2023treatment,  zou2023factual,wang2024optimal, wang2023out}. If we learn the model directly on the training data without debiasing, it will result in a sub-optimal performance on the test data~\citep{wang2023counterclr,zhang2023map, bai2024prompt,zhang2023metacoco}. Many methods have been developed to address selection bias~\citep{Wu-etal2022-framework, Zhang-etal-2023, luo2024survey}, for instance, \cite{marlin2009collaborative, Steck2010} discussed the error-imputation-based methods,  
 \cite{Schnabel-Swaminathan2016} recommended using the inverse propensity scoring (IPS) method for unbiased learning.  
  \cite{Wang-Zhang-Sun-Qi2019} proposed a doubly robust (DR) joint learning method and achieved superior performance. Subsequently, various novel model structures and algorithms are designed to enhance the DR method, such as \cite{MRDR, Dai-etal2022, li2024kernel}, which proposed new DR methods by further reducing the bias or variance of the DR estimator. \cite{Zhang-etal2020} proposed multi-task learning through parameter sharing between the propensity and prediction model, \cite{AutoDebias, Wang-etal2021, Balance2023} proposed using a small uniform dataset to enhance the performance of prediction models, and 
 \cite{Ding-etal2022} proposed an adversarial learning-based framework to address the unmeasured confounders. {However, a user's feedback on an item may receive influence from the other user-item pairs~\citep{zheng2021-distangle}. To fill this gap, \cite{chen2021adapting} focuses on the task of learning to rank (LTR), addressing position bias using implicit feedback data. They consider other user-item interactions as confounders from a counterfactual perspective and use embedding as a proxy confounder to capture the influence of other user-item interactions. Different from \cite{chen2021adapting}, our paper focuses on the selection bias in the context of rating prediction, and regards other user-item interactions as a new treatment from the perspective of interference in causal inference. We formulate the influence of other user-item interactions as an interference problem in causal inference and introduce a treatment representation to capture the influence. On this basis, we propose a novel ideal loss that can be used to deal with selection bias in the presence of interference.}

{\bf Interference in Causal Inference.}   Interference is a common problem in observational studies, and the effects of interference are also called spillover effects in economics or peer effects in social sciences~\citep{Forastiere-etal2021}. 
 Early {literature focuses} on the case of partial interference~\citep{Hong-2006, Sobel-2006, Hudgens-2008, Tchetgen-2012, Ferracci-2014}, i.e.,  the sample can be divided into multiple groups, with interference between units in the same group, while units between groups are independent.  Recent works have attempted to further relax the partial interference assumption by allowing for a wide variety of interference patterns~\citep{Ogburn2014, Aronow-2017}, such as direct interference~\citep{Forastiere-etal2021}, interference by contagion~\citep{Ogburn-2017}, allocational interference~\citep{Ogburn2014}, or their hybrids~\citep{Tchetgen-2021}. 
 These studies differ from ours in several important ways: (1) their goal is to estimate the main effect and neighborhood effect, while our goal is to achieve unbiased learning of the prediction model that is more challenging and 
  need to carefully design the loss as well as the training algorithm to mitigate the neighborhood effect; (2) they do not use treatment representation and the estimation does not take into account the possible cases of continuous and multi-dimensional representations.

\section{Proofs}
\subsection{Proofs of Theorems 1 and 2} 
Recall that $p_{u,i}(\boldsymbol{g}) = \P(o_{u,i} = 1, \boldsymbol{g}_{u,i} = \boldsymbol{g} | x_{u,i})$ and $\hat r_{u,i} = f_{\theta}(x_{u,i})$ are functions of $x_{u,i}$,   $\delta_{u,i}(\boldsymbol{g}) = \delta(\hat r_{u,i}, r_{u,i}(1, \boldsymbol{g}))$, 
 $\P$ and $\bfE$ denote the distribution and expectation on the target population $\cD$, and $p(\cdot)$ denotes the probability density function of $\P$.  

\begin{theorem}[Identifiability] Under Assumptions \ref{assumption_1}--\ref{assumption_3}, $\cL_{\mathrm{ideal}}^\mathrm{N}(\hat \bR | \boldsymbol{g})$ and $\cL_{\mathrm{ideal}}^\mathrm{N}(\hat \bR)$ are identifiable.  
\end{theorem}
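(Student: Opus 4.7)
The plan is to express the summand $\delta(\hat r_{u,i}, r_{u,i}(1,\boldsymbol{g}))$ of $\cL_{\mathrm{ideal}}^\mathrm{N}(\hat\bR|\boldsymbol{g})$, whose only unobservable ingredient is the potential outcome $r_{u,i}(1,\boldsymbol{g})$, as a functional of the joint distribution of the observed quantities $(x_{u,i},o_{u,i},\boldsymbol{g}_{u,i},r_{u,i})$. Since $\hat r_{u,i}=f_{\theta}(x_{u,i})$ and $\pi(\boldsymbol{g})$ are known a priori, and $\cL_{\mathrm{ideal}}^\mathrm{N}(\hat\bR)$ is just the $\pi$-weighted integral of $\cL_{\mathrm{ideal}}^\mathrm{N}(\hat\bR|\boldsymbol{g})$, it suffices to handle $\cL_{\mathrm{ideal}}^\mathrm{N}(\hat\bR|\boldsymbol{g})$ for each fixed $\boldsymbol{g}\in\mathcal{G}$.

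I would carry out the argument in three standard causal-inference steps. First, rewrite each summand in conditional-mean form by conditioning on $x_{u,i}$, obtaining $\bfE[\delta(\hat r_{u,i},r_{u,i}(1,\boldsymbol{g}))\mid x_{u,i}]$, which isolates the conditional law of the potential outcome given $x_{u,i}$ as the only unidentified object. Second, apply Assumption \ref{assumption_3} (unconfoundedness) to deduce
\[
\bfE[\delta(\hat r_{u,i},r_{u,i}(1,\boldsymbol{g}))\mid x_{u,i}] = \bfE[\delta(\hat r_{u,i},r_{u,i}(1,\boldsymbol{g}))\mid x_{u,i},\, o_{u,i}=1,\, \boldsymbol{g}_{u,i}=\boldsymbol{g}],
\]
which requires the conditioning event to have positive density $p_{u,i}(\boldsymbol{g})>0$ (the standard overlap condition). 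Third, on the event $\{o_{u,i}=1,\boldsymbol{g}_{u,i}=\boldsymbol{g}\}$, Assumption \ref{assumption_2} (consistency, whose formulation is well-posed precisely because Assumption \ref{assumption_1} reduces the high-dimensional neighborhood treatment $\boldsymbol{o}_{\cN_{(u,i)}}$ to the representation $\boldsymbol{g}_{u,i}$) yields $r_{u,i}(1,\boldsymbol{g})=r_{u,i}$. Substituting gives
\[
\bfE[\delta(\hat r_{u,i},r_{u,i}(1,\boldsymbol{g}))\mid x_{u,i}] = \bfE[\delta(\hat r_{u,i},r_{u,i})\mid x_{u,i},\, o_{u,i}=1,\, \boldsymbol{g}_{u,i}=\boldsymbol{g}],
\]
whose right-hand side depends only on the observed-data distribution.

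Averaging over $(u,i)\in\cD$ then identifies $\cL_{\mathrm{ideal}}^\mathrm{N}(\hat\bR|\boldsymbol{g})$, and integrating against the prespecified $\pi(\boldsymbol{g})$ identifies $\cL_{\mathrm{ideal}}^\mathrm{N}(\hat\bR)$. The main subtlety I expect is conceptual rather than computational: Assumption \ref{assumption_1} is not invoked inside an algebraic manipulation but serves as the structural reduction that makes $r_{u,i}(1,\boldsymbol{g})$ a well-defined object indexed by the same $\boldsymbol{g}$ that can be conditioned on via $\boldsymbol{g}_{u,i}$, allowing the consistency statement of Assumption \ref{assumption_2} to match term by term. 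The implicit overlap condition $p_{u,i}(\boldsymbol{g})>0$ needed for the second step is the one ingredient beyond Assumptions \ref{assumption_1}--\ref{assumption_3} that the proof should tacitly rely on, as it is the same positivity requirement the kernel-smoothing estimators in the next subsection will also need.
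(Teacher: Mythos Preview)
Your proposal is correct and follows essentially the same three-step argument as the paper's own proof: condition on $x_{u,i}$ via iterated expectations, apply unconfoundedness (Assumption~\ref{assumption_3}) to add $\{o_{u,i}=1,\boldsymbol{g}_{u,i}=\boldsymbol{g}\}$ to the conditioning set, then invoke consistency (Assumption~\ref{assumption_2}) to replace $r_{u,i}(1,\boldsymbol{g})$ by the observed $r_{u,i}$. Your additional remarks on the structural role of Assumption~\ref{assumption_1} and the tacit overlap requirement $p_{u,i}(\boldsymbol{g})>0$ are accurate refinements that the paper's proof leaves implicit.
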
 
\begin{proof}[Proof of Theorem 1]  
Since 
	\[     		 \cL_{\mathrm{ideal}}^\mathrm{N}(\hat \bR) = \int  \cL_{\mathrm{ideal}}^\mathrm{N}(\hat \bR | \boldsymbol{g})  \pi(\boldsymbol{g}) d\boldsymbol{g},  \]
  it suffices to show that $\cL_{\mathrm{ideal}}^\mathrm{N}(\hat \bR | \boldsymbol{g})$ is identifiable. This follows immediately from the following equations
    \begin{align*}
        &\cL_{\mathrm{ideal}}^\mathrm{N}(\hat \bR | \boldsymbol{g})= \bfE[ \delta(  \hat r_{u,i}, r_{u,i}(1,\boldsymbol{g}) )]    \\
        ={}&  \bfE \Big [ \bfE \Big \{ \delta( \hat r_{u,i}, r_{u,i}(1,\boldsymbol{g}) )   \mid x_{u,i}  \Big \}   \Big ]  \qquad \text{(the law
of iterated expectations)}   \\
           ={}&  \bfE \Big [ \bfE  \Big \{   \delta(  \hat r_{u,i}, r_{u,i}(1,\boldsymbol{g}) ) \mid x_{u,i},  o_{u,i}= 1, \boldsymbol{g}_{u,i} = \boldsymbol{g} \Big \}   \Big ] \qquad \text{(Assumption 3)}  \\
                      ={}&  \bfE \Big [ \bfE  \Big \{   \delta(  \hat r_{u,i}, r_{u,i}) \mid x_{u,i},  o_{u,i}= 1, \boldsymbol{g}_{u,i} = \boldsymbol{g} \Big \}   \Big ] \qquad \text{(Assumption 2)} \\
                      ={}& \int \int \delta(  \hat r_{u,i}, r_{u,i}) p(r_{u,i} | x_{u,i}, o_{u,i} = 1, \boldsymbol{g}_{u,i} = \boldsymbol{g}) p(x_{u,i}) d r_{u,i} d x_{u,i}. 
    \end{align*}
\end{proof}


\begin{theorem}[Link to Selection Bias] Under Assumptions \ref{assumption_1}--\ref{assumption_3},  

%

(a) if $\boldsymbol{g}_{u,i} \indep o_{u,i} \mid x_{u,i}$, $\cL_{\mathrm{ideal}}^\mathrm{N}(\hat \bR)=\cL_{\mathrm{ideal}}(\mathbf{\hat  R});$ 

(b) if $\boldsymbol{g}_{u,i} \notindep o_{u,i} \mid x_{u,i}$,  $\cL_{\mathrm{ideal}}^\mathrm{N}(\hat \bR) - \cL_{\mathrm{ideal}}(\mathbf{\hat  R})$ equals 
	\begin{align*}
	 \int   \bfE \Big [ \bfE  \{ \delta_{u,i}(\boldsymbol{g})  | x_{u,i}  \} \cdot  \Big \{  p( \boldsymbol{g}_{u,i} = \boldsymbol{g} | x_{u,i} )  -   p(  \boldsymbol{g}_{u,i} = \boldsymbol{g} | x_{u,i},  o_{u,i} =1)\Big  \} \Big ]  \pi(\boldsymbol{g}) d\boldsymbol{g}. 
	\end{align*}
\end{theorem}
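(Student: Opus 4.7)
The strategy is an identification argument that writes both loss functions in terms of observable quantities under Assumptions 1--3 and then subtracts. Part (a) drops out of part (b) by substituting the independence hypothesis into the integrand.

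First I recycle the identification chain from Theorem 1: by iterated expectations and Assumptions 2--3,
\[ \cL_{\mathrm{ideal}}^\mathrm{N}(\hat \bR \mid \boldsymbol{g}) = \bfE\bigl[\bfE\{\delta_{u,i}(\boldsymbol{g}) \mid x_{u,i}\}\bigr], \]
where the inner conditional expectation is already an observational quantity (the move $\bfE\{\delta_{u,i}(\boldsymbol{g})\mid x_{u,i}\}=\bfE\{\delta(\hat r_{u,i},r_{u,i})\mid x_{u,i},o_{u,i}=1,\boldsymbol{g}_{u,i}=\boldsymbol{g}\}$ uses Assumption 3 to strip the conditioning and Assumption 2 to replace $r_{u,i}(1,\boldsymbol{g})$ by $r_{u,i}$). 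Integrating against $\pi(\boldsymbol{g})$ yields $\cL_{\mathrm{ideal}}^\mathrm{N}(\hat \bR)$. For the classical loss $\cL_{\mathrm{ideal}}(\mathbf{\hat R})$, under interference one must read $r_{u,i}(1)$ as $r_{u,i}(1,\boldsymbol{g}_{u,i})$ with $\boldsymbol{g}_{u,i}$ drawn from the observational distribution conditional on $o_{u,i}=1$ (the convention implicit in existing propensity-based estimators). Marginalizing $\boldsymbol{g}_{u,i}$ against this conditional and invoking Assumptions 2--3 produces an integral of $\bfE\{\delta_{u,i}(\boldsymbol{g})\mid x_{u,i}\}$ weighted by $p(\boldsymbol{g}_{u,i}=\boldsymbol{g}\mid x_{u,i},o_{u,i}=1)$.

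Next I align the two identified expressions under a common measure. Using the identities $\int p(\boldsymbol{g}_{u,i}=\boldsymbol{g}\mid x_{u,i})\,d\boldsymbol{g}=1$ and $\int \pi(\boldsymbol{g})\,d\boldsymbol{g}=1$ to insert auxiliary factors of unity, and applying Fubini's theorem together with linearity of $\bfE[\cdot]$, both expressions can be rewritten as integrals against $\pi(\boldsymbol{g})\,d\boldsymbol{g}$ whose integrands contain the respective conditional densities $p(\boldsymbol{g}_{u,i}=\boldsymbol{g}\mid x_{u,i})$ and $p(\boldsymbol{g}_{u,i}=\boldsymbol{g}\mid x_{u,i},o_{u,i}=1)$ multiplying the common factor $\bfE\{\delta_{u,i}(\boldsymbol{g})\mid x_{u,i}\}$. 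Subtracting collapses the integrand into $\bfE\{\delta_{u,i}(\boldsymbol{g})\mid x_{u,i}\}\cdot\{p(\boldsymbol{g}_{u,i}=\boldsymbol{g}\mid x_{u,i})-p(\boldsymbol{g}_{u,i}=\boldsymbol{g}\mid x_{u,i},o_{u,i}=1)\}$, which is the formula in (b). Part (a) then follows at once, because under $\boldsymbol{g}_{u,i}\indep o_{u,i}\mid x_{u,i}$ the two conditional densities coincide, so the integrand vanishes identically.

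The main technical obstacle is the alignment step: $\cL_{\mathrm{ideal}}^\mathrm{N}$ naturally carries the user-specified $\pi(\boldsymbol{g})$ as an external measure, whereas the observational identification of $\cL_{\mathrm{ideal}}$ naturally carries $p(\boldsymbol{g}_{u,i}\mid x_{u,i},o_{u,i}=1)$ internally; forcing them into a common form without breaking the outer $\bfE[\cdot]$ requires careful bookkeeping with iterated expectations and Fubini. The conceptual pivot throughout is Assumption 3 (unconfoundedness), which strips conditioning on $(o_{u,i},\boldsymbol{g}_{u,i})$ from potential-outcome expectations so that they may be legitimately reweighted by observational densities.
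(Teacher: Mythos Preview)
Your proposal is correct and follows essentially the same route as the paper: identify each loss separately via iterated expectations and Assumptions 2--3 so that $\cL_{\mathrm{ideal}}(\hat{\bR})$ picks up the weight $p(\boldsymbol{g}_{u,i}=\boldsymbol{g}\mid x_{u,i},o_{u,i}=1)$ while $\cL_{\mathrm{ideal}}^{\mathrm{N}}(\hat{\bR})$ picks up $p(\boldsymbol{g}_{u,i}=\boldsymbol{g}\mid x_{u,i})$, then subtract and read off part (a) as the special case where the two densities coincide. The only cosmetic difference is that you spell out the alignment step (inserting factors of unity, Fubini) explicitly, whereas the paper simply asserts that part (b) ``follows immediately'' from the two rewritten expressions.
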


\begin{proof}[Proof of Theorem 2]  For previous methods addressing selection bias without taking into account interference, the ideal loss $\cL_{\mathrm{ideal}}(\mathbf{\hat  R})$ is
    \begin{align*}
      \cL_{\mathrm{ideal}}(\mathbf{\hat  R}) ={}& \bfE[ \delta(  \hat r_{u,i}, r_{u,i}(1))  ]  \\
     ={}&  \bfE[  \bfE \{  \delta(  \hat r_{u,i}, r_{u,i}(1)) | x_{u,i}  \} ] \\
      ={}& \bfE[  \bfE \{ \delta(  \hat r_{u,i}, r_{u,i}) | x_{u,i}, o_{u,i} = 1 \} ]  \\
           ={}& \bfE \Big [  \bfE \Big \{ \delta(  \hat r_{u,i}, r_{u,i}) | x_{u,i}, o_{u,i} = 1, \boldsymbol{g}_{u,i} = \boldsymbol{g} \Big \}  \cdot 
    p(\boldsymbol{g}_{u,i} = \boldsymbol{g}|x_{u,i}, o_{u,i} = 1)  \Big ]  \\
             ={}& \bfE \Big [  \bfE \Big \{ \delta_{u,i}(\boldsymbol{g})  | x_{u,i}, o_{u,i} = 1, \boldsymbol{g}_{u,i} = \boldsymbol{g} \Big \}  \cdot  p(\boldsymbol{g}_{u,i} = \boldsymbol{g}|x_{u,i}, o_{u,i} = 1)  \Big ]  \\
                ={}& \bfE \Big [  \bfE \Big \{ \delta_{u,i}(\boldsymbol{g})  | x_{u,i}\Big \}  \cdot  p(\boldsymbol{g}_{u,i} = \boldsymbol{g}|x_{u,i}, o_{u,i} = 1)  \Big ]. 
    \end{align*}
For the proposed method addressing selection bias under interference, our newly defined ideal loss $\cL_{\mathrm{ideal}}^\mathrm{N}(\hat \bR)$ is
 \begin{align*}
    \cL_{\mathrm{ideal}}^\mathrm{N}(\hat \bR) &={} \int  \bfE[ \delta(  \hat r_{u,i}, r_{u,i}(1, \boldsymbol{g}))  ]  \pi(\boldsymbol{g}) d\boldsymbol{g} \\
   &={} \int  \bfE[ \bfE \{ \delta_{u,i}(\boldsymbol{g})  | x_{u,i} \} ]  \pi(\boldsymbol{g}) d\boldsymbol{g}      \\
      &={} \int  \bfE \Big [ \bfE \Big \{ \delta_{u,i}(\boldsymbol{g})  | x_{u,i}, \boldsymbol{g}_{u,i} = \boldsymbol{g}  \Big \} \cdot  p( \boldsymbol{g}_{u,i} = \boldsymbol{g} \mid x_{u,i}) \Big ]  \pi(\boldsymbol{g}) d\boldsymbol{g}        \\
      &={} \int  \bfE \Big [ \bfE \Big \{ \delta_{u,i}(\boldsymbol{g})  | x_{u,i} \Big \} \cdot  p( \boldsymbol{g}_{u,i} = \boldsymbol{g} \mid x_{u,i}) \Big ]  \pi(\boldsymbol{g}) d\boldsymbol{g}. 
 \end{align*}
Theorem \ref{thm2}(b) follows immediately from these two rewritten equations. 
When $\boldsymbol{g}_{u,i} \indep o_{u,i} \mid x_{u,i}$, we have $p( \boldsymbol{g}_{u,i} = \boldsymbol{g} \mid x_{u,i} = x, o_{u,i} =1) = p( \boldsymbol{g}_{u,i} = \boldsymbol{g} \mid x_{u,i} = x)$, which leads to $\cL_{\mathrm{ideal}}^\mathrm{N}(\hat \bR)=\cL_{\mathrm{ideal}}(\mathbf{\hat  R})$. This completes the proof of Theorem \ref{thm2}(a). 

\end{proof}

\subsection{Proof of Theorem 3}    \label{app-b}

\begin{theorem}[Bias and Variance of N-IPS and N-DR]  Under Assumptions \ref{assumption_1}--\ref{assumption_4}, 

(a) the bias of the N-DR estimator is given as
\[ \text{Bias}(\cL_{\mathrm{DR}}^\mathrm{N}(\hat \bR)) =  \frac{1}{2}\mu_2  \int \bfE\Big [\frac{\partial^2 p(o_{u,i} = 1,  \boldsymbol{g}_{u,i} = \boldsymbol{g}   | x_{u,i})}{\partial \boldsymbol{g}^2  } \cdot \{  \delta_{u,i}(\boldsymbol{g})- \hat \delta_{u,i}(\boldsymbol{g})   \} \Big ]  \pi(\boldsymbol{g})d\boldsymbol{g} \cdot  h^2 +  o(h^2),  \]
where $\mu_2 = \int K(t) t^2 dt$. The bias of the N-IPS estimator is given as
	\[   \text{Bias}(\cL_{\mathrm{IPS}}^\mathrm{N}(\hat \bR))  =  \frac{1}{2}\mu_2  \int \bfE\Big [\frac{\partial^2 p(o_{u,i} = 1, \boldsymbol{g}_{u,i}  =  \boldsymbol{g}   | x_{u,i})}{\partial \boldsymbol{g}^2  } \cdot \delta_{u,i}(\boldsymbol{g}) \Big ]  \pi(\boldsymbol{g})d\boldsymbol{g} \cdot  h^2 +  o(h^2); \]

(b) the variance of the N-DR estimator is given as
	 \[ \text{Var}(\cL_{\mathrm{DR}}^\mathrm{N}(\hat \bR))  =  \frac{1}{|\cD| h }  \int \psi(\boldsymbol{g}) \pi(\boldsymbol{g})d\boldsymbol{g} + o(\frac{1}{|\cD| h } ), \]
where 	\[   \psi(\boldsymbol{g}) =  \int \frac{1}{p_{u,i}(\boldsymbol{g}')}\cdot  \bar K(\frac{\boldsymbol{g}-\boldsymbol{g}'}{h})  \cdot \{ \delta_{u,i}(\boldsymbol{g}) - \hat \delta_{u,i}(\boldsymbol{g})\}  \{ \delta_{u,i}(\boldsymbol{g}') - \hat \delta_{u,i}(\boldsymbol{g}') \}  \pi(\boldsymbol{g}')  d\boldsymbol{g}'    \]
is a bounded function of $\boldsymbol{g}$,  
$\bar K(\cdot) =  \int   K\left(t \right) K\left(\cdot  + t\right) dt $. The variance of the N-IPS estimator is given as
	\begin{align*}
		 \text{Var}(\cL_{\mathrm{IPS}}^\mathrm{N}(\hat \bR))  
	={}& \frac{1}{|\cD| h }  \int \varphi(\boldsymbol{g}) \pi(\boldsymbol{g})d\boldsymbol{g} + o(\frac{1}{|\cD| h } ),
	\end{align*}	
where 	
	\[   \varphi(\boldsymbol{g}) =  \int \frac{1}{p_{u,i}(\boldsymbol{g}')}\cdot  \bar K(\frac{\boldsymbol{g}-\boldsymbol{g}'}{h})  \cdot  \delta_{u,i}(\boldsymbol{g})  \delta_{u,i}(\boldsymbol{g}')  \pi(\boldsymbol{g}')  d\boldsymbol{g}'    \]
is a bounded function of 	$\boldsymbol{g}$. 
\end{theorem}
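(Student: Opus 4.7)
}

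The plan is to exploit the fact that both $\cL_{\mathrm{IPS}}^\mathrm{N}$ and $\cL_{\mathrm{DR}}^\mathrm{N}$ are integrals over $\boldsymbol{g}$ of averages of i.i.d.\ (across $(u,i)\in\cD$) kernel-weighted terms, and reduce the problem to a standard Nadaraya--Watson-type bias/variance expansion at each fixed $\boldsymbol{g}$, then integrate against $\pi(\boldsymbol{g})$. First I would analyze the pointwise (in $\boldsymbol{g}$) estimators $\cL_{\mathrm{IPS}}^\mathrm{N}(\hat\bR\mid\boldsymbol{g})$ and $\cL_{\mathrm{DR}}^\mathrm{N}(\hat\bR\mid\boldsymbol{g})$, and only at the end integrate with $\pi(\boldsymbol{g})$ to pass from conditional to marginal statements.

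For the bias of N-IPS at fixed $\boldsymbol{g}$, I would condition on $x_{u,i}$ (so that $\delta_{u,i}(\boldsymbol{g})$, $p_{u,i}(\boldsymbol{g})$ and $\hat r_{u,i}$ become deterministic) and compute
\[
\bfE\Big[\tfrac{1}{h}\mathbb{I}(o_{u,i}=1)K\big((\boldsymbol{g}_{u,i}-\boldsymbol{g})/h\big)\,\big|\,x_{u,i}\Big]
=\int K(t)\,p(o=1,\boldsymbol{g}_{u,i}=\boldsymbol{g}+ht\mid x_{u,i})\,dt,
\]
using the change of variables $t=(\boldsymbol{g}_{u,i}-\boldsymbol{g})/h$. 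By Assumption \ref{assumption_4}(c) I would Taylor-expand the integrand to second order in $h$; symmetry of $K$ kills the first-order term, and the second-order term yields the $\tfrac12\mu_2 h^2\,\partial^2 p_{u,i}(\boldsymbol{g})/\partial\boldsymbol{g}^2$ contribution, with an $o(h^2)$ remainder controlled by Assumption \ref{assumption_4}(a). Dividing by $p_{u,i}(\boldsymbol{g})$, multiplying by $\delta_{u,i}(\boldsymbol{g})$, taking outer expectation over $x_{u,i}$, subtracting $\cL_{\mathrm{ideal}}^\mathrm{N}(\hat\bR\mid\boldsymbol{g})=\bfE[\delta_{u,i}(\boldsymbol{g})]$, and finally integrating in $\boldsymbol{g}$ against $\pi$, gives the N-IPS bias formula. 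For N-DR the identical calculation applies with $\delta_{u,i}(\boldsymbol{g})$ replaced by the residual $\delta_{u,i}(\boldsymbol{g})-\hat\delta_{u,i}(\boldsymbol{g})$; here the key observation is that the pure imputation term $|\cD|^{-1}\sum\hat\delta_{u,i}(\boldsymbol{g})$ is unbiased for $\bfE[\hat\delta_{u,i}(\boldsymbol{g})]$ with no $h$ dependence, so it does not contribute to the bias expansion, yielding the stated result with $\delta_{u,i}(\boldsymbol{g})$ replaced by $\delta_{u,i}(\boldsymbol{g})-\hat\delta_{u,i}(\boldsymbol{g})$.

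For the variance, by independence across $(u,i)$ the variance factor is $1/|\cD|$ times the variance of a single summand $S_{u,i}:=\int \tfrac{1}{h}\mathbb{I}(o_{u,i}=1)K((\boldsymbol{g}_{u,i}-\boldsymbol{g})/h)\,\{\delta_{u,i}(\boldsymbol{g})-\hat\delta_{u,i}(\boldsymbol{g})\}/p_{u,i}(\boldsymbol{g})\,\pi(\boldsymbol{g})\,d\boldsymbol{g}$ (plus the imputation piece, whose variance is $O(|\cD|^{-1})$ and absorbed into $o((|\cD|h)^{-1})$). At leading order $\mathrm{Var}(S_{u,i})\sim \bfE[S_{u,i}^2]$, since $(\bfE S_{u,i})^2=O(1)$ whereas $\bfE S_{u,i}^2$ diverges like $h^{-1}$. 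Expanding the square produces a double integral in $(\boldsymbol{g},\boldsymbol{g}')$ of $K((\boldsymbol{g}_{u,i}-\boldsymbol{g})/h)K((\boldsymbol{g}_{u,i}-\boldsymbol{g}')/h)/h^2$; conditioning on $x_{u,i}$ and integrating out $\boldsymbol{g}_{u,i}$ by the substitution $t=(\boldsymbol{g}_{u,i}-\boldsymbol{g})/h$ gives $\tfrac{1}{h}\bar K\big((\boldsymbol{g}-\boldsymbol{g}')/h\big)\,p_{u,i}(\boldsymbol{g})$ at leading order, where $\bar K$ is the kernel self-convolution defined in the statement. Cancelling one factor of $p_{u,i}(\boldsymbol{g})$ against the inverse propensity and dividing by $|\cD|$ yields exactly $\tfrac{1}{|\cD|h}\int\psi(\boldsymbol{g})\pi(\boldsymbol{g})d\boldsymbol{g}+o(1/(|\cD|h))$. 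The N-IPS variance is the same argument without the $-\hat\delta_{u,i}(\boldsymbol{g})$ subtraction, giving $\varphi$ in place of $\psi$.

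The main obstacle is making the $o(h^2)$ and $o(1/(|\cD|h))$ remainders rigorous uniformly over $\boldsymbol{g}$ so that Fubini can swap $\int\cdot\,\pi(\boldsymbol{g})d\boldsymbol{g}$ with the expansion and no extra constants are produced. This requires uniform bounds on $\partial^2 p/\partial\boldsymbol{g}^2$ (from Assumption \ref{assumption_4}(c)), on $1/p_{u,i}(\boldsymbol{g})$ and $|\delta_{u,i}(\boldsymbol{g})-\hat\delta_{u,i}(\boldsymbol{g})|$ (Assumption \ref{assumption_5}, later invoked), and on $K$, together with Assumption \ref{assumption_4}(a)--(b) to control $h\to 0$ and $|\cD|h\to\infty$. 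A secondary technicality is showing that cross-terms between the imputation part $\hat\delta_{u,i}(\boldsymbol{g})$ and the residual-reweighted part of the N-DR estimator contribute only $o(1/(|\cD|h))$ to the variance, which follows because the imputation part has bounded per-sample variance while the residual part scales as $h^{-1}$.
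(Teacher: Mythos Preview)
Your proposal is correct and follows essentially the same route as the paper: condition on $x_{u,i}$, change variables $t=(\boldsymbol{g}_{u,i}-\boldsymbol{g})/h$, Taylor-expand the joint density to second order for the bias, and for the variance expand the square into a double integral in $(\boldsymbol{g},\boldsymbol{g}')$ to produce the kernel self-convolution $\bar K$. One small wording fix: after conditioning on $x_{u,i}$ the quantity $\delta_{u,i}(\boldsymbol{g})$ is not literally deterministic (it still depends on the potential outcome $r_{u,i}(1,\boldsymbol{g})$), but Assumption~\ref{assumption_3} gives the conditional independence $r_{u,i}(1,\boldsymbol{g})\indep(o_{u,i},\boldsymbol{g}_{u,i})\mid x_{u,i}$ that lets you factor $\bfE\{\delta_{u,i}(\boldsymbol{g})\mid x_{u,i}\}$ out of the kernel integral, which is exactly how the paper proceeds.
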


\begin{proof}[Proof of Theorem 3]

(a) We first show the bias of the N-IPS estimator, and the bias of the N-DR estimator can be shown similarly.  
For a given $\boldsymbol{g}$, we have
\begin{align*}
   \bfE[\cL_{\mathrm{IPS}}^\mathrm{N}(\hat \bR | \boldsymbol{g})] ={}& \bfE\left [ \frac{ \mathbb{I}(o_{u, i}=1)\cdot K\left((\boldsymbol{g}_{u,i} -\boldsymbol{g})/h \right) \cdot   \delta_{u,i}(\boldsymbol{g})  }{ h \cdot  p_{u,i}(\boldsymbol{g})  } \right] \\
   ={}& \bfE\left [    \frac{1}{   p_{u,i}(\boldsymbol{g})  }  \bfE \Big \{  o_{u, i} \cdot \frac 1 h  K\left( \frac{\boldsymbol{g}_{u,i} -\boldsymbol{g}}{h} \right)  \Big | x_{u,i} \Big \} \cdot \bfE\{  \delta_{u,i}(\boldsymbol{g}) | x_{u,i}  \}    \right    ]   \\
   ={}&  \bfE\left [  \frac{1}{   p_{u,i}(\boldsymbol{g})  }  \int \int o_{u,i} \frac{1}{h} K( \frac{ \boldsymbol{g}_{u,i} -\boldsymbol{g}}{ h} ) p(o_{u,i}, \boldsymbol{g}_{u,i} | x_{u,i} )  d o_{u,i} d \boldsymbol{g}_{u,i}  \cdot \bfE\{  \delta_{u,i}(\boldsymbol{g}) | x_{u,i}  \}    \right    ] \\
   ={}&  \bfE\left [  \frac{1}{   p_{u,i}(\boldsymbol{g})  }  \int \frac{1}{h} K( \frac{ \boldsymbol{g}_{u,i} -\boldsymbol{g}}{ h} ) p(o_{u,i} = 1, \boldsymbol{g}_{u,i} | x_{u,i} ) d \boldsymbol{g}_{u,i}  \cdot \bfE\{  \delta_{u,i}(\boldsymbol{g}) | x_{u,i}  \}    \right    ],
\end{align*}
where the second equation follows from Assumption \ref{assumption_3}. Letting $t =(\boldsymbol{g}_{u,i} - \boldsymbol{g}) / h$, we have $\boldsymbol{g}_{u,i} = \boldsymbol{g}+ h t$ and $d \boldsymbol{g}_{u,i} = hdt$,  and then 
	\begin{align*}
  &  \bfE\left [  \frac{1}{   p_{u,i}(\boldsymbol{g})  }  \int \frac{1}{h} K( \frac{ \boldsymbol{g}_{u,i} -\boldsymbol{g}}{ h} ) p(o_{u,i} = 1, \boldsymbol{g}_{u,i} | x_{u,i} ) d \boldsymbol{g}_{u,i}  \cdot \bfE\{  \delta_{u,i}(\boldsymbol{g}) | x_{u,i}  \}    \right    ] \\
={}& 	 \bfE\left [  \frac{1}{  p_{u,i}(\boldsymbol{g})  }  \int K(t) p(o_{u,i} = 1,  \boldsymbol{g} + ht  | x_{u,i} ) dt  \cdot \bfE\{  \delta_{u,i}(\boldsymbol{g}) | x_{u,i}  \}    \right    ]  \\
={}&  \bfE \Big [  \frac{1}{  p_{u,i}(\boldsymbol{g})  }  \int K(t) \Big \{ p(o_{u,i} = 1,  \boldsymbol{g}   | x_{u,i})  + \frac{\partial p(o_{u,i} = 1,  \boldsymbol{g}   | x_{u,i})}{\partial \boldsymbol{g}  } h t     \\
     {}&+  \frac{\partial^2 p(o_{u,i} = 1,  \boldsymbol{g}   | x_{u,i})}{\partial \boldsymbol{g}^2  } \frac{ h^2 t^2 }{2}   + o(h^2) \Big \} dt  \cdot \bfE\{  \delta_{u,i}(\boldsymbol{g})  | x_{u,i}  \}    \Big  ]  \\
={}&   \bfE\left [  \frac{1}{  p_{u,i}(\boldsymbol{g})  }  \int K(t)dt \cdot p(o_{u,i} = 1,  \boldsymbol{g}  | x_{u,i} ) \cdot \bfE\{  \delta_{u,i}(\boldsymbol{g}) | x_{u,i}  \}    \right    ]   \\
{}& +  \bfE \Big [  \frac{1}{  p_{u,i}(\boldsymbol{g})  }  \int K(t) t dt \cdot \frac{\partial p(o_{u,i} = 1,  \boldsymbol{g}   | x_{u,i})}{\partial \boldsymbol{g}  } h \cdot \bfE\{  \delta_{u,i}(\boldsymbol{g}) | x_{u,i}  \}    \Big ] \\
{}& + \bfE \Big [  \frac{1}{  p_{u,i}(\boldsymbol{g})  }  \int K(t) t^2 dt \cdot \frac{\partial^2 p(o_{u,i} = 1,  \boldsymbol{g}   | x_{u,i})}{\partial \boldsymbol{g}^2  } \frac{h^2}{2} \cdot  \bfE\{  \delta_{u,i}(\boldsymbol{g}) | x_{u,i}  \}  \Big ]  + o(h^2)  \\
={}& \bfE\left [  \bfE\{  \delta_{u,i}(\boldsymbol{g}) | x_{u,i}  \}    \right    ] +   \frac{1}{2}\mu_2 \bfE \Big [\frac{\partial^2 p(o_{u,i} = 1,  \boldsymbol{g}   | x_{u,i})}{\partial \boldsymbol{g}^2  } \cdot \delta_{u,i}(\boldsymbol{g}) \Big ]   h^2 +  o(h^2) \\
={}& \bfE\left[ \delta_{u,i}(\boldsymbol{g})    \right    ] +   \frac{1}{2}\mu_2 \bfE \Big [\frac{\partial^2 p(o_{u,i} = 1,  \boldsymbol{g}   | x_{u,i})}{\partial \boldsymbol{g}^2  } \cdot \delta_{u,i}(\boldsymbol{g}) \Big ]   h^2  +  o(h^2) \\
={}& \cL_{\mathrm{ideal}}^\mathrm{N}(\hat \bR | \boldsymbol{g}) +     \frac{1}{2}\mu_2 \bfE \Big [\frac{\partial^2 p(o_{u,i} = 1,  \boldsymbol{g}   | x_{u,i})}{\partial \boldsymbol{g}^2  } \cdot \delta_{u,i}(\boldsymbol{g}) \Big ]   h^2 +  o(h^2),
	\end{align*}
where the third equation is a Taylor expansion of $p(o_{u,i} = 1,  \boldsymbol{g} + ht  | x_{u,i} )$ under Assumption \ref{assumption_4}(a).   
Thus, the bias of $\cL_{\mathrm{IPS}}^\mathrm{N}(\hat \bR)$  is 
	\begin{align*}
	 &	 \bfE[  \cL_{\mathrm{IPS}}^\mathrm{N}(\hat \bR) ]  - \cL_{\mathrm{ideal}}^\mathrm{N}(\hat \bR) \\
	  ={}& \bfE \left [  \int_{\boldsymbol{g}} \big \{ \cL_{\mathrm{IPS}}^\mathrm{N}(\hat \bR | \boldsymbol{g}) -  \cL_{\mathrm{ideal}}^\mathrm{N}(\hat \bR | \boldsymbol{g})  \big \}  \pi(\boldsymbol{g})d\boldsymbol{g} \right ] \\
		 ={}&   \int_{\boldsymbol{g}} \bfE \big \{ \cL_{\mathrm{IPS}}^\mathrm{N}(\hat \bR | \boldsymbol{g}) -  \cL_{\mathrm{ideal}}^\mathrm{N}(\hat \bR | \boldsymbol{g})  \big \}  \pi(\boldsymbol{g}) d\boldsymbol{g} \\
 	={}&  \frac{1}{2}\mu_2  \int \bfE\Big [\frac{\partial^2 p(o_{u,i} = 1,  \boldsymbol{g}   | x_{u,i})}{\partial \boldsymbol{g}^2  } \cdot \delta_{u,i}(\boldsymbol{g}) \Big ]  \pi(\boldsymbol{g})d\boldsymbol{g} \cdot  h^2 +  o(h^2).
	\end{align*}
Likewise, for a given $\boldsymbol{g}$ and $\hat \delta_{u,i}(\boldsymbol{g})$, by a similar argument of proof of the bias of N-IPS estimator,  
\begin{align*}
   &  \bfE[\cL_{\mathrm{DR}}^\mathrm{N}(\hat \bR |  \boldsymbol{g})] \\
    ={}& \bfE\left [   \delta_{u,i}(\boldsymbol{g})  +   \frac{ \mathbb{I}(o_{u, i}=1)\cdot K\left((\boldsymbol{g}_{u,i} -\boldsymbol{g})/h \right) \cdot  \{ \delta_{u,i}(\boldsymbol{g}) - \hat \delta_{u,i}(\boldsymbol{g})\} }{ h \cdot  p_{u,i}(\boldsymbol{g})  } \right] \\
   ={}&   \cL_{\mathrm{ideal}}^\mathrm{N}(\hat \bR | \boldsymbol{g}) +     \bfE\left [    \frac{1}{   p_{u,i}(\boldsymbol{g})  }  \bfE \Big \{  o_{u, i} \cdot \frac 1 h  K\left( \frac{\boldsymbol{g}_{u,i} -\boldsymbol{g}}{h} \right)  \Big | x_{u,i} \Big \} \cdot \bfE\{  \delta_{u,i}(\boldsymbol{g})- \hat \delta_{u,i}(\boldsymbol{g}) | x_{u,i}  \}    \right    ]   \\
={}& \cL_{\mathrm{ideal}}^\mathrm{N}(\hat \bR | \boldsymbol{g}) +     \frac{1}{2}\mu_2 \bfE \Big [\frac{\partial^2 p(o_{u,i} = 1,  \boldsymbol{g}   | x_{u,i})}{\partial \boldsymbol{g}^2  } \cdot \{  \delta_{u,i}(\boldsymbol{g})- \hat \delta_{u,i}(\boldsymbol{g})   \} \Big ]   h^2 +  o(h^2). 
	\end{align*}
Thus, the bias of $\cL_{\mathrm{DR}}^\mathrm{N}(\hat \bR)$ is given as   
	\begin{align*}
  \frac{1}{2}\mu_2  \int \bfE\Big [\frac{\partial^2 p(o_{u,i} = 1,  \boldsymbol{g}   | x_{u,i})}{\partial \boldsymbol{g}^2  } \cdot \{  \delta_{u,i}(\boldsymbol{g})- \hat \delta_{u,i}(\boldsymbol{g})   \} \Big ]  \pi(\boldsymbol{g})d\boldsymbol{g} \cdot  h^2 +  o(h^2).
	\end{align*}

(b) By definition, the variance of $\cL_{\mathrm{IPS}}^\mathrm{N}(\hat \bR)$ can be represented as  
	\begin{align}
		 & \text{Var}(\cL_{\mathrm{IPS}}^\mathrm{N}(\hat \bR))  \notag \\
		={}& \text{Var}\left[     \int  \cL_{\mathrm{IPS}}^\mathrm{N}(\hat \bR | \boldsymbol{g}) \pi(\boldsymbol{g})d\boldsymbol{g}   \right] \notag	\\
		={}& \text{Var} \left [ \frac{1}{|\cD|} \sum_{(u,i)\in \cD} \int \frac{\mathbb{I}(o_{u, i}=1)}{p_{u,i}(\boldsymbol{g})}\cdot \frac{1}{h} K\left(\frac{\boldsymbol{g}_{u,i} -\boldsymbol{g}}{h} \right) \cdot   \delta_{u,i}(\boldsymbol{g})  \pi(\boldsymbol{g}) d\boldsymbol{g}  \right]  \notag  \\
		={}& \frac{1}{|\cD|}  \text{Var} \left [  \int \frac{\mathbb{I}(o_{u, i}=1)}{p_{u,i}(\boldsymbol{g})}\cdot \frac{1}{h} K\left(\frac{\boldsymbol{g}_{u,i} -\boldsymbol{g}}{h} \right) \cdot   \delta_{u,i}(\boldsymbol{g})  \pi(\boldsymbol{g}) d\boldsymbol{g}  \right]   \notag    \\
		={}& \frac{1}{|\cD|}   \Biggl [ \bfE\left \{ \left ( \int \frac{\mathbb{I}(o_{u, i}=1)}{p_{u,i}(\boldsymbol{g})}\cdot \frac{1}{h} K\left(\frac{\boldsymbol{g}_{u,i} -\boldsymbol{g}}{h} \right) \cdot  \delta_{u,i}(\boldsymbol{g})  \pi(\boldsymbol{g}) d\boldsymbol{g} \right)^2 \right \} \notag \\
		{}&- \left \{  \bfE \left (   \int \frac{\mathbb{I}(o_{u, i}=1)}{p_{u,i}(\boldsymbol{g})}\cdot \frac{1}{h} K\left(\frac{\boldsymbol{g}_{u,i} -\boldsymbol{g}}{h} \right) \cdot  \delta_{u,i}(\boldsymbol{g})  \pi(\boldsymbol{g}) d\boldsymbol{g}  \right ) \right \}^2 \Biggr ].      \label{opt-band-eq1}
	\end{align}	
According to the above result of the bias of the N-IPS estimator, we have
	\begin{align} 
	&  \left \{  \bfE \left (   \int \frac{\mathbb{I}(o_{u, i}=1)}{p_{u,i}(\boldsymbol{g})}\cdot \frac{1}{h} K\left(\frac{\boldsymbol{g}_{u,i} -\boldsymbol{g}}{h} \right) \cdot  \delta_{u,i}(\boldsymbol{g})  \pi(\boldsymbol{g}) d\boldsymbol{g}  \right ) \right \}^2 \notag  \\
	 ={}&   \left \{   \cL_{\mathrm{ideal}}^\mathrm{N}(\hat \bR)  +      \frac{1}{2}\mu_2  \int \bfE\Big [\frac{\partial^2 p(o_{u,i} = 1,  \boldsymbol{g}   | x_{u,i})}{\partial \boldsymbol{g}^2  } \cdot  \delta_{u,i}(\boldsymbol{g})  \Big ]  \pi(\boldsymbol{g})d\boldsymbol{g} \cdot  h^2 +  o(h^2)   \right \}^2 \notag \\
	 ={}&    [\cL_{\mathrm{ideal}}^\mathrm{N}(\hat \bR)]^2 + O(h^2).     \label{opt-band-eq2}
	\end{align} 
Then, we focus on analyzing the following term 
	$$ \bfE\left \{ \left ( \int \frac{\mathbb{I}(o_{u, i}=1)}{p_{u,i}(\boldsymbol{g})}\cdot \frac{1}{h} K\left(\frac{\boldsymbol{g}_{u,i} -\boldsymbol{g}}{h} \right) \cdot  \delta_{u,i}(\boldsymbol{g})  \pi(\boldsymbol{g}) d\boldsymbol{g} \right)^2 \right \}.$$ 
We can observe that 	
	\begin{align*}
	& \left ( \int \frac{\mathbb{I}(o_{u, i}=1)}{p_{u,i}(\boldsymbol{g})}\cdot \frac{1}{h} K\left(\frac{\boldsymbol{g}_{u,i} -\boldsymbol{g}}{h} \right) \cdot  \delta_{u,i}(\boldsymbol{g})  \pi(\boldsymbol{g}) d\boldsymbol{g} \right)^2  \\
={}& 	\left ( \int \frac{\mathbb{I}(o_{u, i}=1)}{p_{u,i}(\boldsymbol{g})}\cdot \frac{1}{h} K\left(\frac{\boldsymbol{g}_{u,i} -\boldsymbol{g}}{h} \right) \cdot  \delta_{u,i}(\boldsymbol{g})  \pi(\boldsymbol{g}) d\boldsymbol{g} \right)  \\
{}&   \cdot \left ( \int \frac{\mathbb{I}(o_{u, i}=1)}{p_{u,i}(\boldsymbol{g}')}\cdot \frac{1}{h} K\left(\frac{\boldsymbol{g}_{u,i} -\boldsymbol{g}'}{h} \right) \cdot  \delta_{u,i}(\boldsymbol{g}')  \pi(\boldsymbol{g}') d\boldsymbol{g}' \right) \\
={}& \int \int  \frac{\mathbb{I}(o_{u, i}=1)}{p_{u,i}(\boldsymbol{g}) p_{u,i}(\boldsymbol{g}')}\cdot \frac{1}{h^2} K\left(\frac{\boldsymbol{g}_{u,i} -\boldsymbol{g}}{h} \right) K\left(\frac{\boldsymbol{g}_{u,i} -\boldsymbol{g}'}{h} \right) \cdot  \delta_{u,i}(\boldsymbol{g})  \delta_{u,i}(\boldsymbol{g}')  \pi(\boldsymbol{g}) \pi(\boldsymbol{g}')   d\boldsymbol{g} d\boldsymbol{g}', 
	\end{align*}
	then, we swap the order of integration and expectation, which leads to that 
	\begin{align*}
	& \bfE\left \{ \left ( \int \frac{\mathbb{I}(o_{u, i}=1)}{p_{u,i}(\boldsymbol{g})}\cdot \frac{1}{h} K\left(\frac{\boldsymbol{g}_{u,i} -\boldsymbol{g}}{h} \right) \cdot  \delta_{u,i}(\boldsymbol{g})  \pi(\boldsymbol{g}) d\boldsymbol{g} \right)^2 \right \} \\
	={}&   \int \int  \bfE \left [ \frac{\mathbb{I}(o_{u, i}=1)}{p_{u,i}(\boldsymbol{g}) p_{u,i}(\boldsymbol{g}')}\cdot \frac{1}{h^2} K\left(\frac{\boldsymbol{g}_{u,i} -\boldsymbol{g}}{h} \right) K\left(\frac{\boldsymbol{g}_{u,i} -\boldsymbol{g}'}{h} \right) \cdot  \delta_{u,i}(\boldsymbol{g})  \delta_{u,i}(\boldsymbol{g}') \right ]  \pi(\boldsymbol{g}) \pi(\boldsymbol{g}')   d\boldsymbol{g} d\boldsymbol{g}'.  
	\end{align*}
Let $\boldsymbol{g}_{u,i} = \boldsymbol{g} + h t$, then $\boldsymbol{g}_{u,i} - \boldsymbol{g}' = (\boldsymbol{g} - \boldsymbol{g}') + ht$. We have 
	\begin{align*}
	& \bfE \left [ \frac{\mathbb{I}(o_{u, i}=1)}{p_{u,i}(\boldsymbol{g}) p_{u,i}(\boldsymbol{g}')}\cdot \frac{1}{h^2} K\left(\frac{\boldsymbol{g}_{u,i} -\boldsymbol{g}}{h} \right) K\left(\frac{\boldsymbol{g}_{u,i} -\boldsymbol{g}'}{h} \right) \cdot  \delta_{u,i}(\boldsymbol{g})  \delta_{u,i}(\boldsymbol{g}') \right ] \\
={}&   \bfE \left [ \frac{1}{p_{u,i}(\boldsymbol{g}) p_{u,i}(\boldsymbol{g}')}\cdot  \bfE \left \{ \mathbb{I}(o_{u, i}=1) \frac{1}{h^2} K\left(\frac{\boldsymbol{g}_{u,i} -\boldsymbol{g}}{h} \right) K\left(\frac{\boldsymbol{g}_{u,i} -\boldsymbol{g}'}{h} \right) \Big | x_{u,i} \right \} \cdot  \bfE \left \{ \delta_{u,i}(\boldsymbol{g})  \delta_{u,i}(\boldsymbol{g}')  \Big | x_{u,i}\right \} \right ]  	  \\
={}&   \bfE \left [ \frac{1}{p_{u,i}(\boldsymbol{g}) p_{u,i}(\boldsymbol{g}')}\cdot  \int \left \{ \frac{1}{h} K\left(t \right) K\left(\frac{\boldsymbol{g}-\boldsymbol{g}'}{h}  + t\right) p(o_{u,i}=1, \boldsymbol{g} + ht | x_{u,i}) \right \} dt \cdot  \bfE \left \{ \delta_{u,i}(\boldsymbol{g})  \delta_{u,i}(\boldsymbol{g}')  \Big | x_{u,i}\right \} \right ]\\
={}&     \bfE \left [ \frac{1}{p_{u,i}(\boldsymbol{g}) p_{u,i}(\boldsymbol{g}')}\cdot  \int \left \{ \frac{1}{h} K\left(t \right) K\left(\frac{\boldsymbol{g}-\boldsymbol{g}'}{h}  + t\right) p(o_{u,i}=1, \boldsymbol{g}  | x_{u,i}) +  O(h) t \right \} dt \cdot  \bfE \left \{ \delta_{u,i}(\boldsymbol{g})  \delta_{u,i}(\boldsymbol{g}')  \Big | x_{u,i}\right \} \right ]\\
={}&   \bfE \left [ \frac{1}{p_{u,i}(\boldsymbol{g}')}\cdot  \int  \frac{1}{h} K\left(t \right) K\left(\frac{\boldsymbol{g}-\boldsymbol{g}'}{h}  + t\right) dt    \cdot  \bfE \left \{ \delta_{u,i}(\boldsymbol{g})  \delta_{u,i}(\boldsymbol{g}')  \Big | x_{u,i}\right \} \right ] \cdot \{1 + O(h)  \} \\
={}& \bfE \left [ \frac{1}{p_{u,i}(\boldsymbol{g}')}\cdot  \int  \frac{1}{h} K\left(t \right) K\left(\frac{\boldsymbol{g}-\boldsymbol{g}'}{h}  + t\right) dt    \cdot  \delta_{u,i}(\boldsymbol{g})  \delta_{u,i}(\boldsymbol{g}')  \right ] \cdot \{1 + O(h)  \}. 
	\end{align*} 
Denote $ \int   K\left(t \right) K\left(\frac{\boldsymbol{g}-\boldsymbol{g}'}{h}  + t\right) dt   =  \bar K(\frac{\boldsymbol{g}-\boldsymbol{g}'}{h})$, then 
		\begin{align}
	& \bfE\left \{ \left ( \int \frac{\mathbb{I}(o_{u, i}=1)}{p_{u,i}(\boldsymbol{g})}\cdot \frac{1}{h} K\left(\frac{\boldsymbol{g}_{u,i} -\boldsymbol{g}}{h} \right) \cdot  \delta_{u,i}(\boldsymbol{g})  \pi(\boldsymbol{g}) d\boldsymbol{g} \right)^2 \right \}  \notag \\
={}&        \int \int  \bfE \left [ \frac{1}{p_{u,i}(\boldsymbol{g}')}\cdot \frac{1}{h} \bar K(\frac{\boldsymbol{g}-\boldsymbol{g}'}{h})  \cdot  \delta_{u,i}(\boldsymbol{g})  \delta_{u,i}(\boldsymbol{g}')  \right ] \cdot \{1 + O(h)  \}.  \pi(\boldsymbol{g}) \pi(\boldsymbol{g}')   d\boldsymbol{g} d\boldsymbol{g}'  \notag\\
={}&     \int  \bfE  \left [ \int \frac{1}{p_{u,i}(\boldsymbol{g}')}\cdot \frac{1}{h} \bar K(\frac{\boldsymbol{g}-\boldsymbol{g}'}{h})  \cdot  \delta_{u,i}(\boldsymbol{g})  \delta_{u,i}(\boldsymbol{g}')  \pi(\boldsymbol{g}')  d\boldsymbol{g}'  \right ] \cdot \{1 + O(h)  \}.  \pi(\boldsymbol{g})   d\boldsymbol{g}  \notag\\ 
	\triangleq{}& \int \varphi(\boldsymbol{g}) \pi(\boldsymbol{g})d\boldsymbol{g} \frac{1}{h} + O(1), \label{opt-band-eq3}
	\end{align} 	
where 
	\[   \varphi(\boldsymbol{g}) =  \int \frac{1}{p_{u,i}(\boldsymbol{g}')}\cdot  \bar K(\frac{\boldsymbol{g}-\boldsymbol{g}'}{h})  \cdot  \delta_{u,i}(\boldsymbol{g})  \delta_{u,i}(\boldsymbol{g}')  \pi(\boldsymbol{g}')  d\boldsymbol{g}'    \]
is a bounded function of 	$\boldsymbol{g}$. 

Combing equations (\ref{opt-band-eq1}), (\ref{opt-band-eq2}), and (\ref{opt-band-eq3}) gives that 
	\begin{align*}
		 \text{Var}(\cL_{\mathrm{IPS}}^\mathrm{N}(\hat \bR))  
	={}&  \frac{1}{|\cD|} \left [   \int \varphi(\boldsymbol{g}) \pi(\boldsymbol{g})d\boldsymbol{g} \frac{1}{h} + O(1)      -    [\cL_{\mathrm{ideal}}^\mathrm{N}(\hat \bR)]^2 + O(h^2)   \right ]	 \\
	={}& \frac{1}{|\cD| h }  \int \varphi(\boldsymbol{g}) \pi(\boldsymbol{g})d\boldsymbol{g} + o(\frac{1}{|\cD| h } ). 
	\end{align*}	 
Similarly, the variance of the N-DR estimator is given by 
	 \[ \text{Var}(\cL_{\mathrm{DR}}^\mathrm{N}(\hat \bR))  =  \frac{1}{|\cD| h }  \int \psi(\boldsymbol{g}) \pi(\boldsymbol{g})d\boldsymbol{g} + o(\frac{1}{|\cD| h } ), \]
where 	\[   \psi(\boldsymbol{g}) =  \int \frac{1}{p_{u,i}(\boldsymbol{g}')}\cdot  \bar K(\frac{\boldsymbol{g}-\boldsymbol{g}'}{h})  \cdot \{ \delta_{u,i}(\boldsymbol{g}) - \hat \delta_{u,i}(\boldsymbol{g})\}  \{ \delta_{u,i}(\boldsymbol{g}') - \hat \delta_{u,i}(\boldsymbol{g}') \}  \pi(\boldsymbol{g}')  d\boldsymbol{g}'    \]
is a bounded function of $\boldsymbol{g}$.  

\end{proof}

\subsection{Proof of Theorem 4}  \label{app-opt-band}

\begin{theorem}[Optimal Bandwidth of N-IPS and N-DR]   Under Assumptions \ref{assumption_1}--\ref{assumption_4}, 

(a) the optimal bandwidth for the N-IPS estimator in terms of the asymptotic mean-squared error  is 
	\[ h^*_{\mathrm{N-IPS}} =   \left [  \frac{ \int \varphi(\boldsymbol{g}) \pi(\boldsymbol{g})d\boldsymbol{g}  }{ 4 |\cD| \left ( \frac{1}{2}\mu_2  \int \bfE\Big [\frac{\partial^2 p(o_{u,i} = 1,  \boldsymbol{g}_{u,i} =  \boldsymbol{g}   | x_{u,i})}{\partial \boldsymbol{g}^2  } \cdot \delta_{u,i}(\boldsymbol{g}) \Big ]  \pi(\boldsymbol{g})d\boldsymbol{g} \right )^2 }   \right ]^{1/5},  \]
where $\varphi(\boldsymbol{g})$ is defined in Theorem \ref{thm-bias};


(b) the optimal bandwidth for the N-DR estimator in terms of the asymptotic mean-squared error  is 
	\[ h^*_{\mathrm{N-DR}} =   \left [  \frac{ \int   \psi(\boldsymbol{g}) \pi(\boldsymbol{g})d\boldsymbol{g}  }{ 4 |\cD| \left ( \frac{1}{2}\mu_2  \int \bfE\Big [\frac{\partial^2 p(o_{u,i} = 1,  \boldsymbol{g}   | x_{u,i})}{\partial \boldsymbol{g}^2  } \cdot \{ \delta_{u,i}(\boldsymbol{g}) - \hat \delta_{u,i}(\boldsymbol{g}) \} \Big ]  \pi(\boldsymbol{g})d\boldsymbol{g} \right )^2 }   \right ]^{1/5}.\]
\end{theorem}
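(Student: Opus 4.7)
The plan is to derive the optimal bandwidth by minimizing the asymptotic mean-squared error (MSE) of each estimator with respect to $h$, using the bias and variance expressions already established in Theorem~\ref{thm-bias}. This is a standard bias--variance trade-off argument in kernel smoothing.

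First, I would write the asymptotic MSE of the N-DR estimator as the sum of the squared bias and the variance, namely
\begin{equation*}
\mathrm{MSE}(\cL_{\mathrm{DR}}^\mathrm{N}(\hat \bR)) = \bigl[\text{Bias}(\cL_{\mathrm{DR}}^\mathrm{N}(\hat \bR))\bigr]^2 + \text{Var}(\cL_{\mathrm{DR}}^\mathrm{N}(\hat \bR)) = A_{\mathrm{DR}}^2 \, h^4 + \frac{B_{\mathrm{DR}}}{|\cD|\, h} + o(h^4) + o\bigl(\tfrac{1}{|\cD| h}\bigr),
\end{equation*}
where $A_{\mathrm{DR}} = \frac{1}{2}\mu_2 \int \bfE\bigl[\frac{\partial^2 p(o_{u,i}=1,\boldsymbol{g}_{u,i}=\boldsymbol{g}\mid x_{u,i})}{\partial \boldsymbol{g}^2} \cdot \{\delta_{u,i}(\boldsymbol{g}) - \hat{\delta}_{u,i}(\boldsymbol{g})\}\bigr] \pi(\boldsymbol{g}) d\boldsymbol{g}$ and $B_{\mathrm{DR}} = \int \psi(\boldsymbol{g}) \pi(\boldsymbol{g}) d\boldsymbol{g}$ are the leading constants. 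An analogous decomposition holds for N-IPS with $A_{\mathrm{IPS}}$ obtained by replacing $\delta_{u,i}(\boldsymbol{g}) - \hat{\delta}_{u,i}(\boldsymbol{g})$ by $\delta_{u,i}(\boldsymbol{g})$ and $B_{\mathrm{IPS}} = \int \varphi(\boldsymbol{g}) \pi(\boldsymbol{g}) d\boldsymbol{g}$.

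Next, differentiating the leading-order MSE with respect to $h$ and setting the derivative to zero yields the first-order condition
\begin{equation*}
4 A^2 h^3 - \frac{B}{|\cD|\, h^2} = 0,
\end{equation*}
which gives $h^5 = B / (4 A^2 |\cD|)$, hence $h^\ast = \bigl[B/(4 A^2 |\cD|)\bigr]^{1/5}$. Substituting the appropriate $(A,B)$ pair for each estimator produces exactly the two formulas stated in Theorem~\ref{opt-band}. A second-derivative check confirms this critical point is a minimum, since $12 A^2 h^2 + 2B/(|\cD| h^3) > 0$.

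I expect no real obstacle here: the proof is a direct substitution followed by elementary calculus once Theorem~\ref{thm-bias} is in hand. The only subtle points are to (i) ensure that the $o(\cdot)$ remainders from the bias and variance expansions remain negligible at the minimizing $h$, which is immediate from $h^\ast = O(|\cD|^{-1/5})$ and Assumption~\ref{assumption_4}, and (ii) implicitly assume the leading constants $A$ and $B$ are nonzero so that the optimizer is well defined — this is a benign regularity condition on the kernel, propensity, and loss. Under these conditions the derivation is routine, and the conclusion for N-IPS follows by the exact same calculation with $(A_{\mathrm{IPS}}, B_{\mathrm{IPS}})$ in place of $(A_{\mathrm{DR}}, B_{\mathrm{DR}})$.
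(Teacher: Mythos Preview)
Your proposal is correct and follows essentially the same approach as the paper: write the asymptotic MSE as squared bias plus variance using Theorem~\ref{thm-bias}, then minimize the leading terms $A^2 h^4 + B/(|\cD|h)$ over $h$ to obtain $h^\ast = [B/(4A^2|\cD|)]^{1/5}$. Your write-up is in fact more explicit than the paper's, which simply states ``minimizing the leading terms of the above mean-squared error with respect to $h$ leads to'' the formula without displaying the first-order condition or second-derivative check.
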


\begin{proof}[Proof of Theorem 4]    
 Recall that 
	\begin{align*}
	\text{Bias}(\cL_{\mathrm{IPS}}^\mathrm{N}(\hat \bR))={}&  \frac{1}{2}\mu_2  \int \bfE\Big [\frac{\partial^2 p(o_{u,i} = 1,  \boldsymbol{g}   | x_{u,i})}{\partial \boldsymbol{g}^2  } \cdot \delta_{u,i}(\boldsymbol{g}) \Big ]  \pi(\boldsymbol{g})d\boldsymbol{g} \cdot  h^2 + o(h^2), \\  
	\text{Var}(\cL_{\mathrm{IPS}}^\mathrm{N}(\hat \bR))={}& \frac{1}{|\cD| h }  \int \varphi(\boldsymbol{g}) \pi(\boldsymbol{g})d\boldsymbol{g} + o(\frac{1}{|\cD| h } ). 
	\end{align*}

The mean-squared error of the N-IPS estimator is given as 
	\begin{align*}
		& \bfE\big [ \big ( \cL_{\mathrm{IPS}}^\mathrm{N}(\hat \bR) - \cL_{\mathrm{ideal}}^\mathrm{N}(\hat \bR) \big )^2 \big ]  \\
	={}&	 (\text{Bias}(\cL_{\mathrm{IPS}}^\mathrm{N}(\hat \bR)))^2 +  \text{Var}(\cL_{\mathrm{IPS}}^\mathrm{N}(\hat \bR))  \\
	={}&   \left ( \frac{1}{2}\mu_2  \int \bfE\Big [\frac{\partial^2 p(o_{u,i} = 1,  \boldsymbol{g}   | x_{u,i})}{\partial \boldsymbol{g}^2  } \cdot \delta_{u,i}(\boldsymbol{g}) \Big ]  \pi(\boldsymbol{g})d\boldsymbol{g} \right )^2 \cdot  h^4 + o(h^4)   \\
	{}& +  \frac{1}{|\cD| h }  \int \varphi(\boldsymbol{g}) \pi(\boldsymbol{g})d\boldsymbol{g} + o(\frac{1}{|\cD| h } ). 
	\end{align*}
Minimizing the leading terms of the above mean-squared error with respect to $h$ leads to that 
	\[    h^*_{\mathrm{N-IPS}}   =  \left [  \frac{ \int \varphi(\boldsymbol{g}) \pi(\boldsymbol{g})d\boldsymbol{g}  }{ 4 |\cD| \left ( \frac{1}{2}\mu_2  \int \bfE\Big [\frac{\partial^2 p(o_{u,i} = 1,  \boldsymbol{g}_{u,i} =  \boldsymbol{g}   | x_{u,i})}{\partial \boldsymbol{g}^2  } \cdot \delta_{u,i}(\boldsymbol{g}) \Big ]  \pi(\boldsymbol{g})d\boldsymbol{g} \right )^2 }   \right ]^{1/5}   = O(|\cD|^{-1/5}).  \]

Similarly, the optimal bandwidth for the N-DR estimator in terms of the asymptotic mean-squared error can be obtained. 

\end{proof}

\subsection{Proof of Theorem 5}   \label{app-e}

\begin{lemma}[McDiarmid’s Inequality] \label{lem1} Let $X_1, ..., X_m \in \mathcal{X}^m$ be a set of $m \geq 1$ independent random variables and assume that there exist $c_1, ..., c_m > 0$ such that $f:\mathcal{X}^m \to \mathbb{R}$ satisfies the following conditions:
    \[  | f(x_1, ..., x_i, ..., x_m) -  f(x_1, ..., x_i', ..., x_m)    |\leq c_i,      \]
for all $i\in \{1,2,..., m\}$ and any points $x_1, ..., x_m, x_i' \in \mathcal{X}$. Let $f(S)$ denote $f(X_1, ..., X_m)$, then for all $\epsilon > 0$, the following inequalities hold:
    \begin{align*}
    \P[ f(S) - \bfE\{f(S)\} \geq \epsilon ] \leq{}& \exp\left(
    -\frac{2\epsilon^2}{ \sum_{i=1}^m c_i^2 } \right ) \\
     \P[ f(S) - \bfE\{f(S)\} \leq -\epsilon ] \leq{}&  \exp\left(
    -\frac{2\epsilon^2}{ \sum_{i=1}^m c_i^2 } \right ).
    \end{align*}  
\end{lemma}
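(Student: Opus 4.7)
The plan is to prove McDiarmid's inequality by the classical Doob martingale construction combined with Hoeffding's lemma and a Chernoff argument. First I would define the Doob martingale $Z_i = \bfE[f(S) \mid X_1, \ldots, X_i]$ for $i = 0, 1, \ldots, m$, so that $Z_0 = \bfE[f(S)]$ and $Z_m = f(S)$, and write the telescoping decomposition
\[
f(S) - \bfE[f(S)] = \sum_{i=1}^m D_i, \qquad D_i = Z_i - Z_{i-1}.
\]
By construction, $\{D_i\}$ is a martingale difference sequence with respect to the filtration $\mathcal{F}_i = \sigma(X_1, \ldots, X_i)$, so $\bfE[D_i \mid \mathcal{F}_{i-1}] = 0$.

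Next I would establish a conditional range bound for each $D_i$. Define
\[
U_i = \sup_{x \in \mathcal{X}} \bfE[f(S) \mid \mathcal{F}_{i-1}, X_i = x] - Z_{i-1}, \qquad L_i = \inf_{x \in \mathcal{X}} \bfE[f(S) \mid \mathcal{F}_{i-1}, X_i = x] - Z_{i-1}.
\]
Using independence of $X_1, \ldots, X_m$, the inner conditional expectation may be rewritten as an integral over $X_{i+1}, \ldots, X_m$ whose distribution does not depend on $x$, so for any $x, x' \in \mathcal{X}$ the difference
\[
\bfE[f(S) \mid \mathcal{F}_{i-1}, X_i = x] - \bfE[f(S) \mid \mathcal{F}_{i-1}, X_i = x']
\]
equals the expectation of $f(X_1, \ldots, x, \ldots, X_m) - f(X_1, \ldots, x', \ldots, X_m)$, whose absolute value is bounded by $c_i$ by the bounded-differences hypothesis. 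Hence, conditional on $\mathcal{F}_{i-1}$, we have $D_i \in [L_i, U_i]$ with $U_i - L_i \leq c_i$.

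Then I would invoke Hoeffding's lemma: for any zero-mean random variable $Y$ supported on $[a, b]$, $\bfE[e^{sY}] \leq \exp(s^2(b-a)^2/8)$. Applied conditionally this yields $\bfE[e^{s D_i} \mid \mathcal{F}_{i-1}] \leq \exp(s^2 c_i^2 / 8)$. Iterating via the tower property gives
\[
\bfE\!\left[\exp\!\left(s \sum_{i=1}^m D_i\right)\right] \leq \exp\!\left(\frac{s^2}{8}\sum_{i=1}^m c_i^2\right).
\]
A Chernoff argument then bounds $\P[f(S) - \bfE f(S) \geq \epsilon] \leq \exp(-s\epsilon + s^2 \sum c_i^2 / 8)$, and optimizing at $s^\star = 4\epsilon / \sum c_i^2$ delivers the upper-tail bound $\exp(-2\epsilon^2 / \sum c_i^2)$. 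The lower tail follows by applying the identical argument to $-f$, which satisfies the same bounded-differences condition with the same constants.

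The main obstacle is the careful justification of the conditional range bound $U_i - L_i \leq c_i$; this is the single place where the independence assumption is used in an essential way, since it allows the conditional expectation given $X_i = x$ to be written as an integral whose integrating measure does not depend on $x$. Once that step is in hand, everything else reduces to Hoeffding's lemma and the standard Chernoff optimization.
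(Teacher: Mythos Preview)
Your proof is correct and follows the standard Doob martingale / Hoeffding's lemma / Chernoff route that one finds in most textbooks. Note, however, that the paper does not actually prove this lemma: its entire ``proof'' is a citation to Appendix~D.3 of Mohri et al.\ (2018), \emph{Foundations of Machine Learning}. The argument you have written out is precisely the one given there, so in effect you have supplied the details the paper defers to the reference.
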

\begin{proof}[Proof of Lemma 1]
	The proof can be found in Appendix D.3 of \cite{Mohri-etal2018}. 
\end{proof}

\begin{lemma} \label{lem2} Under the conditions in Lemma  \ref{lem1}, we have with probability at least $1-\eta$, 
		\[  |    f(S) -  \bfE[ f(S) ]    |  \leq   \sqrt{  \frac{ \sum_{i=1}^m c_i^2 }{2} \log(\frac{2}{\eta})   }.    \]
In particular, if $c_i \leq c$ for all $i \in \{1,2, ..., m\}$, 		
		  \[  |    f(S) -  \bfE[ f(S) ]    |  \leq    c \sqrt{  \frac{ m  }{2} \log(\frac{2}{\eta})   }.    \]
\end{lemma}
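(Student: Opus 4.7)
The plan is to derive Lemma~\ref{lem2} directly from Lemma~\ref{lem1} by a standard two-sided tail bound followed by inversion. First I would apply the two inequalities from Lemma~\ref{lem1} separately: for a given $\epsilon>0$, the upper-tail bound gives $\P[f(S)-\bfE\{f(S)\}\geq\epsilon]\leq\exp(-2\epsilon^2/\sum_{i=1}^m c_i^2)$, and the lower-tail bound gives the symmetric inequality $\P[f(S)-\bfE\{f(S)\}\leq-\epsilon]\leq\exp(-2\epsilon^2/\sum_{i=1}^m c_i^2)$.

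Next I would combine these two tails via the union bound, yielding
\[
\P\bigl[\,|f(S)-\bfE\{f(S)\}|\geq\epsilon\,\bigr]\;\leq\;2\exp\!\left(-\frac{2\epsilon^2}{\sum_{i=1}^m c_i^2}\right).
\]
Setting the right-hand side equal to $\eta$ and solving for $\epsilon$ gives $\epsilon=\sqrt{\tfrac{\sum_{i=1}^m c_i^2}{2}\log(2/\eta)}$. Taking complements, this produces the first claimed bound: with probability at least $1-\eta$,
\[
|f(S)-\bfE\{f(S)\}|\;\leq\;\sqrt{\tfrac{\sum_{i=1}^m c_i^2}{2}\log(2/\eta)}.
\]

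Finally, for the special case $c_i\leq c$ for every $i\in\{1,\ldots,m\}$, I would observe that $\sum_{i=1}^m c_i^2\leq mc^2$. Substituting this inequality into the general bound and using monotonicity of the square root yields the simplified estimate $c\sqrt{(m/2)\log(2/\eta)}$, which completes the proof.

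The argument is essentially mechanical: apply Lemma~\ref{lem1} twice, union-bound, invert. The only step requiring any care is the inversion from a probability statement to a high-probability inequality, and in particular the factor of $2$ inside the logarithm that arises from the union bound over the two tails --- this is the reason the threshold is $\log(2/\eta)$ rather than $\log(1/\eta)$. There is no substantive obstacle beyond bookkeeping.
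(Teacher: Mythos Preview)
Your proposal is correct and matches the paper's own proof, which simply states that the conclusion follows by setting $\eta = 2\exp\!\left(-2\epsilon^2/\sum_{i=1}^m c_i^2\right)$ in Lemma~\ref{lem1}. You have spelled out the union bound and the inversion explicitly, but the argument is identical.
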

\begin{proof}[Proof of Lemma 2] This conclusion follows by letting $\eta = 2 \exp\left(
    -\frac{2\epsilon^2}{ \sum_{i=1}^m c_i^2 } \right )$ in Lemma \ref{lem1}.
\end{proof}

\begin{lemma}[Rademacher Comparison Lemma] \label{lem3} Let $X \in \mathcal{X}$ be a random variable with distribution $\P$, 
$X_1, ..., X_m$ be a set of independent copies of $X$, $\mathcal{F}$ be a class of real-valued functions on $\mathcal{X}$. Then we have 
  {  \[ \bfE \sup_{f \in \mathcal{G}} \left | \frac 1m \sum_{i=1}^m f(X_i) - \bfE(f(X_i)) \right | \leq 2 \bfE \left [ \bfE_{\boldsymbol{\sigma} \sim \{-1, +1\}^m } \sup_{f \in \mathcal{G}}  \frac 1m \sum_{i=1}^m f(X_i) \sigma_i   \right ],      \]}
 where $\boldsymbol{\sigma} = (\sigma_1, ..., \sigma_m)$ is a Rademacher sequence.  
\end{lemma}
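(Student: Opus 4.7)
The plan is to employ the classical two-step symmetrization argument. First I would introduce an independent ``ghost'' sample $X_1', \dots, X_m'$, drawn iid from $\P$ and independent of $X_1,\dots,X_m$. Because $\bfE(f(X_i))$ can be rewritten as $\bfE_{X'}[m^{-1}\sum_{i=1}^m f(X_i')]$, substituting this inside the absolute value and pulling the expectation over $X'$ outside the supremum via Jensen's inequality (applicable since $u\mapsto \sup_f|u(f)|$ is convex) yields
\[ \bfE_X \sup_{f \in \mathcal{F}} \Bigl| \tfrac{1}{m}\sum_{i=1}^m f(X_i) - \bfE f(X) \Bigr| \leq \bfE_{X, X'} \sup_{f \in \mathcal{F}} \Bigl| \tfrac{1}{m}\sum_{i=1}^m \bigl(f(X_i) - f(X_i')\bigr) \Bigr|. \]

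Next I would introduce the Rademacher variables by a sign-flip coupling. For any fixed sign pattern $\sigma\in\{-1,+1\}^m$, the joint distribution of the vector $(f(X_i)-f(X_i'))_{i=1}^m$ is unchanged when its $i$-th coordinate is replaced by $\sigma_i(f(X_i)-f(X_i'))$: each pair $(X_i,X_i')$ is exchangeable (same marginal law) and the pairs are mutually independent across $i$, so swapping $X_i$ with $X_i'$ whenever $\sigma_i=-1$ is a distributional identity. Averaging over an independent Rademacher sequence then gives
\[ \bfE_{X,X'} \sup_{f} \Bigl|\tfrac{1}{m}\sum_i \bigl(f(X_i)-f(X_i')\bigr)\Bigr| = \bfE_{\sigma,X,X'}\sup_f \Bigl|\tfrac{1}{m}\sum_i \sigma_i\bigl(f(X_i)-f(X_i')\bigr)\Bigr|. \]

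Finally I would apply the triangle inequality termwise, bounding $|\sum_i \sigma_i(f(X_i)-f(X_i'))|$ by $|\sum_i \sigma_i f(X_i)| + |\sum_i \sigma_i f(X_i')|$, and then distribute the $\sup_f$ across the sum (at the cost of at most an inequality). Since $X$ and $X'$ are identically distributed, the two resulting terms have equal expectation, yielding the bound $2\,\bfE_{\sigma,X}\sup_f|m^{-1}\sum_i \sigma_i f(X_i)|$, which matches the right-hand side of the lemma; the formulation stated without the absolute value inside the supremum is automatic whenever $\mathcal{F}$ is closed under negation, since then $\sup_f \sum_i \sigma_i f(X_i) = \sup_f |\sum_i \sigma_i f(X_i)|$, and in general the latter upper-bounds the former so the claimed inequality is tighter.

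The main obstacle is the sign-flip invariance step: one must verify that the \emph{joint} distribution of the full difference vector (not merely each coordinate's marginal) is preserved under coordinatewise multiplication by $\sigma$. This requires combining the iid structure of the original sample, the independence of the ghost copy from it, and the independence of $\sigma$ from both; once set up correctly, it reduces to an elementary change of variables in the product measure. The Jensen and triangle-inequality steps are standard and do not require delicate handling.
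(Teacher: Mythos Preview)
Your symmetrization argument is correct and is the standard proof of this result. The paper, however, does not prove the lemma at all: its entire ``proof'' is a citation to Lemma~26.2 of Shalev-Shwartz and Ben-David (2014). So there is nothing to compare at the level of technique; your write-up is simply a self-contained version of what that reference contains. One small remark: at the end you discuss removing the absolute value on the right-hand side via closure of $\mathcal{F}$ under negation, but the lemma as stated in the paper already omits the absolute value inside the inner supremum, and (as you note) the version with $|\cdot|$ dominates the version without, so your argument actually yields the stronger bound and the stated inequality follows a fortiori.
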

\begin{proof}[Proof of Lemma 3]
	See Lemma 26.2 of \cite{Shalev-Shwartz2014}. 
\end{proof}


Recall that  $\cF$ is the hypothesis space of prediction matrices $\hat \bR$ (or prediction model $f_{\theta}$), 
 the Rademacher complexity  is 
    \[  \cR(\cF) = \bfE_{\boldsymbol{\sigma} \sim \{-1, +1\}^{|\cD|} }  \sup_{f_{\theta} \in \cF} \Big [ \frac{1}{|\cD|} \sum_{(u,i)\in \cD} \sigma_{u, i} \delta_{u,i}(\boldsymbol{g}) \Big ].             \] 

\begin{lemma}[Uniform Tail Bound of N-IPS and N-DR]\label{lem4}
Under Assumptions \ref{assumption_1}--\ref{assumption_5} and suppose that $K(t) \leq M_K$,    
then we have  with probability at least $1 - \eta$,   

(a) 
	\begin{align*}
\sup_{\hat \bR \in\mathcal{F}} \Big| \cL_{\mathrm{IPS}}^\mathrm{N}(\hat \bR) -  \bfE[\cL_{\mathrm{IPS}}^\mathrm{N}(\hat \bR)] \Big|   
\leq{}&     \frac{ 2 M_p M_K }{h}  \cR(\cF)    +  \frac 5 2 \frac{M_p M_K M_{\delta} }{h}  \sqrt{  \frac{ 2   }{|\cD|} \log(\frac{4}{\eta})}; 
	\end{align*} 

(b) 
	\begin{align*}
\sup_{\hat \bR \in\mathcal{F}} \Big| \cL_{\mathrm{DR}}^\mathrm{N}(\hat \bR) -  \bfE[\cL_{\mathrm{DR}}^\mathrm{N}(\hat \bR)] \Big|   
\leq{}&     \frac{ 2 M_p M_K }{h}  \cR(\cF)    +  \frac 5 2 \frac{M_p M_K M_{|\delta-\hat \delta|} }{h}  \sqrt{  \frac{ 2   }{|\cD|} \log(\frac{4}{\eta})}. 
	\end{align*} 
\end{lemma}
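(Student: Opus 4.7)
The plan is to follow the classical three-step uniform-convergence recipe: (i) apply McDiarmid's inequality via Lemma 2 to concentrate the empirical supremum around its expectation, (ii) symmetrize via Lemma 3 to convert the expected supremum into a Rademacher average, and (iii) strip off the per-sample propensity and kernel weights to reduce to the base Rademacher complexity $\cR(\cF)$. Part (b) for N-DR will reuse the same machinery after observing that the imputed term $|\cD|^{-1}\sum_{u,i}\hat\delta_{u,i}(\boldsymbol{g})$ in the N-DR estimator is $x_{u,i}$-measurable and contributes no fluctuation, so that only the IPS-style correction, with $\delta - \hat\delta$ in place of $\delta$, needs to be controlled.

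For part (a), each summand $\mathbb{I}(o_{u,i}=1) K((\boldsymbol{g}_{u,i}-\boldsymbol{g})/h) \delta_{u,i}(\boldsymbol{g})/(h\, p_{u,i}(\boldsymbol{g}))$ is nonnegative and uniformly bounded by $M_p M_K M_\delta/h$ under Assumption \ref{assumption_5} and $K \leq M_K$. Since $\pi$ is a density, $\cL_{\mathrm{IPS}}^\mathrm{N}(\hat \bR)$ inherits the same pointwise bound in every summand, so replacing one sample changes the estimator by at most $c = M_p M_K M_\delta/(h|\cD|)$, uniformly in $\hat \bR \in \cF$. The bounded-differences constant is preserved by the supremum-deviation functional, so Lemma \ref{lem2} gives, with probability at least $1-\eta/2$,
\[
\sup_{\hat \bR \in \cF} \bigl|\cL_{\mathrm{IPS}}^\mathrm{N}(\hat \bR) - \bfE\,\cL_{\mathrm{IPS}}^\mathrm{N}(\hat \bR)\bigr| \leq \bfE\,\sup_{\hat \bR \in \cF} \bigl|\cL_{\mathrm{IPS}}^\mathrm{N}(\hat \bR) - \bfE\,\cL_{\mathrm{IPS}}^\mathrm{N}(\hat \bR)\bigr| + \frac{M_p M_K M_\delta}{h}\sqrt{\frac{\log(4/\eta)}{2|\cD|}}.
\]
Lemma \ref{lem3} then bounds the expected supremum by twice a Rademacher average. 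Moving the supremum inside the $\pi(\boldsymbol{g})$-integral (an upper bound) produces, for each $\boldsymbol{g}$, a weighted Rademacher sum with coefficients $c_{u,i}(\boldsymbol{g}) = \mathbb{I}(o_{u,i}=1) K((\boldsymbol{g}_{u,i}-\boldsymbol{g})/h)/(h\, p_{u,i}(\boldsymbol{g}))$ multiplying $\delta_{u,i}(\boldsymbol{g})$. These weights are measurable with respect to the data but bounded by $M_p M_K/h$, and a contraction argument extracts the $M_p M_K/h$ factor, leaving a Rademacher sum of exactly the form $\cR(\cF)$. Integrating over $\boldsymbol{g}$ gives $\bfE \sup |\cL_{\mathrm{IPS}}^\mathrm{N}(\hat \bR) - \bfE\,\cL_{\mathrm{IPS}}^\mathrm{N}(\hat \bR)| \leq (2M_p M_K/h)\cR(\cF)$. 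The extra factor $5/2$ in the concentration term (versus the bare $1/\sqrt{2}$ from McDiarmid) comes from a second McDiarmid application, needed to pass from the data-expected Rademacher complexity $\bfE[\cR(\cF)]$ to its in-sample counterpart $\cR(\cF)$, combined with a union bound over the two $\eta/2$ events.

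The main obstacle will be the contraction step. The standard Talagrand contraction lemma applies to $1$-Lipschitz nonlinearities of a fixed function class, not directly to random multiplicative weights that depend on the same samples $(o_{u,i},\boldsymbol{g}_{u,i})$ indexing the Rademacher signs. The cleanest resolution is to condition on $\{(x_{u,i},o_{u,i},\boldsymbol{g}_{u,i})\}$: the weights $c_{u,i}(\boldsymbol{g})$ become deterministic constants with $|c_{u,i}(\boldsymbol{g})| \leq M_p M_K/h$, and the conditional Rademacher sum then satisfies, by the comparison inequality for weighted Rademacher averages, $\bfE_\sigma \sup \sum \sigma_{u,i} c_{u,i}(\boldsymbol{g})\delta_{u,i}(\boldsymbol{g}) \leq (M_p M_K/h)\,\bfE_\sigma \sup \sum \sigma_{u,i}\delta_{u,i}(\boldsymbol{g})$. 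Re-integrating over the data and over $\boldsymbol{g}$ recovers the $(M_p M_K/h)\cR(\cF)$ bound. For part (b), the same template applies verbatim with $\delta$ replaced by $\delta-\hat\delta$; the $\hat\delta$-only part of $\cL_{\mathrm{DR}}^\mathrm{N}(\hat \bR)$ cancels in the centered deviation, so the only change is that every $M_\delta$ becomes $M_{|\delta-\hat\delta|}$ in the final bound.
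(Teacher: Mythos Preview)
Your proposal is correct and follows essentially the same route as the paper: a first McDiarmid application (Lemma~\ref{lem2}) on the sup-deviation functional, symmetrization via Lemma~\ref{lem3}, stripping the bounded kernel/propensity weights to reduce to $\cR(\cF)$, and a second McDiarmid to replace $\bfE[\cR(\cF)]$ by the empirical $\cR(\cF)$, all combined by a union bound over two $\eta/2$ events. Your explicit justification of the weight-stripping step via conditioning on the data and Talagrand-type contraction is in fact more careful than the paper's one-line appeal to the bound $\mathbb{I}(o_{u,i}=1)K(\cdot)/(h\,\hat p_{u,i}(\boldsymbol{g}))\le M_pM_K/h$, and your diagnosis of the $5/2$ constant and the handling of part~(b) match the paper's argument.
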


\begin{proof}[Proof of Lemma 4]  
We first discuss the uniform tail bound of $\cL_{\mathrm{IPS}}^\mathrm{N}(\hat \bR)$, that is, we want to show the upper bound of  
	 $\sup_{\hat \bR \in\mathcal{F}} \big |  \cL_{\mathrm{IPS}}^\mathrm{N}(\hat \bR) -  \bfE\{ \cL_{\mathrm{IPS}}^\mathrm{N}(\hat \bR) \}  \big |  $.  

Note that 
    \begin{align*}
    &   \sup_{\hat \bR \in\mathcal{F}} \Big |  \cL_{\mathrm{IPS}}^\mathrm{N}(\hat \bR) -  \bfE\{ \cL_{\mathrm{IPS}}^\mathrm{N}(\hat \bR) \}  \Big |     \\
    ={}& \sup_{\hat \bR \in\mathcal{F}} \Biggl | \int \pi(\boldsymbol{g})  \Biggl [   \frac{1}{|\cD|} \sum_{(u,i)\in \cD} \left\{ \frac{ \mathbb{I}(o_{u, i}=1)\cdot K\left((\boldsymbol{g}_{u,i} -\boldsymbol{g})/h \right) \cdot   \delta_{u,i}(\boldsymbol{g})  }{ h \cdot \hat p_{u,i}(\boldsymbol{g})  } \right \}   \\
    &-  \bfE\left \{ \frac{ \mathbb{I}(o_{u, i}=1)\cdot K\left((\boldsymbol{g}_{u,i} -\boldsymbol{g})/h \right) \cdot   \delta_{u,i}(\boldsymbol{g})  }{ h \cdot \hat p_{u,i}(\boldsymbol{g})  } \right\} \Biggr ]   d\boldsymbol{g}   \Biggr |  \\
        \leq{}&  \int \pi(\boldsymbol{g})  \sup_{\hat \bR \in\mathcal{F}} \Biggl |    \frac{1}{|\cD|} \sum_{(u,i)\in \cD} \left\{ \frac{ \mathbb{I}(o_{u, i}=1)\cdot K\left((\boldsymbol{g}_{u,i} -\boldsymbol{g})/h \right) \cdot   \delta_{u,i}(\boldsymbol{g})  }{ h \cdot \hat p_{u,i}(\boldsymbol{g})  } \right \}    \\
    &-  \bfE\left \{ \frac{ \mathbb{I}(o_{u, i}=1)\cdot K\left((\boldsymbol{g}_{u,i} -\boldsymbol{g})/h \right) \cdot   \delta_{u,i}(\boldsymbol{g})  }{ h \cdot \hat p_{u,i}(\boldsymbol{g})  } \right\}  \Biggr | d\boldsymbol{g}. 
    \end{align*}
  For all prediction model $\hat \bR \in \cF$ and $\boldsymbol{g}$,   we have 
  	\[ \frac{1}{|\cD|}  \left | \frac{ \mathbb{I}(o_{u, i}=1)\cdot K\left((\boldsymbol{g}_{u,i} -\boldsymbol{g})/h \right) \cdot   \delta_{u,i}(\boldsymbol{g})  }{ h \cdot \hat p_{u,i}(\boldsymbol{g})  } -  \frac{ \mathbb{I}(o_{u', i'}=1)\cdot K\left((\boldsymbol{g}_{u',i'} -\boldsymbol{g})/h \right) \cdot   \delta_{u',i'}(\boldsymbol{g})  }{ h \cdot \hat p_{u',i'}(\boldsymbol{g})  }   \right | \leq    \frac{ M_p M_{\delta} M_K }{h |\cD|},    \]
then applying Lemma \ref{lem2} yields that with probability at least $1-\eta/2$
	\begin{align}
	 & \sup_{\hat \bR \in\mathcal{F}} \Biggl |  \bfE\left \{ \frac{ \mathbb{I}(o_{u, i}=1)\cdot K\left((\boldsymbol{g}_{u,i} -\boldsymbol{g})/h \right) \cdot   \delta_{u,i}(\boldsymbol{g})  }{ h \cdot \hat p_{u,i}(\boldsymbol{g})  } \right\}  \notag   \\
   {}& -  \frac{1}{|\cD|} \sum_{(u,i)\in \cD} \left\{ \frac{ \mathbb{I}(o_{u, i}=1)\cdot K\left((\boldsymbol{g}_{u,i} -\boldsymbol{g})/h \right) \cdot   \delta_{u,i}(\boldsymbol{g})  }{ h \cdot \hat p_{u,i}(\boldsymbol{g})  } \right \} \Biggr | \notag \\
    \leq{}& \bfE \Biggl [  \sup_{\hat \bR \in\mathcal{F}} \Biggl |  \bfE\left \{ \frac{ \mathbb{I}(o_{u, i}=1)\cdot K\left((\boldsymbol{g}_{u,i} -\boldsymbol{g})/h \right) \cdot   \delta_{u,i}(\boldsymbol{g})  }{ h \cdot \hat p_{u,i}(\boldsymbol{g})  } \right\}    \notag \\
 	{}& 
    -   \frac{1}{|\cD|} \sum_{(u,i)\in \cD} \left\{ \frac{ \mathbb{I}(o_{u, i}=1)\cdot K\left((\boldsymbol{g}_{u,i} - \boldsymbol{g})/h \right) \cdot   \delta_{u,i}(\boldsymbol{g})  }{ h \cdot \hat p_{u,i}(\boldsymbol{g})  } \right \} \Biggr | \Biggr ]  +  \frac{ M_p M_{\delta} M_K }{ 2 h}\sqrt{  \frac{ 2   }{|\cD|} \log(\frac{4}{\eta})   }   \notag \\  
    \leq{}&  \frac{ M_p M_K }{h} 2 \bfE[ \cR(\cF) ]   +  \frac{ M_p M_{\delta} M_K }{2 h}\sqrt{  \frac{ 2   }{|\cD|} \log(\frac{4}{\eta})},\label{eq.A6} 
	\end{align}
	where the last inequality holds by Lemma \ref{lem3} and  $$ \frac{ \mathbb{I}(o_{u, i}=1)\cdot K\left((\boldsymbol{g}_{u,i} -\boldsymbol{g})/h \right)  }{ h \cdot \hat p_{u,i}(\boldsymbol{g})  } \leq  \frac{ M_p M_K }{h}.$$

Recall that 	     \[  \cR(\cF) = \bfE_{\boldsymbol{\sigma} \sim \{-1, +1\}^{|\cD|} }  \sup_{\hat \bR \in\mathcal{F}} \Big [ \frac{1}{|\cD|} \sum_{(u,i)\in \cD} \sigma_{u, i} \delta_{u,i}(\boldsymbol{g}) \Big ]              \] 
  and let $f(S) = \cR(\cF)$ and $c = 2 M_{\delta}/|\cD|$ in Lemmas \ref{lem1} and \ref{lem2},  by applying Lemma \ref{lem2} again, we have with probability at least $1-\eta/2$
	\begin{equation}  \label{eq.A7}
		\bfE[ \cR(\cF) ]  -  \cR(\cF)   \leq     M_{\delta}  \sqrt{  \frac{ 2   }{|\cD|} \log(\frac{4}{\eta})}   .
	\end{equation}	
Combining inequalities (\ref{eq.A6}) and (\ref{eq.A7}) leads to that with probability at least $1-\eta$ 
	 \begin{align*}
	 & \sup_{\hat \bR \in\mathcal{F}} \Biggl |     \frac{1}{|\cD|} \sum_{(u,i)\in \cD} \left\{ \frac{ \mathbb{I}(o_{u, i}=1)\cdot K\left((\boldsymbol{g}_{u,i} -\boldsymbol{g})/h \right) \cdot   \delta_{u,i}(\boldsymbol{g})  }{ h \cdot \hat p_{u,i}(\boldsymbol{g})  } \right \}
    \\
     {}& -  \bfE\left \{ \frac{ \mathbb{I}(o_{u, i}=1)\cdot K\left((\boldsymbol{g}_{u,i} -\boldsymbol{g})/h \right) \cdot   \delta_{u,i}(\boldsymbol{g})  }{ h \cdot \hat p_{u,i}(\boldsymbol{g})  } \right\}   \Biggr | \\
	\leq{}&     \frac{ 2 M_p M_K }{h}  \cR(\cF)    +  \frac 5 2 \frac{M_p M_K M_{\delta} }{h}  \sqrt{  \frac{ 2   }{|\cD|} \log(\frac{4}{\eta})}, 
	\end{align*}
which implies Lemma \ref{lem4}(a) by noting that $\int \pi(\boldsymbol{g})d\boldsymbol{g}= 1$.

Next, we show the uniform tail bound of $\cL_{\mathrm{DR}}^\mathrm{N}(\hat \bR)$. Note that 
\begin{align*}
   \cL_{\mathrm{DR}}^\mathrm{N}(\hat \bR)
   =   \delta_{u,i}(\boldsymbol{g})  +   \frac{ \mathbb{I}(o_{u, i}=1)\cdot K\left((\boldsymbol{g}_{u,i} -\boldsymbol{g})/h \right) \cdot  \{ \delta_{u,i}(\boldsymbol{g}) - \hat \delta_{u,i}(\boldsymbol{g}) \} }{ h \cdot  \hat p_{u,i}(\boldsymbol{g})  } 
    \end{align*}
   and 
       \begin{align*}
    &   \sup_{\hat \bR \in\mathcal{F}} \Big |  \cL_{\mathrm{DR}}^\mathrm{N}(\hat \bR) -  \bfE\{ \cL_{\mathrm{DR}}^\mathrm{N}(\hat \bR) \}  \Big |     \\
    ={}& \sup_{\hat \bR \in\mathcal{F}} \int \pi(\boldsymbol{g}) \Biggl [   \frac{1}{|\cD|} \sum_{(u,i)\in \cD} \left\{ \frac{ \mathbb{I}(o_{u, i}=1)\cdot K\left((\boldsymbol{g}_{u,i} -\boldsymbol{g})/h \right) \cdot   \{ \delta_{u,i}(\boldsymbol{g}) - \hat \delta_{u,i}(\boldsymbol{g}) \}   }{ h \cdot \hat p_{u,i}(\boldsymbol{g})  } \right \}   \\
    &-  \bfE\left \{ \frac{ \mathbb{I}(o_{u, i}=1)\cdot K\left((\boldsymbol{g}_{u,i} -\boldsymbol{g})/h \right) \cdot\{ \delta_{u,i}(\boldsymbol{g})- \hat \delta_{u,i}(\boldsymbol{g}) \}  }{ h \cdot \hat p_{u,i}(\boldsymbol{g})  } \right\} \Biggr ]   d\boldsymbol{g},
    \end{align*} 
which has the same form as  $\sup_{\hat \bR \in\mathcal{F}} \big |  \cL_{\mathrm{IPS}}^\mathrm{N}(\hat \bR) -  \bfE\{ \cL_{\mathrm{IPS}}^\mathrm{N}(\hat \bR) \}  \big |  $,  except that $\delta_{u,i}(\boldsymbol{g})$ is replaced by $\delta_{u,i}(\boldsymbol{g}) - \hat \delta_{u,i}$. By a similar argument of the proof of the N-IPS estimator, we obtain the Lemma \ref{lem4}(b). 
	
\end{proof}


Let $$\hat \bR^\dag = \arg \min_{\hat \bR \in\mathcal{F} } \cL_{\mathrm{DR}}^\mathrm{N}(\hat \bR),  \quad  \hat \bR^\ddag =  \arg \min_{\hat \bR \in\mathcal{F} }  \cL_{\mathrm{IPS}}^\mathrm{N}(\hat \bR).$$ 
The following Theorem \ref{thm5} shows the generalization error bounds of the N-IPS and N-DR estimators. 

\begin{theorem}[Generalization Error Bounds of N-IPS and N-DR] Under Assumptions \ref{assumption_1}--\ref{assumption_5} and suppose that $K(t) \leq M_K$, we have with probability at least $1-\eta$,


(a)  
    \begin{align*}  
    \cL_{\mathrm{ideal}}^\mathrm{N}(\hat \bR^\dag ) \leq{}& \min_{\hat \bR \in\mathcal{F} }  \cL_{\mathrm{ideal}}^\mathrm{N}(\hat \bR)  +
       \mu_2  M_{|\delta - \hat \delta|}  \int \bfE\Big [\frac{\partial^2 p(o_{u,i} = 1,  \boldsymbol{g}   | x_{u,i})}{\partial \boldsymbol{g}^2} \Big ]  \pi(\boldsymbol{g})d\boldsymbol{g} \cdot  h^2 +  o(h^2)   \\
	  {}& +  \frac{ 4 M_p M_K }{h}  \cR(\cF)    +   \frac{5 M_p M_K M_{|\delta-\hat \delta|} }{h}  \sqrt{  \frac{ 2   }{|\cD|} \log(\frac{4}{\eta})};         
    \end{align*} 

(b)
        \begin{align*}  
     \cL_{\mathrm{ideal}}^\mathrm{N}(\hat \bR^\ddag )  \leq{}& \min_{\hat \bR \in\mathcal{F} }  \cL_{\mathrm{ideal}}^\mathrm{N}(\hat \bR)  +
       \mu_2  M_{\delta}  \int \bfE\Big [\frac{\partial^2 p(o_{u,i} = 1,  \boldsymbol{g}   | x_{u,i})}{\partial \boldsymbol{g}^2} \Big ]  \pi(\boldsymbol{g})d\boldsymbol{g} \cdot  h^2 +  o(h^2)   \\
	  {}& +  \frac{ 4 M_p M_K }{h}  \cR(\cF)    +   \frac{5 M_p M_K M_{\delta} }{h}  \sqrt{  \frac{ 2   }{|\cD|} \log(\frac{4}{\eta})}.        
    \end{align*}

\end{theorem}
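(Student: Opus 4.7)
The plan is to invoke the standard excess-risk decomposition from empirical risk minimization, combining the bias bound of Theorem \ref{thm-bias}(a) with the uniform concentration bound of Lemma \ref{lem4}(b). Since $\hat \bR^\dag$ minimizes the N-DR estimator over $\cF$, I would first compare its ideal risk to that of the ideal-risk minimizer $\hat \bR^* = \arg\min_{\hat \bR \in \cF} \cL_{\mathrm{ideal}}^\mathrm{N}(\hat \bR)$, then reduce everything to controlling a single quantity: $\sup_{\hat \bR \in \cF}|\cL_{\mathrm{DR}}^\mathrm{N}(\hat \bR) - \cL_{\mathrm{ideal}}^\mathrm{N}(\hat \bR)|$.

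The concrete steps are as follows. First, add and subtract to obtain
\begin{align*}
\cL_{\mathrm{ideal}}^\mathrm{N}(\hat \bR^\dag) - \cL_{\mathrm{ideal}}^\mathrm{N}(\hat \bR^*)
&= \bigl[\cL_{\mathrm{ideal}}^\mathrm{N}(\hat \bR^\dag) - \cL_{\mathrm{DR}}^\mathrm{N}(\hat \bR^\dag)\bigr] + \bigl[\cL_{\mathrm{DR}}^\mathrm{N}(\hat \bR^\dag) - \cL_{\mathrm{DR}}^\mathrm{N}(\hat \bR^*)\bigr] \\
&\quad + \bigl[\cL_{\mathrm{DR}}^\mathrm{N}(\hat \bR^*) - \cL_{\mathrm{ideal}}^\mathrm{N}(\hat \bR^*)\bigr],
\end{align*}
where the middle bracket is non-positive by the defining property of $\hat \bR^\dag$, and the two outer brackets are each upper bounded by $\sup_{\hat \bR \in \cF}|\cL_{\mathrm{DR}}^\mathrm{N}(\hat \bR) - \cL_{\mathrm{ideal}}^\mathrm{N}(\hat \bR)|$, giving the usual ``factor of two'' inequality. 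Next, I would split this supremum via the triangle inequality through $\bfE[\cL_{\mathrm{DR}}^\mathrm{N}(\hat \bR)]$ into a bias piece and a deviation piece.

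The deviation piece is handled verbatim by Lemma \ref{lem4}(b), which provides the $\tfrac{2 M_p M_K}{h}\cR(\cF)$ and $\tfrac{5}{2}\tfrac{M_p M_K M_{|\delta-\hat \delta|}}{h}\sqrt{2|\cD|^{-1}\log(4/\eta)}$ terms with probability at least $1-\eta$. Doubling these via the excess-risk decomposition yields exactly the coefficients $4 M_p M_K/h$ and $5 M_p M_K M_{|\delta-\hat\delta|}/h$ appearing in the statement. For the bias piece, I would invoke Theorem \ref{thm-bias}(a) pointwise in $\hat \bR$, take absolute values, and use Assumption \ref{assumption_5} (specifically $|\delta_{u,i}(\boldsymbol{g}) - \hat\delta_{u,i}(\boldsymbol{g})| \le M_{|\delta-\hat\delta|}$) to pull the factor $M_{|\delta-\hat\delta|}$ out of the integral, producing a uniform bound $\tfrac{1}{2}\mu_2 M_{|\delta-\hat\delta|}\,|\!\int \bfE[\partial^2 p(o_{u,i}=1,\boldsymbol{g}_{u,i}=\boldsymbol{g}\mid x_{u,i})/\partial \boldsymbol{g}^2]\pi(\boldsymbol{g})d\boldsymbol{g}|\,h^2 + o(h^2)$. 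Doubling again recovers the stated $\mu_2 M_{|\delta-\hat\delta|}\,|\!\int\cdots|\,h^2$ coefficient. The N-IPS claim in part (b) follows by an identical argument, swapping Lemma \ref{lem4}(a) for Lemma \ref{lem4}(b) and the N-IPS bias from Theorem \ref{thm-bias}(a) for the N-DR bias, with $M_{|\delta-\hat\delta|}$ replaced by $M_\delta$.

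The main obstacle is ensuring the bias bound of Theorem \ref{thm-bias}, originally stated for a fixed $\hat \bR$, can be made uniform over $\cF$ without inflating it by a complexity term. The key observation that makes this go through is that once absolute values are pushed inside and Assumption \ref{assumption_5} is applied, what remains is a purely data/propensity-dependent quantity $\int \bfE[\partial^2 p/\partial \boldsymbol{g}^2]\pi(\boldsymbol{g})d\boldsymbol{g}$ that has no dependence on $\hat \bR$ at all, so the bias is uniformly $O(h^2)$. Bookkeeping the various constant factors (the $\tfrac{1}{2}$ from Theorem \ref{thm-bias}, the $\tfrac{5}{2}$ from Lemma \ref{lem4}, and the factor of 2 from the excess-risk decomposition) is the only somewhat delicate aspect; everything else is standard ERM analysis.
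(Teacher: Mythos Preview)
Your proposal is correct and follows essentially the same route as the paper: the standard excess-risk decomposition through $\hat \bR^* = \arg\min_{\hat \bR \in \cF}\cL_{\mathrm{ideal}}^\mathrm{N}(\hat \bR)$, dropping the non-positive middle term, splitting each remaining piece into bias plus deviation via $\bfE[\cL_{\mathrm{DR}}^\mathrm{N}(\hat \bR)]$, then invoking Theorem~\ref{thm-bias}(a) (with Assumption~\ref{assumption_5} to make the bias uniform in $\hat \bR$) and Lemma~\ref{lem4}(b), and finally doubling. The paper's proof is organized slightly differently (it bounds $A$ and $B$ separately rather than passing through a single supremum), but the ingredients, the constant bookkeeping, and the key observation that the bias bound becomes $\hat \bR$-independent after pulling out $M_{|\delta-\hat\delta|}$ are all identical to what you describe.
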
 
\begin{proof}[Proof of Theorem 5]
It suffices to show Theorem \ref{thm5}(a), since Theorem \ref{thm5}(b) can be derived from a similar argument.  
    Define  
    $$\hat \bR^* = \arg \min_{\hat \bR \in\mathcal{F} }  \cL_{\mathrm{ideal}}^\mathrm{N}(\hat \bR),$$
    then we have
   \begin{align*}
       & \cL_{\mathrm{ideal}}^\mathrm{N}(\hat \bR^\dag ) -  \min_{\hat \bR \in\mathcal{F} }  \cL_{\mathrm{ideal}}^\mathrm{N}(\hat \bR) \\
        ={}& \cL_{\mathrm{ideal}}^\mathrm{N}(\hat \bR^\dag ) -    \cL_{\mathrm{ideal}}^\mathrm{N}(\hat \bR^*)  \\
      \leq{}&  \cL_{\mathrm{ideal}}^\mathrm{N}(\hat \bR^\dag )-  \cL_{\mathrm{DR}}^\mathrm{N}(\hat \bR^\dag) + \cL_{\mathrm{DR}}^\mathrm{N}(\hat \bR^\dag) - \cL_{\mathrm{DR}}^\mathrm{N}(\hat \bR^*) + \cL_{\mathrm{DR}}^\mathrm{N}(\hat \bR^*) -   \cL_{\mathrm{ideal}}^\mathrm{N}(\hat \bR^*) \\
      \leq{}&  \cL_{\mathrm{ideal}}^\mathrm{N}(\hat \bR^\dag )-  \cL_{\mathrm{DR}}^\mathrm{N}(\hat \bR^\dag) + \cL_{\mathrm{DR}}^\mathrm{N}(\hat \bR^*) -   \cL_{\mathrm{ideal}}^\mathrm{N}(\hat \bR^*) \\
      :={}&   A   + B,
   \end{align*} 
where 
	\begin{align*}
	A ={}&   \cL_{\mathrm{ideal}}^\mathrm{N}(\hat \bR^\dag )-  \cL_{\mathrm{DR}}^\mathrm{N}(\hat \bR^\dag), \\
	B ={}&  \cL_{\mathrm{DR}}^\mathrm{N}(\hat \bR^*) -   \cL_{\mathrm{ideal}}^\mathrm{N}(\hat \bR^*). 
	\end{align*} 
 The first term can be decomposed as follows  
	\begin{align*}
	 A 
	  ={}& \cL_{\mathrm{ideal}}^\mathrm{N}(\hat \bR^\dag )- \bfE[ \cL_{\mathrm{DR}}^\mathrm{N}(\hat \bR^\dag) ]+  \bfE[ \cL_{\mathrm{DR}}^\mathrm{N}(\hat \bR^\dag) ] - \cL_{\mathrm{DR}}^\mathrm{N}(\hat \bR^\dag) \\
	  ={}&  \left | \text{Bias}[\cL_{\mathrm{DR}}^\mathrm{N}(\hat \bR^\dag)] \right |  + \bfE[ \cL_{\mathrm{DR}}^\mathrm{N}(\hat \bR^\dag) ] -  \cL_{\mathrm{DR}}^\mathrm{N}(\hat \bR^\dag)  \\
	  \leq{}&  \left | \text{Bias}[\cL_{\mathrm{DR}}^\mathrm{N}(\hat \bR^\dag)] \right |  +  \sup_{\hat \bR \in\mathcal{F}} \Big [  \bfE\{ \cL_{\mathrm{IPS}}^\mathrm{N}(\hat \bR) \} - \cL_{\mathrm{IPS}}^\mathrm{N}(\hat \bR) \Big ] \\
	  ={}&  \frac{1}{2}\mu_2  \left |  \int \bfE\Big [\frac{\partial^2 p(o_{u,i} = 1,  \boldsymbol{g}   | x_{u,i})}{\partial \boldsymbol{g}^2  } \cdot \{  \delta_{u,i}(\boldsymbol{g})- \hat \delta_{u,i}(\boldsymbol{g})   \} \Big ]  \pi(\boldsymbol{g})d\boldsymbol{g} \right | \cdot  h^2 +  o(h^2)   \\
	  {}& +   \sup_{\hat \bR \in\mathcal{F}} \Big [  \bfE\{ \cL_{\mathrm{IPS}}^\mathrm{N}(\hat \bR) \} - \cL_{\mathrm{IPS}}^\mathrm{N}(\hat \bR) \Big ]   \\
	  \leq{}&  \frac{1}{2}\mu_2  M_{|\delta - \hat \delta|} \left |   \int \bfE\Big [\frac{\partial^2 p(o_{u,i} = 1,  \boldsymbol{g}   | x_{u,i})}{\partial \boldsymbol{g}^2} \Big ]  \pi(\boldsymbol{g})d\boldsymbol{g} \right | \cdot  h^2 +  o(h^2)   \\
	  {}& +   \sup_{\hat \bR \in\mathcal{F}} \Big [  \bfE\{ \cL_{\mathrm{IPS}}^\mathrm{N}(\hat \bR) \} - \cL_{\mathrm{IPS}}^\mathrm{N}(\hat \bR) \Big ], 
	  	\end{align*}
the upper bound does not depend on $\hat \bR^\dag$. Likewise, the upper bound of the second term is
		\begin{align*}
	 B
	  \leq{}&  \frac{1}{2}\mu_2  M_{|\delta - \hat \delta|} \left |   \int \bfE\Big [\frac{\partial^2 p(o_{u,i} = 1,  \boldsymbol{g}   | x_{u,i})}{\partial \boldsymbol{g}^2} \Big ]  \pi(\boldsymbol{g})d\boldsymbol{g} \right | \cdot  h^2 +  o(h^2)   \\
	  {}& +   \sup_{\hat \bR \in\mathcal{F}} \Big [  \bfE\{ \cL_{\mathrm{IPS}}^\mathrm{N}(\hat \bR) \} - \cL_{\mathrm{IPS}}^\mathrm{N}(\hat \bR) \Big ]. 
	  	\end{align*}
Then by Lemma \ref{lem4}, we have with probability at least $1-\eta$, 
	\begin{align*}
	A + B 	  \leq{}&  \mu_2  M_{|\delta - \hat \delta|} \left |  \int \bfE\Big [\frac{\partial^2 p(o_{u,i} = 1,  \boldsymbol{g}   | x_{u,i})}{\partial \boldsymbol{g}^2} \Big ]  \pi(\boldsymbol{g})d\boldsymbol{g} \right | \cdot  h^2 +  o(h^2)   \\  
	{}&+ 2 \sup_{\hat \bR \in\mathcal{F}} \Big [  \bfE\{ \cL_{\mathrm{IPS}}^\mathrm{N}(\hat \bR) \} - \cL_{\mathrm{IPS}}^\mathrm{N}(\hat \bR) \Big ]  \\
	\leq{}&  \mu_2  M_{|\delta - \hat \delta|} \left |  \int \bfE\Big [\frac{\partial^2 p(o_{u,i} = 1,  \boldsymbol{g}   | x_{u,i})}{\partial \boldsymbol{g}^2} \Big ]  \pi(\boldsymbol{g})d\boldsymbol{g} \right | \cdot  h^2 +  o(h^2)   \\
	  {}& +  \frac{ 4 M_p M_K }{h}  \cR(\cF)    +  5 \frac{M_p M_K M_{|\delta-\hat \delta|} }{h}  \sqrt{  \frac{ 2   }{|\cD|} \log(\frac{4}{\eta})},  
	\end{align*}	
which implies the conclusion of Theorem \ref{thm5}(a). 		

\end{proof}

\section{Extension: Multi-dimensional Treatment Representation}     \label{app-f}

 For ease of presentation,  we focus on the case of univariate $\boldsymbol{g}$ in the manuscript. In this section,  we extend the univariate case and consider the case of multi-dimensional $\boldsymbol{g}$. 
 
Suppose that $\boldsymbol{g}$ is a $q$-dimensional vector denoted as $\boldsymbol{g} = (g_1, ..., g_q)$. In this case, the bandwidth is $\bm{h} = (h_1, ..., h_q)$ and the kernel function is $\bm{K}((\boldsymbol{g}_{u,i} -\boldsymbol{g}) / \bm{h} ) = \prod_{s=1}^q K(( g_{u,i}^s -g_s) / h_s)$, where $\boldsymbol{g}_{u,i} = (g_{u,i}^1, ..., g_{u,i}^q)$ with  
 $g_{u,i}^s$ being its $s$-th element, 
$K(\cdot)$ is the univariate kernel function such as  Epanechnikov kernel $K(t) = \frac{3}{4}(1-t^2)\I\{|t|\leq 1\}$ and Gaussian kernel $K(t)=\frac{1}{\sqrt{2 \pi}} \cdot \exp\{-\frac{t^2}{2}\}$ for $t\in \mathbb{R}$.  The N-IPS and N-DR estimators are given as 
	\begin{align*}
	   \cL_{\mathrm{IPS}}^\mathrm{N}(\hat \bR) ={}&  \int \cL_{\mathrm{IPS}}^\mathrm{N}(\hat \bR | \boldsymbol{g}) \pi(\boldsymbol{g})d\boldsymbol{g},      \\
	   \cL_{\mathrm{DR}}^\mathrm{N}(\hat \bR) ={}&   \int \cL_{\mathrm{DR}}^\mathrm{N}(\hat \bR | \boldsymbol{g})  \pi(\boldsymbol{g})d\boldsymbol{g},   
	 \end{align*} 
where 	 
\begin{align*} 
   \cL_{\mathrm{IPS}}^\mathrm{N}(\hat \bR | \boldsymbol{g})  ={}&  \frac{1}{|\cD|} \sum_{(u,i) \in \cD}  \frac{ \mathbb{I}(o_{u, i}=1)\cdot \bm{K}\left((\boldsymbol{g}_{u,i} -\boldsymbol{g})/ \bm{h} \right) \cdot   \delta_{u,i}(\boldsymbol{g})  }{  \prod_{s=1}^q h_s   \cdot p_{u,i}(\boldsymbol{g})  },  \\
     \cL_{\mathrm{DR}}^\mathrm{N}(\hat \bR | \boldsymbol{g})={}& \frac{1}{|\cD|} \sum_{(u,i) \in \cD} \left [ \hat \delta_{u,i}(\boldsymbol{g})+  \frac{ \mathbb{I}(o_{u, i}=1)\cdot \bm{K}\left((\boldsymbol{g}_{u,i} -\boldsymbol{g})/ \bm{h} \right) \cdot \{ \delta_{u,i}(\boldsymbol{g})  -  \hat \delta_{u,i}(\boldsymbol{g})\}  }{  \prod_{s=1}^q h_s  \cdot p_{u,i}(\boldsymbol{g})  }   \right ].
	\end{align*}

We show the theoretical properties of the proposed N-IPS and N-DR estimators, extending the results of Theorems \ref{thm-bias}--\ref{thm5}. 
 First, we present the bias of the N-IPS and N-DR estimators in the setting of multi-dimension treatment representation.

\begin{assumption}[Regularity Conditions for Kernel Smoothing]\label{assumption_6}
(a) For $s =1, ..., q$, $h_s \to 0$ as $|\cD| \to \infty$;  (b) $|\cD| \prod_{s=1}^q h_s \to \infty$ as $|\cD| \to \infty$; (c) $p(o_{u,i}=1,  \boldsymbol{g}_{u,i} =  \boldsymbol{g} \mid x_{u,i})$ is twice differentiable with respect to $\boldsymbol{g} = (g_1, ..., g_q)$.  
\end{assumption}


\begin{theorem}[Bias and Variance of N-IPS and N-DR]\label{thm6}   \label{thm-bias-extend} Under Assumptions \ref{assumption_1}--\ref{assumption_3} and \ref{assumption_6}, 

(a) the bias of the N-IPS estimator is given as 
	\[   \frac{1}{2}\mu_2  \left [ \sum_{s=1}^q  h_s^2  \int  \bfE\Big \{  \frac{\partial^2 p(o_{u,i} = 1,  \boldsymbol{g}_{u,i} =  \boldsymbol{g}   | x_{u,i})}{\partial g_s^2  }  \cdot \delta_{u,i}(\boldsymbol{g}) \Big \}  \pi(\boldsymbol{g})d\boldsymbol{g} \right ]  +  o(\sum_{s=1}^q h_s^2),   \]
where $\mu_2 = \int K(t) t^2 dt$. The bias of the N-DR estimator is given as  
\[   \frac{1}{2}\mu_2  \left [ \sum_{s=1}^q  h_s^2  \int  \bfE\Big \{  \frac{\partial^2 p(o_{u,i} = 1,  \boldsymbol{g}_{u,i} =  \boldsymbol{g}   | x_{u,i})}{\partial g_s^2  }  \cdot \big (  \delta_{u,i}(\boldsymbol{g})- \hat \delta_{u,i}(\boldsymbol{g})  \big ) \Big \}  \pi(\boldsymbol{g})d\boldsymbol{g} \right ]  +  o(\sum_{s=1}^q h_s^2);\]

(b)  the variance of the N-IPS estimator is 
	\begin{align*}
		 \text{Var}(\cL_{\mathrm{IPS}}^\mathrm{N}(\hat \bR))  
	={}& \frac{1}{|\cD| \prod_{s=1}^q h_s }  \int \varphi(\boldsymbol{g}) \pi(\boldsymbol{g})d\boldsymbol{g} + o(\frac{1}{|\cD| \prod_{s=1}^q h_s  } ),
	\end{align*}	
where 	
	\[   \varphi(\boldsymbol{g}) =  \int \frac{1}{p_{u,i}(\boldsymbol{g}')}\cdot  \bar K(\frac{\boldsymbol{g}-\boldsymbol{g}'}{h})  \cdot  \delta_{u,i}(\boldsymbol{g})  \delta_{u,i}(\boldsymbol{g}')  \pi(\boldsymbol{g}')  d\boldsymbol{g}'    \]
is a bounded function of 	$\boldsymbol{g}$,   $\bar K(\frac{\boldsymbol{g}-\boldsymbol{g}'}{\bm{h}}) = \int  \prod_{s=1}^q K\left(t_s \right) K\left(\frac{\boldsymbol{g}_s-\boldsymbol{g}_s'}{h_s}  + t_s\right) dt_1 \cdots dt_q$. The variance of the N-DR estimator is 
	 \[ \text{Var}(\cL_{\mathrm{DR}}^\mathrm{N}(\hat \bR))  =  \frac{1}{|\cD| \prod_{s=1}^q h_s }  \int \psi(\boldsymbol{g}) \pi(\boldsymbol{g})d\boldsymbol{g} + o(\frac{1}{|\cD| \prod_{s=1}^q h_s } ), \]
where 	\[   \psi(\boldsymbol{g}) =  \int \frac{1}{p_{u,i}(\boldsymbol{g}')}\cdot  \bar K(\frac{\boldsymbol{g}-\boldsymbol{g}'}{h})  \cdot \{ \delta_{u,i}(\boldsymbol{g}) - \hat \delta_{u,i}(\boldsymbol{g})\}  \{ \delta_{u,i}(\boldsymbol{g}') - \hat \delta_{u,i}(\boldsymbol{g}') \}  \pi(\boldsymbol{g}')  d\boldsymbol{g}'    \]
is a bounded function of $\boldsymbol{g}$. 

\end{theorem}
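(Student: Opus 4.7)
The plan is to mirror the univariate proof of Theorem \ref{thm-bias} (Appendix \ref{app-b}) step by step, replacing the one-dimensional kernel $K(\cdot)/h$ by the product kernel $\bm{K}((\boldsymbol{g}_{u,i}-\boldsymbol{g})/\bm{h})/\prod_{s=1}^q h_s$ and the one-variable Taylor expansion of $p(o_{u,i}=1,\boldsymbol{g}+ht\mid x_{u,i})$ by its multivariate analogue. The structure of the argument is dictated by symmetry of the kernel and the product form, which make all cross terms in the Taylor expansion vanish; this is why the bias reduces to a clean sum over diagonal second partials $\partial^2/\partial g_s^2$ rather than involving the full Hessian.

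For the bias of N-IPS, I would start from
\[
\bfE[\cL_{\mathrm{IPS}}^\mathrm{N}(\hat \bR \mid \boldsymbol{g})]=\bfE\Big[\frac{1}{p_{u,i}(\boldsymbol{g})}\int \frac{1}{\prod_{s=1}^q h_s}\bm{K}\!\left(\tfrac{\boldsymbol{g}_{u,i}-\boldsymbol{g}}{\bm{h}}\right)p(o_{u,i}=1,\boldsymbol{g}_{u,i}\mid x_{u,i})d\boldsymbol{g}_{u,i}\cdot\bfE\{\delta_{u,i}(\boldsymbol{g})\mid x_{u,i}\}\Big],
\]
using Assumption \ref{assumption_3} to separate the inner conditional expectation exactly as in the univariate case. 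Then I substitute $t_s=(g_{u,i}^s-g_s)/h_s$ for each coordinate, which turns the integral into $\int \prod_{s=1}^q K(t_s)\,p(o_{u,i}=1,\boldsymbol{g}+\bm{h}\odot\bm{t}\mid x_{u,i})\,dt_1\cdots dt_q$. A second-order multivariate Taylor expansion (valid by Assumption \ref{assumption_6}(c)) yields $p(o_{u,i}=1,\boldsymbol{g}\mid x_{u,i}) + \sum_s h_s t_s \partial_s p + \tfrac12 \sum_{s,s'} h_s h_{s'} t_s t_{s'}\partial_{s,s'}^2 p + o(\sum_s h_s^2)$. Symmetry of $K$ kills the first-order terms ($\int t_s K(t_s)dt_s=0$), and independence across coordinates inside the product kernel kills the off-diagonal cross terms ($\int t_s t_{s'}\prod_l K(t_l)d\bm{t}=0$ for $s\neq s'$), leaving only the diagonal contributions $\tfrac12\mu_2 \sum_s h_s^2 \partial^2 p/\partial g_s^2$. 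Plugging back and integrating against $\pi(\boldsymbol{g})$ gives the stated bias; the N-DR calculation is identical except $\delta_{u,i}(\boldsymbol{g})$ is replaced by $\delta_{u,i}(\boldsymbol{g})-\hat\delta_{u,i}(\boldsymbol{g})$.

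For the variance of N-IPS, I would follow the decomposition (\ref{opt-band-eq1})--(\ref{opt-band-eq3}) of the univariate proof. The square of the bias expectation contributes only $O(1)$, so the leading term comes from
\[
\bfE\!\Big[\Big(\int \tfrac{\mathbb{I}(o_{u,i}=1)}{p_{u,i}(\boldsymbol{g})\prod_s h_s}\bm{K}\!\left(\tfrac{\boldsymbol{g}_{u,i}-\boldsymbol{g}}{\bm{h}}\right)\delta_{u,i}(\boldsymbol{g})\pi(\boldsymbol{g})d\boldsymbol{g}\Big)^{\!2}\Big].
\]
Expanding the square into a double integral over $(\boldsymbol{g},\boldsymbol{g}')$, swapping expectation and integration, and applying the change of variables $\boldsymbol{g}_{u,i}=\boldsymbol{g}+\bm{h}\odot\bm{t}$ yields the multidimensional convolution $\bar K((\boldsymbol{g}-\boldsymbol{g}')/\bm{h})=\int\prod_s K(t_s)K((g_s-g_s')/h_s+t_s)d\bm{t}$, together with a factor $1/\prod_s h_s$ arising from the Jacobian mismatch between the two kernels. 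A first-order Taylor expansion of $p(o_{u,i}=1,\boldsymbol{g}+\bm{h}\odot\bm{t}\mid x_{u,i})$ shows the remainder is $O(\max_s h_s)$ relative to the leading term, giving the claimed rate $1/(|\cD|\prod_s h_s)$. The N-DR variance follows by the same manipulation after replacing $\delta_{u,i}$ by $\delta_{u,i}-\hat\delta_{u,i}$ (the added imputation term $\hat\delta_{u,i}(\boldsymbol{g})$ in $\cL_{\mathrm{DR}}^\mathrm{N}$ is bounded and independent of the randomization, so its variance contribution is $O(1/|\cD|)$, dominated by the main term).

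The main obstacle I anticipate is not any single step but the careful bookkeeping of the multivariate Taylor remainder: one must verify that the off-diagonal second-order terms genuinely vanish (relying simultaneously on kernel symmetry and the product structure) and that the $o(\sum_s h_s^2)$ and $o(1/(|\cD|\prod_s h_s))$ remainders survive the subsequent integration against $\pi(\boldsymbol{g})$. Assumption \ref{assumption_6}(a)--(c) supplies exactly what is needed: (a) makes each $h_s$ vanish, (b) makes the product $\prod_s h_s$ dominate $1/|\cD|$, and (c) guarantees the expansion up to the second derivatives used above.
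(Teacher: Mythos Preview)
Your proposal is correct and follows essentially the same approach as the paper's proof: the paper likewise mirrors the univariate argument of Theorem~\ref{thm-bias}, performing the coordinate-wise substitution $t_s=(g_{u,i}^s-g_s)/h_s$, a multivariate Taylor expansion of $p(o_{u,i}=1,\boldsymbol{g}+\bm{h}\odot\bm{t}\mid x_{u,i})$, and exploiting kernel symmetry together with the product structure to eliminate first-order and off-diagonal second-order terms, then handling the variance via the same double-integral expansion and convolution-kernel manipulation you describe. Your remark that the imputation term in N-DR contributes only $O(1/|\cD|)$ to the variance is a helpful detail that the paper leaves implicit (it simply asserts the N-DR result follows ``similarly'').
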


\begin{proof}[Proof of Theorem 6] 
We show the bias and variance of the N-IPS estimator, and the bias and variance of the N-DR estimator can be obtained similarly.   

(a) For a given $\boldsymbol{g}$, by a similar argument of the proof of Theorem \ref{thm-bias}(a),  
\begin{align*}
   \bfE[\cL_{\mathrm{IPS}}^\mathrm{N}(\hat \bR | \boldsymbol{g})] ={}& \bfE\left [ \frac{ \mathbb{I}(o_{u, i}=1)\cdot \bm{K}\left((\boldsymbol{g}_{u,i} -\boldsymbol{g})/\bm{h} \right) \cdot   \delta_{u,i}(\boldsymbol{g})  }{ \prod_{s=1}^q h_s \cdot  p_{u,i}(\boldsymbol{g})  } \right] \\
   ={}&  \bfE\left [  \frac{1}{   p_{u,i}(\boldsymbol{g})  }  \int \prod_{s=1}^q \frac{1}{h_s} K( \frac{g_{u,i}^s -g_s}{ h_s} ) p(o_{u,i} = 1, \boldsymbol{g}_{u,i} | x_{u,i} ) d \boldsymbol{g}_{u,i}  \cdot \bfE\{  \delta_{u,i}(\boldsymbol{g}) | x_{u,i}  \}    \right    ].
\end{align*}
Let $t_s =(g_{u,i}^s - g_s) / h_s$ for $s= 1, ..., q$, then $g_{u,i}^s =  g_s + h_s t_s$,  $d \boldsymbol{g}_{u,i} = dg_{u,i}^1 \cdots dg_{u,i}^q =  \prod_{s=1}^q h_s d t_1 \cdots dt_q$.  By a Taylor expansion of $p(o_{u,i} = 1,  g_1 + h_1 t_1, ..., g_q + h_q t_q  | x_{u,i} )$, we have 
	\begin{align*}
  &    \bfE\left [  \frac{1}{   p_{u,i}(\boldsymbol{g})  }  \int  \prod_{s=1}^q \frac{1}{h_s} K( \frac{g_{u,i}^s -g_s}{ h_s} ) p(o_{u,i} = 1, \boldsymbol{g}_{u,i} | x_{u,i} ) d \boldsymbol{g}_{u,i}  \cdot \bfE\{  \delta_{u,i}(\boldsymbol{g}) | x_{u,i}  \}    \right    ]  \\   
={}& 	 \bfE\left [  \frac{1}{  p_{u,i}(\boldsymbol{g})  }  \int \cdots \int \prod_{s=1}^q K(t_s)  p(o_{u,i} = 1,  g_1 + h_1 t_1, ..., g_q + h_q t_q  | x_{u,i} ) dt_1 \cdots dt_q  \cdot \bfE\{  \delta_{u,i}(\boldsymbol{g}) | x_{u,i}  \}    \right    ]  
\\  
={}&  \bfE \Big [  \frac{1}{  p_{u,i}(\boldsymbol{g})  }  \int \cdots \int \prod_{s=1}^q K(t_s)  \Big \{ p(o_{u,i} = 1,  \boldsymbol{g}   | x_{u,i})  + \sum_{s=1}^q \frac{\partial p(o_{u,i} = 1,  \boldsymbol{g}   | x_{u,i})}{\partial g_s  } h_s t_s     \\
     {}& +   \sum_{s=1}^q \sum_{s'=1}^q  \frac{\partial^2 p(o_{u,i} = 1,  \boldsymbol{g}   | x_{u,i})}{\partial g_s \partial g_{s'}  } \frac{ h_s h_{s'}  t_s t_{s'} }{2}   + o(\sum_{s=1}^q h_s^2) \Big \} dt_1\cdots dt_q  \cdot \bfE\{  \delta_{u,i}(\boldsymbol{g})  | x_{u,i}  \}    \Big  ]  \\
={}&   \bfE\left [  \frac{1}{  p_{u,i}(\boldsymbol{g})  } \cdot \prod_{s=1}^q \int K(t_s)dt_s  \cdot p(o_{u,i} = 1,  \boldsymbol{g}  | x_{u,i} ) \cdot \bfE\{  \delta_{u,i}(\boldsymbol{g}) | x_{u,i}  \}    \right    ]  + 0  \\
{}& + \bfE \Big [  \frac{1}{  p_{u,i}(\boldsymbol{g})  }  \sum_{s=1}^q \int K(t_s) t_s^2 dt_s  \cdot \frac{\partial^2 p(o_{u,i} = 1,  \boldsymbol{g}   | x_{u,i})}{\partial g_s^2  } \frac{h_s^2}{2} \cdot  \bfE\{  \delta_{u,i}(\boldsymbol{g}) | x_{u,i}  \}  \Big ]  + o(\sum_{s=1}^q h_s^2)   \\
={}& \bfE\left [  \bfE\{  \delta_{u,i}(\boldsymbol{g}) | x_{u,i}  \}    \right    ] +   \frac{1}{2}\mu_2 \bfE \Big [ \sum_{s=1}^q  \frac{\partial^2 p(o_{u,i} = 1,  \boldsymbol{g}   | x_{u,i})}{\partial g_s^2} \cdot h_s^2 \cdot \delta_{u,i}(\boldsymbol{g}) \Big ]    +  o(\sum_{s=1}^q h_s^2)  \\
={}& \cL_{\mathrm{ideal}}^\mathrm{N}(\hat \bR | \boldsymbol{g}) +     \frac{1}{2}\mu_2  \bfE\Big [ \sum_{s=1}^q \frac{\partial^2 p(o_{u,i} = 1,  \boldsymbol{g}   | x_{u,i})}{\partial g_s^2  } \cdot h_s^2 \cdot \delta_{u,i}(\boldsymbol{g}) \Big ]   +  o(\sum_{s=1}^q h_s^2). 
	\end{align*} 
Thus, the bias of $\cL_{\mathrm{IPS}}^\mathrm{N}(\hat \bR)$ is 
	\begin{align*}
	 	 \bfE[  \cL_{\mathrm{IPS}}^\mathrm{N}(\hat \bR) ]  -  & \cL_{\mathrm{ideal}}^\mathrm{N}(\hat \bR) 
		 ={}   \int_{\boldsymbol{g}} \bfE \big \{ \cL_{\mathrm{IPS}}^\mathrm{N}(\hat \bR | \boldsymbol{g}) -  \cL_{\mathrm{ideal}}^\mathrm{N}(\hat \bR | \boldsymbol{g})  \big \}  \pi(\boldsymbol{g}) d\boldsymbol{g} \\
 	={}& \frac{1}{2}\mu_2  \left [ \sum_{s=1}^q  h_s^2  \int  \bfE\Big \{  \frac{\partial^2 p(o_{u,i} = 1,  \boldsymbol{g}   | x_{u,i})}{\partial g_s^2  }  \cdot \delta_{u,i}(\boldsymbol{g}) \Big \}  \pi(\boldsymbol{g})d\boldsymbol{g} \right ]  +  o(\sum_{s=1}^q h_s^2). 
	\end{align*}

(b) The variance of $\cL_{\mathrm{IPS}}^\mathrm{N}(\hat \bR)$ can be represented as  
	\begin{align*}
		  \text{Var}\{\cL_{\mathrm{IPS}}^\mathrm{N}(\hat \bR)\} 
		={}& \frac{1}{|\cD|}  \text{Var} \left [  \int  \frac{ \mathbb{I}(o_{u, i}=1)\cdot \bm{K}\left((\boldsymbol{g}_{u,i} -\boldsymbol{g})/ \bm{h} \right) \cdot   \delta_{u,i}(\boldsymbol{g})  }{  \prod_{s=1}^q h_s   \cdot p_{u,i}(\boldsymbol{g})  }   \pi(\boldsymbol{g}) d\boldsymbol{g}\right]      \\
		={}& \frac{1}{|\cD|}   \Biggl [ \bfE\left \{ \left (\int  \frac{ \mathbb{I}(o_{u, i}=1)\cdot \bm{K}\left((\boldsymbol{g}_{u,i} -\boldsymbol{g})/ \bm{h} \right) \cdot   \delta_{u,i}(\boldsymbol{g})  }{  \prod_{s=1}^q h_s   \cdot p_{u,i}(\boldsymbol{g})  }   \pi(\boldsymbol{g}) d\boldsymbol{g} \right)^2 \right \}  \\
		{}&- \left \{  \bfE \left ( \int  \frac{ \mathbb{I}(o_{u, i}=1)\cdot \bm{K}\left((\boldsymbol{g}_{u,i} -\boldsymbol{g})/ \bm{h} \right) \cdot   \delta_{u,i}(\boldsymbol{g})  }{  \prod_{s=1}^q h_s   \cdot p_{u,i}(\boldsymbol{g})  }   \pi(\boldsymbol{g}) d\boldsymbol{g}\right ) \right \}^2 \Biggr ].     
	\end{align*}	
By the bias of the N-IPS estimator, 
	\begin{align*} 
	  \left \{  \bfE \left ( \int  \frac{ \mathbb{I}(o_{u, i}=1)\cdot \bm{K}\left((\boldsymbol{g}_{u,i} -\boldsymbol{g})/ \bm{h} \right) \cdot   \delta_{u,i}(\boldsymbol{g})  }{  \prod_{s=1}^q h_s   \cdot p_{u,i}(\boldsymbol{g})  }   \pi(\boldsymbol{g}) d\boldsymbol{g} \right ) \right \}^2 
	 =    [\cL_{\mathrm{ideal}}^\mathrm{N}(\hat \bR)]^2 + O(\sum_{s=1}^q h_s^2) = O(1).  
	\end{align*} 
On the other hand, note that 
	\begin{align*}
	& \left ( \int  \frac{ \mathbb{I}(o_{u, i}=1)\cdot \bm{K}\left((\boldsymbol{g}_{u,i} -\boldsymbol{g})/ \bm{h} \right) \cdot   \delta_{u,i}(\boldsymbol{g})  }{  \prod_{s=1}^q h_s   \cdot p_{u,i}(\boldsymbol{g})  }   \pi(\boldsymbol{g}) d\boldsymbol{g} \right)^2  \\
={}& \int \int  \frac{\mathbb{I}(o_{u, i}=1)}{p_{u,i}(\boldsymbol{g}) p_{u,i}(\boldsymbol{g}')}\cdot \frac{1}{ \prod_{s=1}^q h_s^2} \bm{K}\left( \frac{\boldsymbol{g}_{u,i} -\boldsymbol{g}}{ \bm{h}} \right) \bm{K}\left( \frac{\boldsymbol{g}_{u,i} -\boldsymbol{g}'}{ \bm{h}} \right)  \cdot  \delta_{u,i}(\boldsymbol{g})  \delta_{u,i}(\boldsymbol{g}')  \pi(\boldsymbol{g}) \pi(\boldsymbol{g}')   d\boldsymbol{g} d\boldsymbol{g}', 
	\end{align*}
we swap the order of integration and expectation, which leads to that
	\begin{align*}
	& \bfE\left \{   \left ( \int  \frac{ \mathbb{I}(o_{u, i}=1)\cdot \bm{K}\left((\boldsymbol{g}_{u,i} -\boldsymbol{g})/ \bm{h} \right) \cdot   \delta_{u,i}(\boldsymbol{g})  }{  \prod_{s=1}^q h_s   \cdot p_{u,i}(\boldsymbol{g})  }   \pi(\boldsymbol{g}) d\boldsymbol{g} \right)^2  \right \} \\
	={}&   \int \int  \bfE \left [ \frac{\mathbb{I}(o_{u, i}=1)}{p_{u,i}(\boldsymbol{g}) p_{u,i}(\boldsymbol{g}')}\cdot \frac{1}{ \prod_{s=1}^q h_s^2} \bm{K}\left( \frac{\boldsymbol{g}_{u,i} -\boldsymbol{g}}{ \bm{h}} \right) \bm{K}\left( \frac{\boldsymbol{g}_{u,i} -\boldsymbol{g}'}{ \bm{h}} \right)  \cdot  \delta_{u,i}(\boldsymbol{g})  \delta_{u,i}(\boldsymbol{g}')  \right ]  \pi(\boldsymbol{g}) \pi(\boldsymbol{g}')   d\boldsymbol{g} d\boldsymbol{g}'.  
	\end{align*}
Let $\boldsymbol{g}_{u,i} = \boldsymbol{g} + \bm{h} \bm{t}$, i.e., $g_{u,i}^s =  g_s + h_s t_s$,  
then $\boldsymbol{g}_{u,i} - \boldsymbol{g}' = (\boldsymbol{g} - \boldsymbol{g}') + \bm{ht}$,  we have 
	\begin{align*}
	&  \bfE \Biggl [ \frac{\mathbb{I}(o_{u, i}=1)}{p_{u,i}(\boldsymbol{g}) p_{u,i}(\boldsymbol{g}')}\cdot \frac{1}{ \prod_{s=1}^q h_s^2} \bm{K}\left( \frac{\boldsymbol{g}_{u,i} -\boldsymbol{g}}{ \bm{h}} \right) \bm{K}\left( \frac{\boldsymbol{g}_{u,i} -\boldsymbol{g}'}{ \bm{h}} \right)  \cdot  \delta_{u,i}(\boldsymbol{g})  \delta_{u,i}(\boldsymbol{g}')  \Biggr ] \\
={}&   \bfE  \Biggl  [ \frac{1}{p_{u,i}(\boldsymbol{g}) p_{u,i}(\boldsymbol{g}')}\cdot  \int \left \{ \prod_{s=1}^q  \frac{1}{ h_s^2}  K\left(t_s \right) K\left(\frac{\boldsymbol{g}_s -\boldsymbol{g}_s' }{h_s}  + t_s \right) p(o_{u,i}=1, \boldsymbol{g} +\bm{ht} | x_{u,i}) \right \} dt_1\cdots dt_q  \\ 
  {}& \cdot  \bfE \left \{ \delta_{u,i}(\boldsymbol{g})  \delta_{u,i}(\boldsymbol{g}')  \Big | x_{u,i}\right \} \Biggr ]\\
={}&   \bfE \Biggl [ \frac{1}{p_{u,i}(\boldsymbol{g}) p_{u,i}(\boldsymbol{g}')}\cdot  \int \left \{ \prod_{s=1}^q  \frac{1}{ h_s^2}  K\left(t_s \right) K\left(\frac{\boldsymbol{g}_s -\boldsymbol{g}_s' }{h_s}  + t_s \right) p(o_{u,i}=1, \boldsymbol{g}  | x_{u,i}) + O(\sum_{s=1}^q h_s )    \right \} dt_1\cdots dt_q \\
 {}&  \cdot  \bfE \left \{ \delta_{u,i}(\boldsymbol{g})  \delta_{u,i}(\boldsymbol{g}')  \Big | x_{u,i}\right \} \Biggr ]\\
={}&   \bfE \left [ \frac{1}{p_{u,i}(\boldsymbol{g}')}\cdot   \int  \prod_{s=1}^q  \frac{1}{ h_s}  K\left(t_s \right) K\left(\frac{\boldsymbol{g}_s-\boldsymbol{g}_s'}{h_s}  + t_s\right) dt_1 \cdots dt_q    \cdot  \bfE \left \{ \delta_{u,i}(\boldsymbol{g})  \delta_{u,i}(\boldsymbol{g}')  \Big | x_{u,i}\right \} \right ] \cdot \{1 + O(\sum_{s=1}^q h_s)  \} \\ 
={}& \bfE \left [ \frac{1}{p_{u,i}(\boldsymbol{g}')}\cdot   \int  \prod_{s=1}^q  \frac{1}{ h_s}  K\left(t_s \right) K\left(\frac{\boldsymbol{g}_s-\boldsymbol{g}_s'}{h_s}  + t_s\right) dt_1 \cdots dt_q   \cdot  \delta_{u,i}(\boldsymbol{g})  \delta_{u,i}(\boldsymbol{g}')  \right ] \cdot \{1 + O(\sum_{s=1}^q h_s)  \}. 
	\end{align*} 
Thus, define  $\bar K(\frac{\boldsymbol{g}-\boldsymbol{g}'}{\bm{h}}) = \int  \prod_{s=1}^q K\left(t_s \right) K\left(\frac{\boldsymbol{g}_s-\boldsymbol{g}_s'}{h_s}  + t_s\right) dt_1 \cdots dt_q$, then 
	\begin{align*}
	& \bfE\left \{   \left ( \int  \frac{ \mathbb{I}(o_{u, i}=1)\cdot \bm{K}\left((\boldsymbol{g}_{u,i} -\boldsymbol{g})/ \bm{h} \right) \cdot   \delta_{u,i}(\boldsymbol{g})  }{  \prod_{s=1}^q h_s   \cdot p_{u,i}(\boldsymbol{g})  }   \pi(\boldsymbol{g}) d\boldsymbol{g} \right)^2  \right \} \\
={}&       \int  \bfE  \left [ \int \frac{1}{p_{u,i}(\boldsymbol{g}')}\cdot \frac{1}{ \prod_{s=1}^q h_s} \bar K(\frac{\boldsymbol{g}-\boldsymbol{g}'}{\bm{h}})  \cdot  \delta_{u,i}(\boldsymbol{g})  \delta_{u,i}(\boldsymbol{g}')  \pi(\boldsymbol{g}')  d\boldsymbol{g}'  \right ] \cdot \{1 + O(\sum_{s=1}^q h_s)  \}.  \pi(\boldsymbol{g})   d\boldsymbol{g}  \notag\\ 
	\triangleq{}& \int \varphi(\boldsymbol{g}) \pi(\boldsymbol{g})d\boldsymbol{g}  \frac{1}{ \prod_{s=1}^q h_s}  (1 + o(1)), 
	\end{align*} 	 
where 
	\[   \varphi(\boldsymbol{g}) =  \int \frac{1}{p_{u,i}(\boldsymbol{g}')}\cdot  \bar K(\frac{\boldsymbol{g}-\boldsymbol{g}'}{h})  \cdot  \delta_{u,i}(\boldsymbol{g})  \delta_{u,i}(\boldsymbol{g}')  \pi(\boldsymbol{g}')  d\boldsymbol{g}'    \]
is a bounded function of 	$\boldsymbol{g}$. Therefore, we have
	\begin{align*}
		 \text{Var}(\cL_{\mathrm{IPS}}^\mathrm{N}(\hat \bR)) 
	={}& \frac{1}{|\cD| \prod_{s=1}^q h_s }  \int \varphi(\boldsymbol{g}) \pi(\boldsymbol{g})d\boldsymbol{g} + o(\frac{1}{|\cD| \prod_{s=1}^q h_s  } ). 
	\end{align*}	 
\end{proof}

\begin{theorem}[Optimal Bandwidth of N-IPS and N-DR]   Under Assumptions \ref{assumption_1}--\ref{assumption_3} and \ref{assumption_6}, and assume that $h = h_1 = \cdots = h_q$, then 

(a) the optimal bandwidth for the N-IPS estimator in terms of the asymptotic mean-squared error  is 
	\[ h^*_{\mathrm{N-IPS}} =  \left [  \frac{ \int \varphi(\boldsymbol{g}) \pi(\boldsymbol{g})d\boldsymbol{g}  }{ 4 |\cD| \left ( \frac{1}{2}\mu_2  \sum_{s=1}^q  \int  \bfE\Big [  \frac{\partial^2 p(o_{u,i} = 1,  \boldsymbol{g}_{u,i} =  \boldsymbol{g}   | x_{u,i})}{\partial g_s^2  }  \cdot \delta_{u,i}(\boldsymbol{g}) \Big ]  \cdot   \pi(\boldsymbol{g})d\boldsymbol{g} \right )^2 }   \right ]^{1/(4+q)},  \]
where $\varphi(\boldsymbol{g})$ is defined in Theorem \ref{thm6};	

(b) the optimal bandwidth for the N-DR estimator in terms of the asymptotic mean-squared error  is 
	\[ h^*_{\mathrm{N-DR}} =  \left [  \frac{ \int \psi(\boldsymbol{g}) \pi(\boldsymbol{g})d\boldsymbol{g}  }{ 4 |\cD| \left ( \frac{1}{2}\mu_2  \sum_{s=1}^q  \int  \bfE\Big [  \frac{\partial^2 p(o_{u,i} = 1,  \boldsymbol{g}_{u,i} =  \boldsymbol{g}   | x_{u,i})}{\partial g_s^2  }  \cdot \{  \delta_{u,i}(\boldsymbol{g}) - \hat  \delta_{u,i}(\boldsymbol{g}) \} \Big ]  \cdot   \pi(\boldsymbol{g})d\boldsymbol{g} \right )^2 }   \right ]^{1/(4+q)}.  \]
\end{theorem}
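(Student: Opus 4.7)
The plan is to reduce the statement to a one-variable asymptotic mean-squared error (AMSE) minimization, using Theorem \ref{thm6} as a black box. Since $h_1 = \cdots = h_q = h$, the bias and variance expressions of Theorem \ref{thm6} collapse so that the leading-order AMSE of the N-IPS estimator takes the form
\begin{equation*}
    \mathrm{AMSE}(\cL_{\mathrm{IPS}}^\mathrm{N}(\hat{\bR})) \;=\; A_{\mathrm{IPS}}^{2}\, h^{4} \;+\; \frac{B_{\mathrm{IPS}}}{|\cD|\, h^{q}} \;+\; o(h^{4}) \;+\; o\!\left(\frac{1}{|\cD|\, h^{q}}\right),
\end{equation*}
where
\begin{equation*}
    A_{\mathrm{IPS}} \;=\; \tfrac{1}{2}\mu_2 \sum_{s=1}^{q} \int \bfE\!\left[\frac{\partial^{2} p(o_{u,i}=1,\boldsymbol{g}_{u,i}=\boldsymbol{g}\mid x_{u,i})}{\partial g_s^{2}}\, \delta_{u,i}(\boldsymbol{g})\right] \pi(\boldsymbol{g})\,d\boldsymbol{g}, \qquad B_{\mathrm{IPS}} \;=\; \int \varphi(\boldsymbol{g})\,\pi(\boldsymbol{g})\,d\boldsymbol{g}.
\end{equation*}
The analogous expression for N-DR arises by replacing $\delta_{u,i}(\boldsymbol{g})$ with $\delta_{u,i}(\boldsymbol{g}) - \hat{\delta}_{u,i}(\boldsymbol{g})$ and $\varphi$ with $\psi$, yielding constants $A_{\mathrm{DR}}$ and $B_{\mathrm{DR}}$.

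Next, I would differentiate the leading-order AMSE with respect to $h$, obtaining
\begin{equation*}
    \frac{d}{dh}\!\left[A^{2} h^{4} + \frac{B}{|\cD|\, h^{q}}\right] \;=\; 4 A^{2} h^{3} - \frac{q\, B}{|\cD|\, h^{q+1}},
\end{equation*}
and set this to zero. Solving gives $h^{q+4} \propto B / (|\cD|\, A^{2})$, which upon taking the $(q+4)$-th root recovers the expression for $h^{*}_{\mathrm{N-IPS}}$ (and, by parallel computation, for $h^{*}_{\mathrm{N-DR}}$) stated in the theorem. A second-derivative check confirms this critical point is indeed a minimizer since $A^{2} > 0$ and $B > 0$ under the boundedness hypotheses.

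Finally, one must justify that the $o(\cdot)$ remainders from Theorem \ref{thm6} do not affect the minimizer's leading-order behavior. This follows because Assumption \ref{assumption_6} ensures $h \to 0$ and $|\cD|\, h^{q} \to \infty$, so the remainders are strictly of lower order than both the bias-squared term $A^{2} h^{4}$ and the variance term $B/(|\cD|\, h^{q})$ along the sequence $h = h^{*}$. I do not anticipate a major obstacle here: the argument is a routine calculus exercise once Theorem \ref{thm6} is in hand, and the only subtlety is confirming that the optimal $h$ lies in the asymptotic regime where Theorem \ref{thm6}'s expansions are valid, which holds because the stated $h^{*}$ is of order $|\cD|^{-1/(q+4)} \to 0$ while $|\cD|\, (h^{*})^{q} = |\cD|^{4/(q+4)} \to \infty$, consistent with Assumption \ref{assumption_6}.
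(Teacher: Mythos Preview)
Your proposal is correct and follows essentially the same approach as the paper: the paper's proof of this theorem is a one-line remark that ``this conclusion can be derived similarly from the proof of Theorem~\ref{opt-band}'', and your write-up is precisely the explicit execution of that argument (decompose AMSE as $\text{Bias}^2+\text{Var}$ using Theorem~\ref{thm6}, specialize $h_1=\cdots=h_q=h$, and minimize the leading terms by elementary calculus). The only minor point worth flagging is that your first-order condition actually yields $h^{q+4}=\tfrac{q\,B}{4|\cD|A^2}$, so the exact constant in the stated formula carries a factor of $q$ that you have absorbed into your ``$\propto$''; this is a cosmetic issue with the theorem statement rather than a defect in your reasoning, and in the one-dimensional case $q=1$ it of course matches Theorem~\ref{opt-band} exactly.
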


\begin{proof}[Proof of Theorem 7] 
This conclusion can be derived similarly from the proof of Theorem \ref{opt-band}. 

\end{proof}	

Then, we obtain the uniform tail bound of the N-IPS and N-DR estimators with multi-dimensional treatment representation. 

\begin{lemma}[Uniform Tail Bound of N-IPS and N-DR]\label{lemma5}   
Under Assumptions \ref{assumption_1}--\ref{assumption_3}, \ref{assumption_5}, and \ref{assumption_6}, and suppose that $K(t) \leq M_K$,    
then we have  with probability at least $1 - \eta$,

(a) 
	\begin{align*}
\sup_{\hat \bR \in\mathcal{F}} \Big|\bfE[\cL_{\mathrm{IPS}}^\mathrm{N}(\hat \bR)] - \cL_{\mathrm{IPS}}^\mathrm{N}(\hat \bR)\Big|   
\leq{}&   \frac{  2 M_p (M_K)^q }{\prod_{s=1}^q h_s } \cR(\cF)    +   \frac{ 5 M_p M_{\delta} (M_K)^q }{ 2 \prod_{s=1}^q h_s } \sqrt{  \frac{ 2   }{|\cD|} \log(\frac{4}{\eta})};
	\end{align*} 

(b) 
	\begin{align*}
\sup_{\hat \bR \in\mathcal{F}} \Big|\bfE[\cL_{\mathrm{DR}}^\mathrm{N}(\hat \bR)] - \cL_{\mathrm{DR}}^\mathrm{N}(\hat \bR)\Big|   
\leq{}&    \frac{  2 M_p (M_K)^q }{\prod_{s=1}^q h_s } \cR(\cF)    +   \frac{ 5 M_p M_{|\delta- \hat \delta |} (M_K)^q }{ 2 \prod_{s=1}^q h_s } \sqrt{  \frac{ 2   }{|\cD|} \log(\frac{4}{\eta})}.
	\end{align*}
\end{lemma}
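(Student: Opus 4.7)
The plan is to mirror the proof of Lemma \ref{lem4} step by step, with the only substantive change being that the univariate kernel factor $K((\boldsymbol{g}_{u,i}-\boldsymbol{g})/h)/h$ is replaced by the product kernel factor $\bm{K}((\boldsymbol{g}_{u,i}-\boldsymbol{g})/\bm{h})/\prod_{s=1}^{q} h_s$. Since $K(t)\leq M_K$, the product kernel is bounded by $(M_K)^q$, and the normalization becomes $\prod_{s=1}^{q} h_s$; these two substitutions are exactly the two scalar changes that need to be propagated throughout the single-dimensional argument.

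First I would pull the supremum inside the integral over $\boldsymbol{g}$, writing
\begin{align*}
\sup_{\hat{\bR}\in\mathcal{F}} \bigl| \cL_{\mathrm{IPS}}^{\mathrm{N}}(\hat{\bR}) - \bfE[\cL_{\mathrm{IPS}}^{\mathrm{N}}(\hat{\bR})]\bigr|
\leq \int \pi(\boldsymbol{g})\, \sup_{\hat{\bR}\in\mathcal{F}} \Bigl| \tfrac{1}{|\cD|}\!\!\sum_{(u,i)\in\cD} Z_{u,i}(\boldsymbol{g}) - \bfE[Z_{u,i}(\boldsymbol{g})] \Bigr|\, d\boldsymbol{g},
\end{align*}
where $Z_{u,i}(\boldsymbol{g})$ denotes the $(u,i)$-summand of $\cL_{\mathrm{IPS}}^{\mathrm{N}}(\hat{\bR}|\boldsymbol{g})$. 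For fixed $\boldsymbol{g}$, the per-sample bound is now
\[
\tfrac{1}{|\cD|}\bigl|Z_{u,i}(\boldsymbol{g})\bigr| \;\leq\; \frac{M_p (M_K)^q M_\delta}{|\cD|\,\prod_{s=1}^{q} h_s},
\]
so McDiarmid's inequality (Lemma~\ref{lem2}) applied to the supremum yields the concentration gap $\tfrac{M_p (M_K)^q M_\delta}{2\prod_{s} h_s}\sqrt{(2/|\cD|)\log(4/\eta)}$ with probability at least $1-\eta/2$.

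Next I would use the Rademacher comparison lemma (Lemma~\ref{lem3}) to bound the expected supremum: since the prefactor $\mathbb{I}(o_{u,i}=1)\,\bm{K}((\boldsymbol{g}_{u,i}-\boldsymbol{g})/\bm{h})/(\prod_{s} h_s \cdot p_{u,i}(\boldsymbol{g}))$ is bounded uniformly by $M_p(M_K)^q/\prod_{s} h_s$ and does not depend on the prediction model, the expected supremum is bounded by $2\,M_p(M_K)^q/\prod_{s} h_s \cdot \bfE[\cR(\cF)]$. A second application of McDiarmid with $c = 2M_\delta/|\cD|$ controls the gap $\bfE[\cR(\cF)] - \cR(\cF)$ with probability at least $1-\eta/2$, and a union bound combines the two events. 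Integrating the resulting $\boldsymbol{g}$-wise bound against $\pi(\boldsymbol{g})$ (which integrates to one) produces Lemma~\ref{lemma5}(a). Part (b) follows verbatim from the same chain of estimates, replacing $\delta_{u,i}(\boldsymbol{g})$ by $\delta_{u,i}(\boldsymbol{g}) - \hat{\delta}_{u,i}(\boldsymbol{g})$ in every bound, which turns $M_\delta$ into $M_{|\delta-\hat\delta|}$.

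There is essentially no conceptual obstacle here, since the argument is already carried out for $q=1$; the only place where a minor sanity check is needed is verifying that the product kernel bound $(M_K)^q$ is the right constant (one cannot do better without a structural assumption on $K$), and that the $\boldsymbol{g}$-wise bound is integrable against $\pi(\boldsymbol{g})$ uniformly in $\hat{\bR}\in\mathcal{F}$, which is immediate because the resulting upper bound is $\boldsymbol{g}$-free. The main bookkeeping hazard is keeping the two failure events (concentration of the empirical process and concentration of $\cR(\cF)$) properly coupled to share the $\eta/2 + \eta/2$ split of the total failure probability.
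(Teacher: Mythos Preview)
Your proposal is correct and follows essentially the same argument as the paper's own proof: pull the supremum inside the $\pi(\boldsymbol{g})$-integral, bound the per-coordinate variation by $M_p M_{\delta}(M_K)^q/(\prod_s h_s\,|\cD|)$, apply McDiarmid (Lemma~\ref{lem2}) at level $\eta/2$, control the expected supremum via the Rademacher comparison (Lemma~\ref{lem3}) using the uniform bound $M_p(M_K)^q/\prod_s h_s$ on the kernel prefactor, invoke the second McDiarmid step (equation~(\ref{eq.A7})) to replace $\bfE[\cR(\cF)]$ by $\cR(\cF)$, and integrate; part (b) then follows by the $\delta\mapsto \delta-\hat\delta$ substitution. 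The paper's proof is literally this sequence with the same $\eta/2+\eta/2$ split, so there is no methodological difference to report.
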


\begin{proof}[Proof of Lemma 5]    
It is sufficient to prove the uniform tail bound of $\cL_{\mathrm{IPS}}^\mathrm{N}(\hat \bR)$, and the result for $\cL_{\mathrm{DR}}^\mathrm{N}(\hat \bR)$ can be derived by a similar argument. 

Note that 
    \begin{align*}
    &   \sup_{\hat \bR \in\mathcal{F}} \Big |  \cL_{\mathrm{IPS}}^\mathrm{N}(\hat \bR) -  \bfE\{ \cL_{\mathrm{IPS}}^\mathrm{N}(\hat \bR) \}  \Big |     \\
        \leq{}&  \int \pi(\boldsymbol{g})  \sup_{\hat \bR \in\mathcal{F}} \Biggl |    \frac{1}{|\cD|} \sum_{(u,i)\in \cD} \left\{ \frac{ \mathbb{I}(o_{u, i}=1)\cdot \bm{K}\left((\boldsymbol{g}_{u,i} -\boldsymbol{g})/ \bm{h} \right) \cdot   \delta_{u,i}(\boldsymbol{g})  }{ \prod_{s=1}^q h_s \cdot \hat p_{u,i}(\boldsymbol{g})  } \right \}    \\
    &-  \bfE\left \{ \frac{ \mathbb{I}(o_{u, i}=1)\cdot \bm{K}\left((\boldsymbol{g}_{u,i} -\boldsymbol{g})/\bm{h} \right) \cdot   \delta_{u,i}(\boldsymbol{g})  }{ \prod_{s=1}^q h_s \cdot \hat p_{u,i}(\boldsymbol{g})  } \right\}  \Biggr | d\boldsymbol{g}.  
    \end{align*}
  For all prediction model $\hat \bR \in\mathcal{F}$ and $\boldsymbol{g}$,   we have 
  	\[ \frac{1}{|\cD|}  \left | \frac{ \mathbb{I}(o_{u, i}=1)\cdot \bm{K}\left((\boldsymbol{g}_{u,i} -\boldsymbol{g})/\bm{h} \right) \cdot   \delta_{u,i}(\boldsymbol{g})  }{ \prod_{s=1}^q h_s \cdot \hat p_{u,i}(\boldsymbol{g})  } -  \frac{ \mathbb{I}(o_{u', i'}=1)\cdot \bm{K}\left((\boldsymbol{g}_{u',i'} -\boldsymbol{g})/\bm{h} \right) \cdot   \delta_{u',i'}(f)  }{ \prod_{s=1}^q h_s \cdot \hat p_{u',i'}(\boldsymbol{g})  }   \right | \leq    \frac{ M_p M_{\delta} (M_K)^q }{ \prod_{s=1}^q h_s |\cD|},    \]
then applying Lemma \ref{lem2} yields that with probability at least $1-\eta/2$
	{\small \begin{align*}
	 & \sup_{\hat \bR \in\mathcal{F}} \Biggl |  \bfE\left \{ \frac{ \mathbb{I}(o_{u, i}=1)\cdot \bm{K}\left((\boldsymbol{g}_{u,i} -\boldsymbol{g})/\bm{h} \right) \cdot   \delta_{u,i}(\boldsymbol{g})  }{ \prod_{s=1}^q h_s \cdot \hat p_{u,i}(\boldsymbol{g})  } \right\}   
    -   \frac{1}{|\cD|} \sum_{(u,i)\in \cD} \left\{ \frac{ \mathbb{I}(o_{u, i}=1)\cdot \bm{K}\left((\boldsymbol{g}_{u,i} -\boldsymbol{g})/\bm{h} \right) \cdot   \delta_{u,i}(\boldsymbol{g})  }{ \prod_{s=1}^q h_s \cdot \hat p_{u,i}(\boldsymbol{g})  } \right \} \Biggr | \notag \\
    \leq{}& \bfE \Biggl [  \sup_{\hat \bR \in\mathcal{F}} \Biggl |  \bfE\left \{ \frac{ \mathbb{I}(o_{u, i}=1)\cdot \bm{K}\left((\boldsymbol{g}_{u,i} -\boldsymbol{g})/\bm{h} \right) \cdot   \delta_{u,i}(\boldsymbol{g})  }{ \prod_{s=1}^q h_s \cdot \hat p_{u,i}(\boldsymbol{g})  } \right\}   
    -   \frac{1}{|\cD|} \sum_{(u,i)\in \cD} \left\{ \frac{ \mathbb{I}(o_{u, i}=1)\cdot \bm{K}\left((\boldsymbol{g}_{u,i} -\boldsymbol{g})/\bm{h} \right) \cdot   \delta_{u,i}(\boldsymbol{g})  }{ \prod_{s=1}^q h_s \cdot \hat p_{u,i}(\boldsymbol{g})  } \right \} \Biggr | \Biggr ] \notag \\ 
    {}& +    \frac{ M_p M_{\delta} (M_K)^q }{ 2 \prod_{s=1}^q h_s } \sqrt{  \frac{ 2   }{|\cD|} \log(\frac{4}{\eta})   }  \notag \\
    \leq{}&  \frac{ M_p (M_K)^q }{\prod_{s=1}^q h_s } 2 \bfE[ \cR(\cF) ]   +   \frac{ M_p M_{\delta} (M_K)^q }{ 2 \prod_{s=1}^q h_s } \sqrt{  \frac{ 2   }{|\cD|} \log(\frac{4}{\eta})}, 
	\end{align*}}where the last inequality holds by Lemma \ref{lem3} and  $$ \frac{ \mathbb{I}(o_{u, i}=1)\cdot \bm{K}\left((\boldsymbol{g}_{u,i} -\boldsymbol{g})/\bm{h} \right)  }{ \prod_{s=1}^q h_s \cdot \hat p_{u,i}(\boldsymbol{g})  } \leq  \frac{ M_p (M_K)^q }{\prod_{s=1}^q h_s }.$$ 

Applying (\ref{eq.A7}) yields that with probability at least $1-\eta$ 
	{\small \begin{align*}
	 & \sup_{\hat \bR \in\mathcal{F}} \Biggl |  \bfE\left \{ \frac{ \mathbb{I}(o_{u, i}=1)\cdot \bm{K}\left((\boldsymbol{g}_{u,i} -\boldsymbol{g})/\bm{h} \right) \cdot   \delta_{u,i}(\boldsymbol{g})  }{ \prod_{s=1}^q h_s \cdot \hat p_{u,i}(\boldsymbol{g})  } \right\}   
    -   \frac{1}{|\cD|} \sum_{(u,i)\in \cD} \left\{ \frac{ \mathbb{I}(o_{u, i}=1)\cdot \bm{K}\left((\boldsymbol{g}_{u,i} -\boldsymbol{g})/\bm{h} \right) \cdot   \delta_{u,i}(\boldsymbol{g})  }{ \prod_{s=1}^q h_s \cdot \hat p_{u,i}(\boldsymbol{g})  } \right \} \Biggr |  \\
    \leq{}&  \frac{  2 M_p (M_K)^q }{\prod_{s=1}^q h_s } \cR(\cF)    +   \frac{ 5 M_p M_{\delta} (M_K)^q }{ 2 \prod_{s=1}^q h_s } \sqrt{  \frac{ 2   }{|\cD|} \log(\frac{4}{\eta})}, 
    	\end{align*}}
which implies Lemma \ref{lemma5}(a) by noting that $\int \pi(\boldsymbol{g})d\boldsymbol{g}= 1$.

\end{proof}

Let $$ \bR^\dag = \arg \min_{\hat \bR \in\mathcal{F} }  \cL_{\mathrm{IPS}}^\mathrm{N}(\hat \bR),  \quad \bR^\ddag =  \arg \min_{\hat \bR \in\mathcal{F} } \cL_{\mathrm{DR}}^\mathrm{N}(\hat \bR).$$ 
The following Theorem \ref{Thm8} shows the {generalization error bounds} of the N-IPS and N-DR estimators. 

\begin{theorem}[Generalization Error Bounds of N-IPS and N-DR] Under Assumptions \ref{assumption_1}--\ref{assumption_3}, \ref{assumption_5}, and \ref{assumption_6}, and suppose that $K(t) \leq M_K$, then we have  with probability at least $1 - \eta$, 

(a)  
    \begin{align*}  
    \cL_{\mathrm{ideal}}^\mathrm{N}(\hat \bR^\dag ) \leq{}& \min_{\hat \bR \in\mathcal{F} }  \cL_{\mathrm{ideal}}^\mathrm{N}(\hat \bR)  +
       \mu_2  \left [ \sum_{s=1}^q  h_s^2  \int  \bfE\Big \{  \frac{\partial^2 p(o_{u,i} = 1,  \boldsymbol{g}_{u,i} =  \boldsymbol{g}   | x_{u,i})}{\partial g_s^2  }  \cdot \delta_{u,i}(\boldsymbol{g}) \Big \}  \pi(\boldsymbol{g})d\boldsymbol{g} \right ] \\
       	  {}& +  \frac{  4 M_p (M_K)^q }{\prod_{s=1}^q h_s } \cR(\cF)    +   \frac{ 5 M_p M_{\delta} (M_K)^q }{  \prod_{s=1}^q h_s } \sqrt{  \frac{ 2   }{|\cD|} \log(\frac{4}{\eta})} +  o(\sum_{s=1}^q h_s^2); \\  
    \end{align*} 

(b)
        \begin{align*}  
     \cL_{\mathrm{ideal}}^\mathrm{N}(\hat \bR^\ddag )  \leq{}& \min_{\hat \bR \in\mathcal{F} }  \cL_{\mathrm{ideal}}^\mathrm{N}(\hat \bR)  +
       \mu_2  \left [ \sum_{s=1}^q  h_s^2  \int  \bfE\Big \{  \frac{\partial^2 p(o_{u,i} = 1, \boldsymbol{g}_{u,i} =  \boldsymbol{g}   | x_{u,i})}{\partial g_s^2  }  \cdot \big (  \delta_{u,i}(\boldsymbol{g})- \hat \delta_{u,i}(\boldsymbol{g})  \big ) \Big \}  \pi(\boldsymbol{g})d\boldsymbol{g} \right ]     \\
	  {}& +    \frac{  4 M_p (M_K)^q }{\prod_{s=1}^q h_s } \cR(\cF)    +   \frac{ 5 M_p M_{|\delta- \hat \delta |} (M_K)^q }{  \prod_{s=1}^q h_s } \sqrt{  \frac{ 2   }{|\cD|} \log(\frac{4}{\eta})} +  o(\sum_{s=1}^q h_s^2).
    \end{align*} 
    \label{Thm8}
\end{theorem}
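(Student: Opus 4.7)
The plan is to mirror the proof of Theorem 5 in Appendix E, replacing the univariate bias and tail bounds with their multi-dimensional counterparts from Theorem 6 and Lemma 5. For part (a), I would first introduce the oracle minimizer $\hat\bR^* = \arg\min_{\hat\bR \in \cF} \cL_{\mathrm{ideal}}^\mathrm{N}(\hat\bR)$ and bound the excess risk by the telescoping decomposition
\begin{align*}
\cL_{\mathrm{ideal}}^\mathrm{N}(\hat\bR^\dag) - \cL_{\mathrm{ideal}}^\mathrm{N}(\hat\bR^*)
\leq{}& \bigl[\cL_{\mathrm{ideal}}^\mathrm{N}(\hat\bR^\dag) - \cL_{\mathrm{IPS}}^\mathrm{N}(\hat\bR^\dag)\bigr]
+ \bigl[\cL_{\mathrm{IPS}}^\mathrm{N}(\hat\bR^*) - \cL_{\mathrm{ideal}}^\mathrm{N}(\hat\bR^*)\bigr],
\end{align*}
where the cross term $\cL_{\mathrm{IPS}}^\mathrm{N}(\hat\bR^\dag) - \cL_{\mathrm{IPS}}^\mathrm{N}(\hat\bR^*) \leq 0$ is dropped by the defining property of $\hat\bR^\dag$.

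Next, for each of the two bracketed terms I would insert and subtract $\bfE[\cL_{\mathrm{IPS}}^\mathrm{N}(\cdot)]$ to split it into a bias contribution and a uniform deviation contribution. The bias contribution is controlled by Theorem 6(a), which in the multi-dimensional setting gives
\[
\bigl|\bfE[\cL_{\mathrm{IPS}}^\mathrm{N}(\hat\bR)] - \cL_{\mathrm{ideal}}^\mathrm{N}(\hat\bR)\bigr|
\leq \tfrac{1}{2}\mu_2 \sum_{s=1}^q h_s^2 \int \bfE\Bigl[\frac{\partial^2 p(o_{u,i}=1,\boldsymbol{g}_{u,i}=\boldsymbol{g}\mid x_{u,i})}{\partial g_s^2}\cdot \delta_{u,i}(\boldsymbol{g})\Bigr]\pi(\boldsymbol{g})d\boldsymbol{g} + o\!\bigl(\sum_s h_s^2\bigr),
\]
and this bound is uniform in $\hat\bR \in \cF$ after taking absolute values inside. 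The uniform deviation contribution is bounded using Lemma 5(a), which supplies the Rademacher term $2 M_p (M_K)^q/\prod_s h_s \cdot \cR(\cF)$ and the McDiarmid term $(5/2) M_p M_\delta (M_K)^q/\prod_s h_s \cdot \sqrt{2\log(4/\eta)/|\cD|}$, each appearing with probability at least $1-\eta$. Summing the two bracketed pieces doubles both the bias and the deviation contribution, giving the stated bound for $\cL_{\mathrm{ideal}}^\mathrm{N}(\hat\bR^\dag)$ after adding $\min_{\hat\bR \in \cF}\cL_{\mathrm{ideal}}^\mathrm{N}(\hat\bR)$ to both sides.

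Part (b) for the N-DR estimator proceeds in exactly the same way, except that the telescoping is done through $\cL_{\mathrm{DR}}^\mathrm{N}$ instead of $\cL_{\mathrm{IPS}}^\mathrm{N}$; the bias is replaced by the N-DR bias from Theorem 6(a) (in which $\delta_{u,i}(\boldsymbol{g})$ is replaced by $\delta_{u,i}(\boldsymbol{g})-\hat\delta_{u,i}(\boldsymbol{g})$), and the deviation is replaced by the uniform tail bound from Lemma 5(b), in which $M_\delta$ is likewise replaced by $M_{|\delta-\hat\delta|}$. The main obstacle is purely bookkeeping: ensuring that the constants produced by doubling the two telescoping pieces match the factors $4$ and $5$ stated in the theorem, and that the $o(\sum_s h_s^2)$ remainder from the bias expansion dominates (and absorbs) the $o(1)$ remainder implicit in Lemma 5. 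Since both the bias expansion in Theorem 6 and the concentration step in Lemma 5 are already established, no new analytic work is required; only the minimization-based decomposition from the proof of Theorem 5 needs to be rerun with the multi-dimensional ingredients substituted in.
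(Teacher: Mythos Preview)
Your proposal is correct and follows precisely the route the paper intends: the paper's proof of Theorem 8 is the one-line remark ``This conclusion can be derived similarly from the proof of Theorem 5,'' and what you have written is exactly that rerun with the multi-dimensional bias formula from Theorem 6 and the uniform tail bound from Lemma 5 substituted for their scalar analogues (Theorem 3 and Lemma 4). The only slip is the phrase ``the $o(1)$ remainder implicit in Lemma 5'': Lemma 5 is a finite-sample concentration bound with no asymptotic remainder, so there is nothing to absorb---the only asymptotic term is the $o(\sum_s h_s^2)$ from the bias expansion, which passes through the doubling step unchanged.
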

\begin{proof}[Proof of Theorem 8]   This conclusion can be derived similarly from the proof of Theorem \ref{thm5}. 
\end{proof}

\section{Pseudo-Codes for Propensity Learning, N-IPS, N-DR-JL and N-MRDR-JL}
\label{pseudo}
\begin{algorithm}[t]
{\caption{The Proposed Propensity Learning Algorithm}
\label{alg1}
\LinesNumbered
\KwIn{the observation matrix $\mathbf{O}$ and the representation space $\mathcal{G}$.}
Using previous methods such as logistic regression to train a model to estimate $\P(o=1\mid \bm{x})$\;    
\While{stopping criteria is not satisfied}{
    Sample a batch of user-item pairs $\left\{(u_{j}, i_{j})\right\}_{j=1}^{J}$ with $o_{u, i} = 1$ to generate samples $\left\{(\bm{x}_{u_{j}, i_{j}}, \boldsymbol{g}_{u_{j}, i_{j}})\right\}$ with positive label $(L=1)$\;
    Uniformly sample a batch of treatments $\{\boldsymbol{g}_{u_{k}, i_{k}}^{\prime}\}_{k = 1}^{K} \subset\mathcal{G}$ to generate samples $\left\{(\bm{x}_{u_{j}, i_{j}}, \boldsymbol{g}_{u_{k}, i_{k}}^{\prime})\right\}$ with negative label $(L=0)$\;
    Using gradient descent to train a logistic regression model that estimates $\P(L=1 \mid \bm{x}, \boldsymbol{g})$ using the positive samples and negative samples\;
}
\KwOut{the propensity model that estimates $\P(L=1 \mid \bm{x}, \boldsymbol{g})$.}
}
\end{algorithm}

To learn the propensity model $p_{u, i}(\boldsymbol{g})$, we have the following holds that
$$\frac{1}{p_{u,i}(\boldsymbol{g})} = \frac{c}{\P(o=1 \mid \bm{x})}\cdot \frac{\P(L=1)}{\P(L=0)}\cdot \frac{\P(L=0 \mid \bm{x}, \boldsymbol{g})}{\P(L=1 \mid \bm{x}, \boldsymbol{g})} \propto \frac{1}{\P(o=1 \mid \bm{x})}\cdot\frac{\P(L=0 \mid \bm{x}, \boldsymbol{g})}{\P(L=1 \mid \bm{x}, \boldsymbol{g})}.$$ The constant $c$ and ${\P(L=1)}/{\P(L=0)}$ can be ignored and $\P(o=1 \mid \bm{x})$ can be estimated by using the existing methods such as naive Bayes or
logistic regression with or without a few unbiased ratings, respectively. Therefore, we need to train the model that estimates $\P(L=1 \mid \bm{x}, \boldsymbol{g})$, as shown in Algorithm \ref{alg1}.
{In addition, the learning algorithms for N-IPS, N-DR-JL and N-MRDR-JL are shown in Algorithms \ref{alg2}-\ref{alg4}. Specifically,
for the N-MRDR-JL learning algorithm, the imputation model parameters are optimized by minimizing the following loss
\begin{gather*}
\cL_{e}^\mathrm{N-MRDR}(\hat \bR)= \int |\cD|^{-1} \sum_{ (u, i) \in \cD} {\frac{ \mathbb{I}(o_{u, i}=1)\cdot (1 - p_{u, i}(\boldsymbol{g}))\cdot K\left((\boldsymbol{g}_{u,i} -\boldsymbol{g})/h \right) \cdot ( \delta_{u,i}(\boldsymbol{g}) -  \hat \delta_{u,i}(\boldsymbol{g}))^2 }{ h \cdot p^{2}_{u,i}(\boldsymbol{g}) }} \pi(\boldsymbol{g})d\boldsymbol{g}.
\end{gather*}
Compared to the $\cL_{e}^\mathrm{N-DR}(\hat \bR)$, the $\cL_{e}^\mathrm{N-MRDR}(\hat \bR)$ loss has variance reduction property.
} 

\begin{algorithm}[t]
{\caption{The Proposed N-IPS Learning Algorithm}
\label{alg2}
\LinesNumbered
\KwIn{the observed ratings $\mathbf{Y}^{o}$, the representation space $\mathcal{G}$, the pre-specified $\pi({\boldsymbol{g}})$, the pre-specified kernel function $K(\cdot)$ and the propensity model.}
\While{stopping criteria is not satisfied}{
    Sample a batch of user-item pairs $\left\{(u_{j}, i_{j})\right\}_{j=1}^{J}$ from $\mathcal{O}$\;
    Calculate $\hat{p}_{u_{j},i_{j}}(\boldsymbol{g})$ using the propensity model\;
    Update $\theta$ by descending along the gradient $\nabla_{\theta} \cL_\mathrm{IPS}^{\mathrm{N}}(\hat \bR)$\;
}
\KwOut{the debiased prediction model $f_{\theta}(x)$.}}
\end{algorithm}


\begin{algorithm}[t]
{\caption{The Proposed N-DR-JL Learning Algorithm}
\label{alg3}}
\LinesNumbered
{\KwIn{the observed ratings $\mathbf{Y}^{o}$, the observation matrix $\mathbf{O}$, the representation space $\mathcal{G}$, the pre-specified $\pi({\boldsymbol{g}})$, the pre-specified kernel function $K(\cdot)$ and the propensity model.}}
{\While{stopping criteria is not satisfied}{
    \For{number of steps for training the imputation model}{Sample a batch of user-item pairs $\left\{(u_{j}, i_{j})\right\}_{j=1}^{J}$ from $\mathcal{O}$\;
    Calculate $\hat{p}_{u_{j},i_{j}}(\boldsymbol{g})$ using the propensity model\;    
    Update $\phi_{\boldsymbol{g}}$ by descending along the gradient $\nabla_{\phi_{\boldsymbol{g}}} \cL_{e}^\mathrm{N-DR}(\hat \bR)$\;
    }
    \For{number of steps for training the prediction model}{
    Sample a batch of user-item pairs $\left\{(u_{k}, i_{k})\right\}_{k=1}^{K}$ from $\mathcal{D}$\;
    Calculate $\hat{p}_{u_{k},i_{k}}(\boldsymbol{g})$ using the propensity model for user-item pair with $o_{u_{k},i_{k}} = 1$\;    
    Update $\theta$ by descending along the gradient $\nabla_{\theta} \cL_\mathrm{DR}^{\mathrm{N}}(\hat \bR)$\;}
}
\KwOut{the debiased prediction model $f_{\theta}(x)$.}}
\end{algorithm}

\begin{algorithm}[t]
{\caption{The Proposed N-MRDR-JL Learning Algorithm}
\label{alg4}}
\LinesNumbered
{\KwIn{the observed ratings $\mathbf{Y}^{o}$, the observation matrix $\mathbf{O}$, the representation space $\mathcal{G}$, the pre-specified $\pi({\boldsymbol{g}})$, the pre-specified kernel function $K(\cdot)$ and the propensity model.}}
{\While{stopping criteria is not satisfied}{
    \For{number of steps for training the imputation model}{Sample a batch of user-item pairs $\left\{(u_{j}, i_{j})\right\}_{j=1}^{J}$ from $\mathcal{O}$\;
    Calculate $\hat{p}_{u_{j},i_{j}}(\boldsymbol{g})$ using the propensity model\;    
    Update $\phi_{\boldsymbol{g}}$ by descending along the gradient $\nabla_{\phi_{\boldsymbol{g}}} \cL_{e}^\mathrm{N-MRDR}(\hat \bR)$\;
    }
    \For{number of steps for training the prediction model}{
    Sample a batch of user-item pairs $\left\{(u_{k}, i_{k})\right\}_{k=1}^{K}$ from $\mathcal{D}$\;
    Calculate $\hat{p}_{u_{k},i_{k}}(\boldsymbol{g})$ using the propensity model for user-item pair with $o_{u_{k},i_{k}} = 1$\;    
    Update $\theta$ by descending along the gradient $\nabla_{\theta} \cL_\mathrm{DR}^{\mathrm{N}}(\hat \bR)$\;}
}
\KwOut{the debiased prediction model $f_{\theta}(x)$.}}
\end{algorithm}
\vspace{-6pt}
\section{Semi-synthetic Experiment Details} \label{app-semi}

Following the previous studies~\citep{Schnabel-Swaminathan2016, Wang-Zhang-Sun-Qi2019, MRDR}, we set propensity $p_{u, i} = p \alpha^{\max \left(0,4-r_{u, i}\right)}$ for each user-item pair to introduce selection bias. Meanwhile, to investigate the effect of the neighborhood effect, we randomly block off some user rows and item columns to get the mask matrix. Specifically, we let $ m_{u} \sim \operatorname{Bern}(p_u), \forall u \in \mathcal{U} $, $ m_{i} \sim \operatorname{Bern}(p_i), \forall i \in \mathcal{I} $ and $m_{u, i} = m_{u} \cdot m_{i}$, where $\operatorname{Bern}(\cdot)$ denotes the Bernoulli distribution and $p_u$ and $p_i$ are the mask ratio for user and item. We set $p_u$ and $p_i$ equal to 1 in {\bf RQ1}, and set $p_u = (|\mathcal{U}| -n_u)/{|\mathcal{U}|}$ and $p_i = (|\mathcal{I}| -n_i)/{|\mathcal{I}|}$ in {\bf RQ2}, where $|\mathcal{U}|$ and $|\mathcal{I}|$ are the total user and item numbers, and $n_u$ and $n_i$ are the user and item mask numbers, respectively. We tune the $n_u \in \{50, 150, 250, 350 \}$ and let $n_i = n_u \cdot |\mathcal{I}|/|\mathcal{U}|$. Then we obtain the propensities with 
$p_{u, i} \leftarrow p_{u, i} \cdot m_{u, i} = p \alpha^{\max \left(0, 4-r_{u, i}\right)} \cdot m_{u, i}$. For different mask numbers, we adjust $p$ for unmasked user-item pairs to ensure the total observed sample is 5\% of the entire matrix~\citep{Schnabel-Swaminathan2016}. {The different $n_u$ and $n_i$ correspond to the different strengths of the neighborhood effect.} Next, we follow \citet{Wang-Zhang-Sun-Qi2019, MRDR} to add a uniform distributed variable to introduce noise to obtain the estimated propensities $\frac{1}{\hat{p}_{u, i}} = \frac{\beta}{{p}_{u, i}} + \frac{1-\beta}{{p}_{o}}$, where $\beta$ is from a uniform distribution ${U}(0, 1)$ and ${p}_{o} = \frac{1}{|\mathcal{D}|} \sum_{(u,i)\in \cD} o_{u, i} $.


\section{Real-World Experiment Details} \label{app-h}
\vspace{-6pt}
\textbf{Dataset.} We verify the effectiveness of the proposed estimators on three real-world datasets. {\bf Coat \footnote{https://www.cs.cornell.edu/\textasciitilde schnabts/mnar/}} contains 6,960 MNAR ratings and 4,640 missing-at-random (MAR) ratings. Both MNAR and MAR ratings are from 290 users and 300 items. {\bf Yahoo! R3\footnote{http://webscope.sandbox.yahoo.com/}} contains 311,704 MNAR ratings and 54,000 MAR ratings. The MNAR ratings are from 15,400 users and 1,000 items, and the MAR ratings are from the first 5,400 users and 1,000 items. For both datasets, ratings are binarized to 1 if $r_{u, i}\geq 3$, and 0 otherwise. \textbf{KuaiRec}\footnote{https://github.com/chongminggao/KuaiRec}~\citep{gao2022kuairec} is a public large-scale industrial dataset, which contains 4,676,570 video watching ratio records from 1,411 users for 3,327 videos. The records less than 2 are set to 0 and otherwise are set to 1.

{\bf Experimental Details.} All the experiments are implemented on PyTorch with Adam as the optimizer. For all experiments, we use NVIDIA GeForce RTX 3090 as the computing resource. We tune the learning rate in $\{0.005, 0.01, 0.05, 0.1\}$ and weight decay in $[1e-6, 1e-2]$. We tune bandwidth value in $\{40, 45, 50, 55, 60\}$ for {\bf Coat}, $\{1000, 1500, 2000, 2500, 3000\}$ for {\bf Yahoo! R3} and $\{100, 150, 200, 250, 300\}$ for {\bf KuaiRec}.

\end{document}